\def\eqref#1{equation~\ref{#1}}
\def\1{\bm{1}}
\DeclareMathAlphabet{\mathsfit}{\encodingdefault}{\sfdefault}{m}{sl}
\SetMathAlphabet{\mathsfit}{bold}{\encodingdefault}{\sfdefault}{bx}{n}
\pgfplotsset{compat=newest}
\newcommand{\relu}{{\rm ReLU}}
\newcommand{\tr}{{\rm tr}}
\newcommand{\erf}{{\rm erf}}
\newcommand{\Toeplitz}{{\rm Toeplitz}}
\newtheorem{assumption}{Assumption}
\newtheorem{definition}{Definition}
\newtheorem{corollary}{Corollary}
\newtheorem{theorem}{Theorem}
\newtheorem{lemma}{Lemma}
\newtheorem{proof}{Proof}
\newtheorem{remark}{Remark}
\newcommand{\xddots}{%
  \raise 4pt \hbox {.}
  \mkern 6mu
  \raise 1pt \hbox {.}
  \mkern 6mu
  \raise -2pt \hbox {.}
}
\title{Random matrices in service of ML footprint: ternary random features with no performance loss}
\author{Hafiz Tiomoko Ali \\ 
Huawei Noah's Ark Lab (London)\\
\texttt{hafiz.tiomoko.ali@huawei.com} \\
\And
Zhenyu Liao \\
Huazhong University of Science \& Technology, China \\
\texttt{zhenyu\_liao@hust.edu.cn} \\
\AND
Romain Couillet \\
University Grenoble Alpes, Inria, CNRS, Grenoble INP, LIG, 38000 Grenoble, France\\
\texttt{romain.couillet@univ-grenoble-alpes.fr}
}
\begin{document}
\maketitle
\begin{abstract}
In this article, we investigate the spectral behavior of random features kernel matrices of the type ${\bf K} = \mathbb{E}_{{\bf w}} [\sigma({\bf w}^{\sf T}{\bf x}_i)\sigma({\bf w}^{\sf T}{\bf x}_j)]_{i,j=1}^n$, with nonlinear function $\sigma(\cdot)$, data ${\bf x}_1, \ldots, {\bf x}_n \in \mathbb{R}^p$, and random projection vector ${\bf w} \in \mathbb{R}^p$ having i.i.d.\@ entries. In a high-dimensional setting where the number of data $n$ and their dimension $p$ are both large and comparable, we show, under a Gaussian mixture model for the data, that the eigenspectrum of ${\bf K}$ is \emph{independent} of the distribution of the i.i.d.~(zero-mean and unit-variance) entries of ${\bf w}$, and \emph{only} depends on $\sigma(\cdot)$ via its (generalized) Gaussian moments $\mathbb{E}_{z\sim \mathcal N(0,1)}[\sigma'(z)]$ and $\mathbb{E}_{z\sim \mathcal N(0,1)}[\sigma''(z)]$. 
As a result, for any kernel matrix ${\bf K}$ of the form above, we propose a novel random features technique, called Ternary Random Feature (TRF), that (i) asymptotically yields the same limiting kernel as the original ${\bf K}$ in a spectral sense and (ii) can be computed and stored much more efficiently, by wisely tuning (in a \emph{data-dependent} manner) the function $\sigma$ and the random vector ${\bf w}$, both taking values in $\{-1,0,1\}$. The computation of the proposed random features requires no multiplication, and a factor of $b$ times less bits for storage compared to classical random features such as random Fourier features, with $b$ the number of bits to store full precision values. Besides, it appears in our experiments on real data that the substantial gains in computation and storage are accompanied with somewhat improved performances compared to state-of-the-art random features compression/quantization methods.
\end{abstract}


%

\section{Introduction}



Kernel methods are among the most powerful machine learning approaches with a wide range of successful applications \citep{scholkopf2018kernel} which, however, suffer from scalability issues in large-scale problems, due to their high space and time complexities (with respect to the number of data $n$). 
To address this key limitation, a myriad of random features based kernel approximation techniques have been proposed \citep{rahimi2008random,liu2021random}: random features methods randomly project the data to obtain low-dimensional nonlinear representations that approximate the original kernel features. This allows practitioners to apply them, with a large saving in both time and space, to various kernel-based downstream tasks such as kernel spectral clustering~\citep{von2007tutorial}, kernel principal component analysis~\citep{scholkopf1997kernel}, kernel canonical correlation analysis~\citep{lai2000kernel}, kernel ridge regression~\citep{vovk2013kernel}, to name a few. A wide variety of these kernels can be written, for data points $ {\bf x}_i, {\bf x}_j \in \mathbb{R}^p$, in the form
\begin{align}
\label{eq:def_RF_kernel}
    \kappa({\bf x}_i, {\bf x}_j) = \mathbb{E}_{{\bf w}} \left[\sigma\left({\bf w}^{\sf T}{\bf x}_i\right) \sigma \left({\bf w}^{\sf T}{\bf x}_j\right)\right]
\end{align}
with ${\bf w} \in \mathbb{R}^p $ having i.i.d.\@ entries, which can be ``well approximated'' by a sample mean $\frac1m \sum_{t=1}^m \sigma\left({\bf w}_t^{\sf T}{\bf x}_i\right) \sigma \left({\bf w}_t^{\sf T}{\bf x}_j\right)$ over $m$ independent random features for $m$ sufficiently large. For instance, taking $\sigma(x)=[\cos(x),~\sin(x)]$ and ${\bf w}$ with i.i.d.\@ standard Gaussian entries, one obtains the popular Random Fourier Features (RFFs) that approximate the Gaussian kernel (and the Laplacian kernel for Cauchy distributed ${\bf w}$ with the same choice of $\sigma$)~\citep{rahimi2008random}; for $\sigma(x)=\max(x,0)$, one approximates the first order Arc-cosine kernel; and the zeroth order Arc-cosine kernel~\citep{cho2012kernel} with $\sigma(x)=(1+ {\rm sign}(x))/2$, etc.


As shall be seen subsequently, (random) neural networks are, to a large extent, connected to \emph{kernel matrices} of the form (\ref{eq:def_RF_kernel}). More specifically, the classification or regression performance at the output of random neural networks are functionals of random matrices that fall into the wide class of kernel random matrices. 
Perhaps more surprisingly, this connection still exists for \emph{deep neural networks} which are (i) randomly initialized and (ii) trained with gradient descent, as testified by the recent works on \emph{neural tangent kernels} \citep{jacot2018neural}, by considering the ``infinitely many neurons'' limit, that is, the limit where the network widths of all layers go to infinity simultaneously. This close connection between neural networks and kernels has triggered a renewed interest for the theoretical investigation of deep neural networks from various perspectives, including optimization \citep{du2019gradient,chizat2019lazy}, generalization \citep{allen2018learning,arora2019fine,bietti2019inductive}, and learning dynamics \citep{lee2019wide,advani2020high,liao2018dynamics}. These works shed new light on the theoretical understanding of deep neural network models and specifically demonstrate the significance of studying networks with random weights and their associated kernels to assess the mechanisms underlying more elaborate deep networks.

In this article, we consider the random feature kernel of the type (\ref{eq:def_RF_kernel}), which can also be seen as the limiting kernel of a single-hidden-layer neural network with a random first layer. 
By assuming a high-dimensional Gaussian Mixture Model (GMM) for the data $\{{\bf x}_i\}_{i=1}^n$ with ${\bf x}_i \in \mathbb{R}^p$, we show that the \emph{centered} kernel matrix\footnote{Left-~and right-multiplying the kernel matrices by ${\bf P}$ is equivalent to centering the data in the kernel feature space, which is a common practice in kernel learning and plays a crucial role in multidimensional scaling \citep{joseph1978multidimensional} and kernel PCA \citep{schlkopf1998nonlinear}. In the remainder of this paper, whenever we use kernel matrices, they are considered to have been centered in this way.}
\begin{equation}\label{eq:def_K}
    {\bf K} \triangleq {\bf P} \{ \kappa ({\bf x}_i, {\bf x}_j)\}_{i,j=1}^n {\bf P}, \quad {\bf P}\triangleq {\bf I}_n- \frac1n {\bf 1}_n{\bf 1}_n^{\sf T},
\end{equation}
is asymptotically (as $n,p \to \infty$ with $p/n \to c \in (0,\infty)$) equivalent, in a spectral sense, to another random kernel matrix $\tilde {\bf K}$ which depends on the GMM data statistics and the generalized Gaussian moments $\mathbb{E}[\sigma'(z)], \mathbb{E}[\sigma''(z)]$ of the activation function $\sigma(\cdot)$, but is \emph{independent} of the specific law of the i.i.d.\@ entries of the random vector ${\bf w}$, as long as they are normalized to have zero mean and unit variance. As such, one can design novel random features schemes with limiting kernels asymptotically equivalent to the original ${\bf K}$. For instance, define
\begin{equation}
\label{eq:def_kernel_ter}
    \kappa^{ter}({\bf x}_i, {\bf x}_j) \triangleq \mathbb{E}_{{\bf w}^{ter}} \left[\sigma^{ter}\left({\bf x}_i^{\sf T} {\bf w}^{ter} \right)\sigma^{ter}\left({\bf x}_j^{\sf T} {\bf w}^{ter} \right)\right]
\end{equation}
with ${\bf w}^{ter} \in \mathbb{R}^p$ having i.i.d.\@ entries taking value $w_i^{ter} = 0$ (with probability $\epsilon$) and value 
\begin{equation}
\label{eq:w}
    w_i^{ter} \in \left\{- (1-\epsilon)^{-\frac12}, (1-\epsilon)^{-\frac12} \right\}
\end{equation}
each with probability $1/2-\epsilon/2$,
 where $\epsilon \in [0,1)$ represents the \emph{level of sparsity} of ${\bf w}$, and 
\begin{equation}
\label{eq:sigmaTer}
    \sigma^{ter}(t) = -1\cdot \delta_{t<s_-}+1\cdot \delta_{t>s_+}
\end{equation}
for some thresholds $s_{-}<s_{+}$ chosen to match the generalized Gaussian moments $\mathbb{E}[\sigma'(z)], \mathbb{E}[\sigma''(z)]$ of \emph{any} $\sigma$ function (e.g., ReLU, $\cos,$ $\sin$) widely used in random features or neural network contexts. The proposed Ternary Random Features (TRFs, with limiting kernel matrices defined in (\ref{eq:def_kernel_ter}) asymptotically ``matching'' \emph{any} random features kernel matrices in a spectral sense) have the computational advantage of being sparse and not requiring multiplications but only additions, as well as the storage advantage of being composed of only a finite set of words, e.g., $\{-1, 0, 1\}$ for $\epsilon  = 0$. 

Given the urgent need for environmentally-friendly, but still efficient, neural networks such as binary neural networks~\citep{hubara2016binarized, lin2015neural, zhu2016trained, qin2020binary, hubara2016binarized}, pruned neural networks~\citep{liu2015sparse, han2015deep, han2015learning}, weights-quantized neural networks~\citep{gupta2015deep, gong2014compressing}, we hope that our analysis will open a new door to a \emph{random matrix-improved} framework of computationally efficient methods for machine
learning and neural network models more generally.


\subsection{Contributions}

Our main results are summarized as follows.
\begin{enumerate}
    \item By considering a high-dimensional Gaussian mixture model for the data, we show (Theorem~\ref{th2}) that for ${\bf K}$ defined in (\ref{eq:def_K}), $\|{\bf K}-\tilde{{\bf K}}\| \to 0$ as $n, p \to \infty$, where $\tilde{{\bf K}}$ is a random matrix \emph{independent} of the law of ${\bf w}$, and depends on the nonlinear $\sigma(\cdot)$ \emph{only} via its generalized Gaussian moments $\mathbb{E}[\sigma'(z)]$ and $\mathbb{E}[\sigma''(z)]$ for $z \sim \mathcal N(0,1)$.
    \item We exploit this result to propose a computationally efficient random features approach, called Ternary Random Features (TRFs), with asymptotically the same limiting kernel as \emph{any} random features-type kernel matrices ${\bf K}$ of the form (\ref{eq:def_K}), while inducing no multiplication and $b$ times less memory storage for its computation, with $b$ the number of bits to store full precision values, e.g., $b = 32$ in the case of a single-precision floating-point format.
    \item We provide empirical evidence on various random-features based algorithms, showing the computational and storage advantages of the proposed TRF method, while achieving competitive performances compared to state-of-the-art random features techniques. As a first telling example, in Figure~\ref{fig:logistic-covtype-different-kernels} on the Cov-Type dataset from the UCI ML repository,
    TRFs achieve similar logistic regression performance as the LP-RFF method proposed by ~\cite{zhang2019low}, with $8$ times less memory, and $32$ or $16$ times less memory than the Nyström approximation of the Gaussian kernel (using full precision of $32$ or $16$ bits)~\citep{williams2001using}; see the shift in the x-axis (memory) of Figure~\ref{fig:logistic-covtype-different-kernels}. 
\end{enumerate}

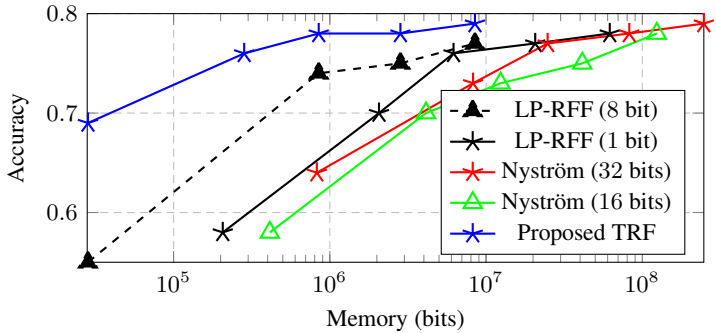
\begin{figure}
\centering

\begin{tikzpicture}[font=\footnotesize][scale=0.8]
\pgfplotsset{every major grid/.style={style=densely dashed}}
    		\begin{axis}
        [width=0.7\textwidth,height=0.35\textwidth,
        xmode = log, legend pos = south east, grid=major,xlabel={Memory (bits)},ylabel={Accuracy}, xmin=28000,xmax=247680000,ymin=0.55,ymax=0.8,   xtick={100000, 1000000, 10000000, 100000000},
    xticklabels={$10^5$, $10^6$, $10^7$, $10^8$}]
 \addplot[mark size=4pt, dashed, mark=triangle*,thick]coordinates{            
(28300,0.55)(2830000.65)(849000,0.74)(2830000,0.75)(8490000,0.77)
};
 \addplot[mark size=4pt, mark=star,thick]coordinates{            
(206400,0.58)(2064000,0.70)(6192000,0.76)(20640000,0.77)(61920000,0.78)
};
 \addplot[mark size=4pt, red, mark=star,thick]coordinates{            
(825600,0.64)(8256000,0.73)(24768000,0.77)(82560000,0.78)(247680000,0.79)
};
 \addplot[mark size=4pt, green, mark=triangle,thick]coordinates{            
(412800,0.58)(4128000,0.70)(12384000,0.73)(41280000,0.75)(123840000,0.78)
};
 \addplot[mark size=4pt, blue, mark=star,thick]coordinates{   
(28300,0.69)(283000,0.76)(849000,0.78)(2830000,0.78)(8490000,0.79)
};
\legend{LP-RFF ($8$ bit), LP-RFF ($1$ bit), Nyström ($32$ bits), Nyström ($16$ bits), Proposed TRF };
    		\end{axis}
    		\end{tikzpicture}
    \caption{Test accuracy of logistic regression on quantized random features for different number of features $m \in \{10^2, 10^3, 5.10^3,10^4, 5.10^4\}$, with LP-RFF ($8$-bit and $1$-bit, in \textbf{black})~\citep{zhang2019low}, Nyström approximation ($32$ bits in \textbf{\color{red} red}, $16$ bits in \textbf{\color{green} green})~\citep{williams2001using}, versus the proposed TRF approach (in \textbf{\color{blue} blue}), on the two-class Cov-Type dataset from UCI ML repo, with  $n=418\,000$ training samples, $n_{test}=116\,000$ test samples, and data dimension $p=54$. }
\label{fig:logistic-covtype-different-kernels}
\end{figure}

\subsection{Related work}
\label{sec:related}

\paragraph{Random features kernel and random neural networks.}
Random features methods were first proposed to relieve the computational and storage burden of kernel methods in large-scale problems when the number of training samples $n$ is large \citep{scholkopf2018kernel,rahimi2008random,liu2021random}. For instance, Random Fourier Features can be used to approximate the popular Gaussian kernel, when the number of random features $m$ is sufficiently large \citep{rahimi2008random}. Since (deep) modern neural networks are routinely trained with random initialization, random features method is also considered a stylish model to analyze neural networks \citep{neal1996priors, williams1997computing,novak2018bayesian,matthews2018gaussian,lee2017deep,louart2018random}. In particular, by focusing on the large $n,m$ regime, the analysis of such models led to the so-called \emph{double descent} theory \citep{advani2020high,mei2019generalization, liao2020random} for neural nets.

\paragraph{Computationally efficient random features methods.}

In an effort to further reduce the computation and storage costs of random features models, various quantization and binarization methods were proposed \citep{goemans1995improved, charikar2002similarity, li2017simple, li2019random, li2021quantization, agrawal2019data, zhang2019low, liao2020sparse,couillet2021two}. 
More precisely, \citet{agrawal2019data} combined RFFs with a data-dependent feature selection approach to reduce the computational cost, while preserving the statistical guarantees of (using the original set of) RFFs. \citet{zhang2019low} proposed a low-precision approximation of RFFs to significantly reduce the storage while generalizing as well as full-precision RFFs. \citet{li2021quantization} designed quantization schemes of RFFs for arbitrary choice of the Gaussian kernel parameter. Our work improves these previous efforts by (i) considering a broader family of random features kernels beyond RFFs and by (ii) proposing the TRF approach that is both sparse and quantized, while asymptotically yielding the \emph{same} limiting kernel spectral structure, and thus algorithmic performances \citep{cortes2010impact}.
\paragraph{Random matrix theory and neural networks.}
Random matrix theory (RMT), as a powerful and flexible tool to investigate the (asymptotic) behavior of large-scale systems, is recently gaining popularity in the analysis of (deep) neural networks~\citep{martin2018implicit, martin2019traditional}.
In this respect, \citet{pennington2017nonlinear} derived the eigenvalue distribution of the Conjugate Kernel (CK) in a single-hidden-layer random neural network model. This result was then generalized to a broader class of data distributions \citep{louart2018random} and to a multi-layer scenario \citep{benigni2019eigenvalue,pastur2020random}. \citet{fan2020spectra} went beyond the general i.i.d.\@ assumption (on the entries of data vectors) and studied the spectral properties of the CK and neural tangent kernel for data that are approximately ``pairwise orthogonal.'' Our work improves~\citep{fan2020spectra} by studying the random features kernel for more structured GMM data, and is thus more adapted to machine learning applications such as classification. As far as the study of random features kernels under GMM data is concerned, the closest work to ours is \citep{liao2018spectrum} where the kernel matrix ${\bf K}$ defined in (\ref{eq:def_K}) is studied for GMM data, but only for a few specific activation functions and Gaussian ${\bf w}$, see Footnote~\ref{footnote:compare_to_liao} below for a detailed account of the technical differences between this work and \citep{liao2018spectrum}. Here, we provide a \emph{universal} result with respect to the (much broader class of) activation functions and random ${\bf w}$, and propose a computation and storage efficient random features technique well tuned to match the performances of \emph{any} commonly used random features kernels.


\medskip

\subsection{Notations and Organization of the article}
In this article, we denote scalars by lowercase letters, vectors by bold lowercase, and matrices by bold uppercase. We denote the transpose operator by $(\cdot)^{\sf T}$, we use $\|\cdot\|$ to denote the Euclidean norm for vectors and spectral/operator norm for matrices. For a random variable $z$, $\mathbb{E}[z]$ denotes the expectation of $z$. The notation $\delta_{x \in A}$ is the Kronecker delta taking value $1$ when $x \in A$ and $0$ otherwise. Finally, ${\bf 1}_p$ and ${\bf I}_{p}$ are respectively the vector of all one's of dimension $p$ and the identity matrix of dimension $p\times p$.

The remainder of the article is structured as follows. In Section~\ref{sec:settings}, we describe the random features model under study along with our working  assumptions. We then present our main technical results in Section~\ref{sec:main} on the spectral characterization of random features kernel matrices ${\bf K}$ and its practical consequences, in particular the design of cost-efficient ternary random features (TRFs) leading asymptotically to the same kernel as any generic random features. We provide empirical evidence in Section~\ref{sec:experiments} showing the computational and storage advantage along with competitive performance of TRFs compared to state-of-the-art random features approaches. Conclusion and perspective are placed in Section~\ref{sec:conclusion}.
\section{System settings}
\label{sec:settings}

Let ${\bf W} \in \mathbb{R}^{m \times p}$ be a random matrix having i.i.d.\@ entries with zero mean and unit variance. The random features matrix ${\bm \Sigma} \in \mathbb R^{m \times n}$ of some data ${\bf X} \in \mathbb R^{p \times n}$ is defined as ${\bm \Sigma} \triangleq \sigma({\bf W} {\bf X})$ for some nonlinear activation function $\sigma: \mathbb{R} \to \mathbb{R}$ applied entry-wise on ${\bf W X}$. We denote ${\bf G}$ the associated random features Gram matrix  
\begin{equation}\label{eq:def_Gram}
    {\bf G} \triangleq \frac1m {\bm \Sigma}^{\sf T} {\bm \Sigma} = \frac1m \sigma({\bf WX})^{\sf T} \sigma({\bf WX}) = \frac1m \sum_{t=1}^m \sigma\left({\bf X}^{\sf T} {\bf w}_t \right) \sigma \left({\bf w}_t^{\sf T}{\bf X}\right) \in \mathbb{R}^{n \times n}
\end{equation}
which is a sample mean of the \emph{expected} kernel defined in (\ref{eq:def_RF_kernel}).
With ${\bf P}\triangleq {\bf I}_n - \frac1n {\bf 1}_n{\bf 1}_n^{\sf T}$, we consider the \emph{expected} and \emph{centered} random features kernel matrix ${\bf K}$ defined in~(\ref{eq:def_K}),
which plays a fundamental role in various random features kernel-based learning methods such as kernel ridge regression, logistic regression, support vector machines, principal component analysis, or spectral clustering. 

Let ${\bf x}_1,\cdots,{\bf x}_n \in \mathbb{R}^p$ be $n$ independent data vectors belonging to one of $K$ distributional classes $\mathcal{C}_1, \cdots, \mathcal{C}_K$, with class $\mathcal C_a$ having cardinality $n_a$. We assume that ${\bf x}_i$ follows a Gaussian Mixture Model (GMM), that is, for ${\bf x}_i \in \mathcal{C}_a,$
\begin{equation}\label{eq:GMM}
    {\bf x}_i = {\bm \mu}_a/ \sqrt{p} + {\bf z}_i
\end{equation}
with ${\bf z}_i \sim \mathcal{N}({\bf 0}_p,{\bf C}_a/p)$ for some mean ${\bm \mu}_a \in \mathbb{R}^p$ and covariance ${\bf C}_a \in \mathbb{R}^{p\times p}$ associated to class $\mathcal{C}_a.$ 

In high dimensions, under ``non triviality assumptions'',\footnote{That is, when classification is neither too hard nor too easy.} the data vectors drawn from (\ref{eq:GMM}) can be shown to be neither very ``close'' nor very ``far'' from each other, irrespective of the class they belong to, see~\citep{couillet2016kernel}. We place ourselves under such non-trivial conditions, by imposing, as in  \citep{couillet2016kernel,couillet2018classification}, the following growth rate conditions.
\begin{assumption}[High-dimensional asymptotics]
\label{ass:growth_rate}
As $n \to \infty$, we have (i) $p/n \to c \in (0, \infty)$ and $n_a/n \to c_a \in (0, 1)$; (ii) $\|{\bm \mu}_a\|=O(1)$;
(iii) for ${\bf C}^{\circ}= \sum_{a=1}^K \frac{n_a}{n} {\bf C}_a$ and ${\bf C}_a^{\circ}={\bf C}_a-{\bf C}^{\circ}$, $\|{\bf C}_a\|=O(1)$, $\tr({\bf C}_a^{\circ})=O(\sqrt{p})$ and $\tr ({\bf C}_a {\bf C}_b) = O(p)$ for $a,b\in \{1,\cdots,K\}$. We denote $\tau \triangleq \tr({\bf C}^{\circ})/p$ that is assumed to converge in $(0, \infty)$.
\end{assumption}

\begin{remark}[Beyond Gaussian mixture data]
While the theoretical results in this paper are derived for Gaussian mixture data in (\ref{eq:GMM}), under the non-trivial setting of Assumption~\ref{ass:growth_rate}, we conjecture that they can be extended to a much broader family of data distributions beyond the Gaussian setting, e.g., to the so-called \emph{concentrated random vector} family \citep{seddik2020random,louart2018concentration}, under similar non triviality assumptions. See Section~\ref{sec:notesAssumptions} in the appendix for more discussions.
\end{remark}



To cover a large family of random features, we assume that the random projector matrix ${\bf W}$ has i.i.d.\@ entries of zero mean, unit variance and bounded fourth-order moment, with no restriction on their particular distribution. 
\begin{assumption}[On random projection matrix]
\label{ass:W}
The random matrix ${\bf W}$ has i.i.d.\@ entries such that $\mathbb{E}[W_{ij}]=0$, $\mathbb{E}[W_{ij}^2]=1$ and $\mathbb{E}[W_{ij}^{4}]< \lambda_W$ for some constant $\lambda_W<\infty$.
\end{assumption}

We consider the family of activation functions $\sigma(\cdot)$ satisfying the following assumption.
\begin{assumption}[On activation function]
\label{ass:sigma}
The function $\sigma$ is at least twice  differentiable (in the sense of distributions when applied to a random variable having a non-degenerate distribution function, see Remark~\ref{rem:activation} in Section~\ref{sec:notesAssumptions} of the appendix), with $\max\{ \mathbb{E} |\sigma(z)|, \mathbb{E} |\sigma^2(z)|, \mathbb{E} |\sigma'(z)|, \mathbb{E}|\sigma''(z)| \} < \lambda_\sigma$ for some constant $\lambda_\sigma <\infty$ and $z \sim \mathcal N(0,1)$.
\end{assumption}



\section{Main result}
\label{sec:main}

Our objective is to characterize the high-dimensional spectral behavior of the centered and expected random features kernel matrix ${\bf K}$ defined in (\ref{eq:def_K}). It turns out, somewhat surprisingly, that under the non-trivial setting of Assumption~\ref{ass:growth_rate}, one may mentally picture the high-dimensional data vectors as (i) being asymptotically pairwise \emph{orthogonal} (i.e., ${\bf x}_i^{\sf T}{\bf x}_j \to 0$ for $i \neq j$ as $p \to \infty$) and (ii) having asymptotically \emph{equal} Euclidean norms (i.e.,  $\|{\bf x}_i\|^2 \to \tau$), \emph{independently} of the underlying class they belong to. As we shall see, this high-dimensional ``concentration'' of ${\bf x}_i^{\sf T}{\bf x}_j \to \tau \cdot \delta_{i=j}$ plays a crucial role in ``decoupling'' the two \emph{dependent} but asymptotically Gaussian random variables $\sigma ({\bf w}^{\sf T}{\bf x}_i ) $ and $ \sigma ({\bf w}^{\sf T}{\bf x}_j )$ in the definition (\ref{eq:def_RF_kernel}). 
This, up to some careful control on the higher-order (but well ``concentrated'') terms, leads to the following result on the asymptotic behavior of ${\bf K}$, the proof of which is given in Section~\ref{sec:proofTh1} of the appendix.
 
\begin{theorem}[Asymptotic equivalent of ${\bf K}$]
\label{th2}
Under Assumption~\ref{ass:growth_rate}-\ref{ass:sigma}, for ${\bf K}$ defined in (\ref{eq:def_K}), as $n\to \infty$,
\begin{align*}
    \|{\bf K}-\tilde{{\bf K}}\| \to 0,
\end{align*} almost surely with $ {\bf P} ={\bf I}_n - {\bf 1}_n{\bf 1}_n^{\sf T}/n$,
\begin{align}
    \tilde{{\bf K}} &= {\bf P} \left(d_1 \cdot \left({\bf Z}+{\bf M}\frac{{\bf J}^{\sf T}}{\sqrt{p}}\right)^{\sf T}\left({\bf Z}+{\bf M}\frac{{\bf J}^{\sf T}}{\sqrt{p}}\right) + d_2 \cdot {\bf VAV}^{\sf T} + d_0 \cdot {\bf I}_n\right){\bf P}, 
    \label{EQ6}
\end{align}
and
\begin{align*}
    {\bf V} = \left[{\bf J}/\sqrt{p},~{\bm \phi}\right] \in \mathbb{R}^{n \times (K+1) },\quad
    {\bf A} = \left[\begin{matrix}
    {\bf t}{\bf t}^{\sf T} + 2{\bf T} & {\bf t}\\
    {\bf t}^{\sf T} & 1
    \end{matrix}\right] \in \mathbb{R}^{(K+1) \times (K+1)}, 
\end{align*}
where, for $z \sim \mathcal N(0,1),$
\begin{align*}
    d_0 =  \mathbb{E}[\sigma^2(\sqrt{\tau}z)] - \mathbb{E}[\sigma(\sqrt{\tau}z)]^2 - \tau \mathbb{E}[\sigma'(\sqrt{\tau}z)]^2, \quad d_1 = \mathbb{E}[\sigma'(\sqrt{\tau}z)]^2, \quad d_2= \frac14 \mathbb{E}[\sigma''(\sqrt{\tau}z)]^2
\end{align*}
and ``first-order'' random matrix ${\bf Z} = [{\bf z}_1, \cdots, {\bf z}_n] \in \mathbb{R}^{p \times n}$ as defined in (\ref{eq:GMM}), ``second-order'' random (fluctuation) vector ${\bm \phi} =\left\{\|{\bf z}_i\|^2-\mathbb{E}[\|{\bf z}_i\|^2]\right\}_{i=1}^n \in \mathbb{R}^n$, and GMM data statistics
\begin{equation}
    {\bf M} = [{\bm \mu}_1,\cdots, {\bm \mu}_K] \in \mathbb{R}^{p \times K},~{\bf t} =\left\{\frac{ \tr ({\bf C}_a^{\circ})}{\sqrt{p}}\right\}_{a=1}^K \in \mathbb{R}^K,~{\bf T} = \left\{\frac{\tr {\bf C}_a{\bf C}_b}{p} \ \right\}_{a,b=1}^K \in \mathbb{R}^{K \times K}
\end{equation}
as well as the class label vectors ${\bf J} = [{\bf j}_1, \cdots, {\bf j}_K] \in \mathbb{R}^{n \times K}$ with $[{\bf j}_a]_i = \delta_{{\bf x}_i \in \mathcal{C}_a} $.
\end{theorem}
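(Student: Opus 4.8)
The plan is to prove the convergence entrywise first, for the kernel $\kappa(\mathbf{x}_i,\mathbf{x}_j)$, and then upgrade it to convergence in operator norm. I would organize the argument around three ingredients: a central-limit/universality step that removes the dependence on the law of $\mathbf{w}$, a concentration step for the data Gram entries, and a Taylor expansion (via Gaussian integration by parts) that produces the coefficients $d_0,d_1,d_2$. For the universality step, note that for fixed data $\mathbf{w}^{\sf T}\mathbf{x}_i=\sum_{k=1}^p W_k [\mathbf{x}_i]_k$ is a sum of $p$ independent terms, so by Assumption~\ref{ass:W} the pair $(\mathbf{w}^{\sf T}\mathbf{x}_i,\mathbf{w}^{\sf T}\mathbf{x}_j)$ is asymptotically bivariate Gaussian with covariance $\begin{pmatrix}\|\mathbf{x}_i\|^2 & \mathbf{x}_i^{\sf T}\mathbf{x}_j\\ \mathbf{x}_i^{\sf T}\mathbf{x}_j & \|\mathbf{x}_j\|^2\end{pmatrix}$, which depends on $\mathbf{w}$ only through its first two moments. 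By a Lindeberg-type swapping argument and the regularity of $\sigma$ (Assumption~\ref{ass:sigma}), I would replace $\mathbb{E}_{\mathbf{w}}[\sigma(\mathbf{w}^{\sf T}\mathbf{x}_i)\sigma(\mathbf{w}^{\sf T}\mathbf{x}_j)]$ by $f(\|\mathbf{x}_i\|^2,\|\mathbf{x}_j\|^2,\mathbf{x}_i^{\sf T}\mathbf{x}_j) \triangleq \mathbb{E}_{(u,v)}[\sigma(u)\sigma(v)]$, with $(u,v)$ the bivariate Gaussian above, up to a controllable error; this is exactly what makes the limiting spectrum independent of the distribution of $\mathbf{w}$.

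Second, I would use the GMM structure (\ref{eq:GMM}) together with Assumption~\ref{ass:growth_rate} to control the Gram entries. Writing $\mathbf{x}_i^{\sf T}\mathbf{x}_j = \frac{\bm{\mu}_a^{\sf T}\bm{\mu}_b}{p}+\frac{\bm{\mu}_a^{\sf T}\mathbf{z}_j+\bm{\mu}_b^{\sf T}\mathbf{z}_i}{\sqrt p}+\mathbf{z}_i^{\sf T}\mathbf{z}_j$, a concentration-of-quadratic-forms (Hanson--Wright) argument shows $\|\mathbf{x}_i\|^2\to\tau$ on the diagonal and $\mathbf{x}_i^{\sf T}\mathbf{x}_j\to 0$ off-diagonal, and isolates the relevant fluctuation scales: $\bm{\mu}_a^{\sf T}\bm{\mu}_b/p$, $\tr(\mathbf{C}_a^\circ)/\sqrt p$ (assembled into $\mathbf{t}$), $\tr(\mathbf{C}_a\mathbf{C}_b)/p$ (into $\mathbf{T}$), and the norm fluctuation $\phi_i=\|\mathbf{z}_i\|^2-\mathbb{E}\|\mathbf{z}_i\|^2$ (into $\bm{\phi}$).

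Third, I would Taylor-expand $f$ around the reference point $\|\mathbf{x}_i\|^2=\|\mathbf{x}_j\|^2=\tau$, $\mathbf{x}_i^{\sf T}\mathbf{x}_j=0$. The crucial tool is Gaussian integration by parts (Price's theorem), which gives $\partial_\rho f|_0=\mathbb{E}[\sigma'(\sqrt\tau z)]^2$, $\partial_\rho^2 f|_0=\mathbb{E}[\sigma''(\sqrt\tau z)]^2$, and $\partial_a f|_0=\tfrac12\mathbb{E}[\sigma''(\sqrt\tau z)]\mathbb{E}[\sigma(\sqrt\tau z)]$, so that the mixed and second derivatives are read off directly as the Gaussian moments in $d_0,d_1,d_2$ (in particular the factor $\tfrac14$ in $d_2$ arises from $\partial_a\partial_b f|_0=\tfrac14\mathbb{E}[\sigma''(\sqrt\tau z)]^2$, and the factor $2$ on $\mathbf{T}$ from $\tfrac12\partial_\rho^2 f|_0$). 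Collecting the linear term in $\mathbf{x}_i^{\sf T}\mathbf{x}_j$ into $d_1\cdot(\mathbf{Z}+\mathbf{M}\mathbf{J}^{\sf T}/\sqrt p)^{\sf T}(\mathbf{Z}+\mathbf{M}\mathbf{J}^{\sf T}/\sqrt p)$, the quadratic/norm-fluctuation terms into $d_2\cdot\mathbf{VAV}^{\sf T}$, and the diagonal residual into $d_0\cdot\mathbf{I}_n$, then left/right multiplying by $\mathbf{P}$, reproduces (\ref{EQ6}).

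The main obstacle is the final step: upgrading these entrywise estimates to a bound on $\|\mathbf{K}-\tilde{\mathbf{K}}\|$. Entrywise smallness does not by itself control the operator norm of an $n\times n$ matrix, and the Taylor remainder, the higher Hermite/cumulant terms, and the non-Gaussian corrections from the CLT must all be shown to be negligible \emph{in operator norm} uniformly over the $O(n^2)$ pairs. I expect this to require decomposing the residual into a few structured low-rank pieces (the mean, covariance-trace, and fluctuation blocks already isolated above) plus a genuinely ``noise-like'' part whose operator norm is controlled by a trace/Frobenius bound together with the almost-sure boundedness of $\|\mathbf{Z}^{\sf T}\mathbf{Z}\|$ and $\|\bm{\phi}\|/\sqrt n$. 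Establishing the uniformity of the CLT error across all pairs, and that the discarded higher-order terms do not accumulate in spectral norm, is where the bulk of the technical effort lies.
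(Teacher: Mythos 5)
Your three ingredients are exactly those of the paper's proof: the paper likewise (i) invokes a CLT (Lyapunov, under Assumptions~\ref{ass:growth_rate}~and~\ref{ass:W}) to make $({\bf w}^{\sf T}{\bf x}_i,{\bf w}^{\sf T}{\bf x}_j)$ asymptotically bivariate Gaussian, (ii) uses the GMM estimates ${\bf x}_i^{\sf T}{\bf x}_j=O(p^{-1/2})$ and $\|{\bf x}_i\|^2=\tau+O(p^{-1/2})$ with the same fluctuation bookkeeping (${\bf t}$, ${\bf T}$, ${\bm \phi}$), and (iii) performs a second-order Taylor expansion yielding $d_0,d_1,d_2$. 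The only organizational difference is that the paper decouples the two asymptotically Gaussian variables by an explicit Gram--Schmidt step and expands $\sigma$ itself around $\sqrt{\tau}\xi_a$, $\sqrt{\tau}\xi_b$, whereas you expand the bivariate functional $f$ in its covariance arguments via Price's theorem and the heat-equation identity; these are equivalent, and your accounting of the factor $\tfrac14$ in $d_2$ and of the $2{\bf T}$ block is correct.

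The one genuine gap is the step you explicitly defer, and its resolution is far simpler than the ``structured low-rank plus noise'' decomposition you envisage --- indeed, this is the whole reason the expansion must be pushed to entrywise precision $O(p^{-3/2})$. Every term of entrywise order at least $p^{-1}$ is retained inside $\tilde{\bf K}$ (the ${\bf Z}^{\sf T}{\bf Z}$, mean, trace and fluctuation terms; in particular $\|{\bf Z}^{\sf T}{\bf Z}\|=O(1)$ sits in $\tilde{\bf K}$ itself, not in the residual, so no control of it is needed for the error bound). The residual ${\bf R}={\bf K}-\tilde{\bf K}$ then has all entries $O(p^{-3/2})$, and the uniformity over the $n^2$ pairs that worries you is automatic from the probabilistic $O(\cdot)$ notation the paper works with (polynomial tail bounds survive a union bound over polynomially many events). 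Hence the crude estimate $\|{\bf R}\|\le\|{\bf R}\|_F\le n\max_{i,j}|R_{ij}|=O(n\,p^{-3/2})=O(p^{-1/2})\to 0$ suffices, since $n\asymp p$ under Assumption~\ref{ass:growth_rate}. In other words, entrywise smallness \emph{does} control the operator norm here because the entrywise rate is $o(1/n)$; the threshold $p^{-1}$ versus $p^{-3/2}$ is precisely what dictates which terms must appear in $\tilde{\bf K}$, and no Hanson--Wright-type treatment of a noise block is required to close the argument.
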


First note that the ``asymptotic equivalence'' established in Theorem~\ref{th2} holds for the \emph{expected} kernel ${\bf K}$, not on the \emph{empirical} random feature kernel ${\bf G}$ defined in (\ref{eq:def_Gram}), and is thus \emph{independent} of the number of random features $m$.
On closer inspection of Theorem~\ref{th2}, we see that the data statistics for classification, i.e., the means (${\bf M}$) and covariances (${\bf tt}^{\sf T}$, ${\bf T}$) are respectively weighted by the generalized Gaussian moments of first ($d_1$) and second order ($d_2$), while the coefficient $d_0$ merely acts as a regularization term to shift \emph{all} the eigenvalues,\footnote{This can be seen as another manifestation of the \emph{implicit regularization} in high-dimensional kernel and random features \citep{jacot2020implicit,derezinski2020exact,liu2021kernel}.} and has thus asymptotically \emph{no} impact on the performance of, e.g., kernel spectral clustering for which only eigenvector structures are exploited.\footnote{We provide in Table~\ref{tab:ds} (Section~\ref{sec:gaussian moments} of the appendix) the corresponding Gaussian moments $d_0, d_1, d_2$ for various commonly used activation functions in random features and neural network contexts.} 

Theorem~\ref{th2} unveils a surprising \emph{universal} behavior of the expected kernel ${\bf K}$ 
in the large $n,p$ regime. Specifically, the expression of  $\tilde{{\bf K}}$ (and thus the spectral behavior of ${\bf K}$, see our discussion in the paragraph that follows) is \emph{universal} with respect to the distribution of ${\bf W}$ and the activation function $\sigma$, when they are ``normalized'' to satisfy Assumptions~\ref{ass:W}~and~\ref{ass:sigma}. This technically improves previous efforts such as \citep{liao2018spectrum},\footnote{\label{footnote:compare_to_liao} From a technical perspective, Theorem~\ref{th2} improves \citep[Theorem~1]{liao2018spectrum} in the fact that the latter relies on the \emph{explicit} forms of the expectation ${\bf K}$ in (\ref{eq:def_K}), and is thus limited to (i) a few nonlinear $\sigma$ for which ${\bf K}$ can be computed explicitly, see \citep[Table~1]{liao2018spectrum}; and (ii) Gaussian distributed ${\bf W}$ in which case the $p$-dimensional integral can be easily reduced to a two-dimensional one. Here, Theorem~\ref{th2} holds for a much broader family of random ${\bf W}$ and nonlinear $\sigma$ as long as Assumptions~\ref{ass:W}~and~\ref{ass:sigma} hold.} and allows us to design computationally more efficient random features approach by wisely choosing ${\bf W}$ and $\sigma$. 

As a direct consequence of Theorem~\ref{th2}, one has, by Weyl's inequality and Davis--Kahan theorem that (i) the difference between each corresponding pair of eigenvalues and (ii) the distance between the ``isolated'' eigenvectors or the ``angle'' between the ``isolated subspaces'' of ${\bf K}$ and $\tilde{\bf K}$ vanish asymptotically as $n,p \to \infty$. This is numerically confirmed in Figure~\ref{fig:hist_L_synthetic}, where one observes a close match between the spectra (eigenvalue ``bulk'' and isolated eigenvalue-eigenvector pairs) of ${\bf K}$ and those of its asymptotic equivalent $\tilde{{\bf K}}$ given in Theorem~\ref{th2}, already for $n,p$ only in hundreds. In particular, we compare, in Figure~\ref{fig:hist_L_synthetic}, the eigenspectra of ${\bf K}$ and $\tilde {\bf K}$ for Gaussian versus Student-t distributed ${\bf W}$, on GMM data. A close match of the spectra and the isolated eigen-pairs is observed, irrespective of the distribution of ${\bf W}$, as predicted by our theory.

\begin{figure}[h!]
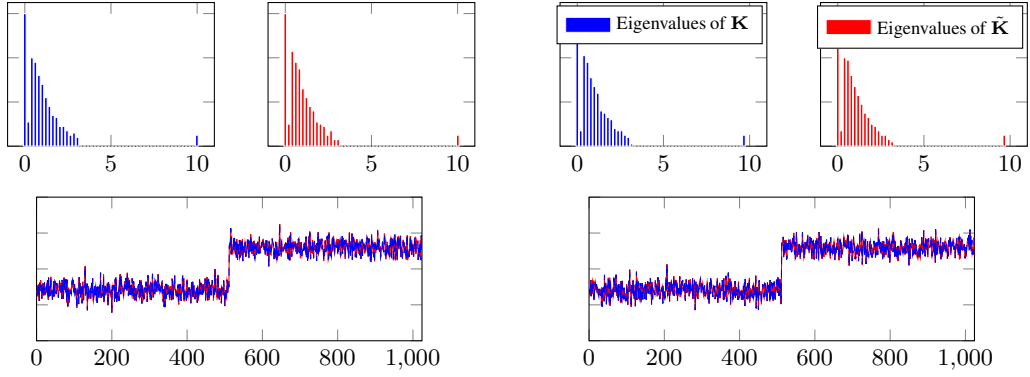

  \centering
  \begin{tabular}{cc}
  \begin{tabular}{cc}
           \begin{tikzpicture}[font=\footnotesize]
    \renewcommand{\axisdefaulttryminticks}{4} 
    \tikzstyle{every major grid}+=[style=densely dashed]       
    \tikzstyle{every axis y label}+=[yshift=-10pt] 
    \tikzstyle{every axis x label}+=[yshift=5pt]
    \tikzstyle{every axis legend}+=[cells={anchor=west},fill=white,
        at={(0.98,0.98)}, anchor=north east, font=\scriptsize ]
    \begin{axis}[
      height=.25\textwidth,
      width=.31\textwidth,
      xmin=-1,
      ymin=0,
      xmax=11,
      ymax=65,
      yticklabels={},
      bar width=1pt,
      grid=none,
      ymajorgrids=false,
      scaled ticks=false,
      ]
      \addplot[area legend,ybar,mark=none,color=white,fill=blue] coordinates{
(0,60)(0.20376827464030317,11.0)(0.40753654928060634,40.0)(0.6113048239209095,38.0)(0.8150730985612126,32.0)(1.0188413732015158,28.0)(1.222609647841819,22.0)(1.426377922482122,18.0)(1.6301461971224251,14.0)(1.8339144717627283,13.0)(2.0376827464030316,9.0)(2.2414510210433347,9.0)(2.445219295683638,7.0)(2.648987570323941,5.0)(2.852755844964244,6.0)(3.056524119604547,4.0)(3.2602923942448503,0.0)(3.4640606688851534,0.0)(3.6678289435254565,0.0)(3.8715972181657596,0.0)(4.075365492806063,0.0)(4.279133767446366,0.0)(4.482902042086669,0.0)(4.686670316726972,0.0)(4.890438591367276,0.0)(5.094206866007578,0.0)(5.297975140647882,0.0)(5.501743415288185,0.0)(5.705511689928488,0.0)(5.909279964568791,0.0)(6.113048239209094,0.0)(6.316816513849397,0.0)(6.520584788489701,0.0)(6.724353063130004,0.0)(6.928121337770307,0.0)(7.13188961241061,0.0)(7.335657887050913,0.0)(7.539426161691217,0.0)(7.743194436331519,0.0)(7.946962710971823,0.0)(8.150730985612126,0.0)(8.354499260252428,0.0)(8.558267534892732,0.0)(8.762035809533035,0.0)(8.965804084173339,0.0)(9.16957235881364,0.0)(9.373340633453944,0.0)(9.577108908094248,0.0)(9.780877182734551,0.0)(9.984645457374855,5.0)};
\end{axis}
\end{tikzpicture}
&
     \begin{tikzpicture}[font=\footnotesize]
    \renewcommand{\axisdefaulttryminticks}{4} 
    \tikzstyle{every major grid}+=[style=densely dashed]       
    \tikzstyle{every axis y label}+=[yshift=-10pt] 
    \tikzstyle{every axis x label}+=[yshift=5pt]
    \tikzstyle{every axis legend}+=[cells={anchor=west},fill=white,
        at={(0.98,0.98)}, anchor=north east, font=\scriptsize ]
    \begin{axis}[
      height=.25\textwidth,
      width=.31\textwidth,
      xmin=-1,
      ymin=0,
      xmax=11,
      ymax=65,
      yticklabels={},
      bar width=1pt,
      grid=none,
      ymajorgrids=false,
      scaled ticks=false,
      ]
\addplot[area legend,ybar,mark=none,color=white,fill=red] coordinates{
(0,60)(0.20429297817387526,10.0)(0.4085859563477504,43.0)(0.6128789345216257,38.0)(0.8171719126955008,35.0)(1.021464890869376,26.0)(1.2257578690432511,22.0)(1.4300508472171263,18.0)(1.6343438253910014,16.0)(1.8386368035648766,11.0)(2.042929781738752,10.0)(2.247222759912627,9.0)(2.4515157380865022,5.0)(2.655808716260377,7.0)(2.8601016944342526,3.0)(3.0643946726081275,3.0)(3.268687650782003,0.0)(3.4729806289558782,0.0)(3.677273607129753,0.0)(3.8815665853036285,0.0)(4.085859563477504,0.0)(4.290152541651379,0.0)(4.494445519825254,0.0)(4.698738497999129,0.0)(4.9030314761730045,0.0)(5.107324454346879,0.0)(5.311617432520754,0.0)(5.51591041069463,0.0)(5.720203388868505,0.0)(5.92449636704238,0.0)(6.128789345216255,0.0)(6.333082323390131,0.0)(6.537375301564006,0.0)(6.741668279737881,0.0)(6.9459612579117564,0.0)(7.150254236085631,0.0)(7.354547214259506,0.0)(7.558840192433381,0.0)(7.763133170607257,0.0)(7.967426148781132,0.0)(8.171719126955008,0.0)(8.376012105128883,0.0)(8.580305083302758,0.0)(8.784598061476633,0.0)(8.988891039650508,0.0)(9.193184017824382,0.0)(9.397476995998257,0.0)(9.601769974172134,0.0)(9.806062952346009,0.0)(10.010355930519884,5.0)};
\end{axis}
\end{tikzpicture}
  \end{tabular}
  &
  \begin{tabular}{cc}
      \begin{tikzpicture}[font=\footnotesize]
    \renewcommand{\axisdefaulttryminticks}{4} 
    \tikzstyle{every major grid}+=[style=densely dashed]       
    \tikzstyle{every axis y label}+=[yshift=-10pt] 
    \tikzstyle{every axis x label}+=[yshift=5pt]
    \tikzstyle{every axis legend}+=[cells={anchor=west},fill=white,
        at={(0.98,0.98)}, anchor=north east, font=\scriptsize ]
    \begin{axis}[
      height=.25\textwidth,
      width=.31\textwidth,
      xmin=-1,
      ymin=0,
      xmax=11,
      ymax=65,
      yticklabels={},
      bar width=1pt,
      grid=none,
      ymajorgrids=false,
      scaled ticks=false,
      ]
      \addplot[area legend,ybar,mark=none,color=white,fill=blue] coordinates{
(0,60)(0.19735293485286357,7.0)(0.3947058697057271,41.0)(0.5920588045585906,38.0)(0.7894117394114541,31.0)(0.9867646742643177,27.0)(1.184117609117181,24.0)(1.3814705439700445,16.0)(1.578823478822908,15.0)(1.7761764136757716,13.0)(1.9735293485286352,12.0)(2.1708822833814985,8.0)(2.368235218234362,7.0)(2.5655881530872255,7.0)(2.762941087940089,5.0)(2.9602940227929526,4.0)(3.157646957645816,1.0)(3.3549998924986797,0.0)(3.5523528273515432,0.0)(3.7497057622044068,0.0)(3.9470586970572703,0.0)(4.144411631910134,0.0)(4.341764566762997,0.0)(4.539117501615861,0.0)(4.736470436468724,0.0)(4.933823371321588,0.0)(5.131176306174451,0.0)(5.328529241027315,0.0)(5.525882175880178,0.0)(5.723235110733042,0.0)(5.920588045585905,0.0)(6.117940980438769,0.0)(6.315293915291632,0.0)(6.512646850144495,0.0)(6.709999784997359,0.0)(6.9073527198502225,0.0)(7.1047056547030865,0.0)(7.3020585895559496,0.0)(7.4994115244088135,0.0)(7.696764459261677,0.0)(7.894117394114541,0.0)(8.091470328967404,0.0)(8.288823263820268,0.0)(8.486176198673132,0.0)(8.683529133525994,0.0)(8.880882068378858,0.0)(9.078235003231722,0.0)(9.275587938084586,0.0)(9.472940872937448,0.0)(9.670293807790312,5.0)
};
\legend{Eigenvalues of ${\bf K}$}
\end{axis}
\end{tikzpicture}
&
\begin{tikzpicture}[font=\footnotesize]
    \renewcommand{\axisdefaulttryminticks}{4} 
    \tikzstyle{every major grid}+=[style=densely dashed]       
    \tikzstyle{every axis y label}+=[yshift=-10pt] 
    \tikzstyle{every axis x label}+=[yshift=5pt]
    \tikzstyle{every axis legend}+=[cells={anchor=west},fill=white,
        at={(0.98,0.98)}, anchor=north east, font=\scriptsize ]
    \begin{axis}[
      height=.25\textwidth,
      width=.31\textwidth,
      xmin=-1,
      ymin=0,
      xmax=11,
      ymax=65,
      yticklabels={},
      bar width=1pt,
      grid=none,
      ymajorgrids=false,
      scaled ticks=false,
      ]
      \addplot[area legend,ybar,mark=none,color=white,fill=red] coordinates{
(0,60)(0.19718317705585808,7.0)(0.3943663541117118,40.0)(0.5915495311675655,39.0)(0.7887327082234191,32.0)(0.9859158852792728,27.0)(1.1830990623351267,23.0)(1.3802822393909804,19.0)(1.577465416446834,15.0)(1.7746485935026877,13.0)(1.9718317705585413,10.0)(2.169014947614395,9.0)(2.366198124670249,7.0)(2.5633813017261025,5.0)(2.7605644787819563,5.0)(2.9577476558378097,3.0)(3.1549308328936636,2.0)(3.3521140099495175,0.0)(3.549297187005371,0.0)(3.7464803640612248,0.0)(3.943663541117078,0.0)(4.1408467181729325,0.0)(4.338029895228786,0.0)(4.535213072284639,0.0)(4.732396249340494,0.0)(4.929579426396347,0.0)(5.1267626034522005,0.0)(5.323945780508054,0.0)(5.521128957563908,0.0)(5.718312134619762,0.0)(5.915495311675615,0.0)(6.112678488731469,0.0)(6.309861665787323,0.0)(6.507044842843176,0.0)(6.7042280198990305,0.0)(6.901411196954884,0.0)(7.098594374010737,0.0)(7.295777551066592,0.0)(7.492960728122445,0.0)(7.6901439051782985,0.0)(7.887327082234152,0.0)(8.084510259290004,0.0)(8.28169343634586,0.0)(8.478876613401713,0.0)(8.676059790457566,0.0)(8.87324296751342,0.0)(9.070426144569273,0.0)(9.267609321625127,0.0)(9.464792498680982,0.0)(9.661975675736835,5.0)
};
\legend{Eigenvalues of $\tilde{{\bf K}}$}
\end{axis}
\end{tikzpicture}
  \end{tabular}
\\
\begin{tikzpicture}[font=\footnotesize]
    \renewcommand{\axisdefaulttryminticks}{4} 
    \tikzstyle{every major grid}+=[style=densely dashed]       
    \tikzstyle{every axis y label}+=[yshift=-10pt] 
    \tikzstyle{every axis x label}+=[yshift=5pt]
    \tikzstyle{every axis legend}+=[cells={anchor=west},fill=white,
        at={(0.98,0.98)}, anchor=north east, font=\scriptsize ]
    \begin{axis}[
      height=.25\textwidth,
      width=.48\textwidth,
      xmin=0,
      ymin=-0.1,
      xmax=1024,
      ymax=0.1,
      yticklabels={},
      bar width=2pt,
      grid=none,
      ymajorgrids=false,
      scaled ticks=false,
      ]
\input{eigenvectorK1_gauss.tex}
\input{eigenvectorK3_gauss.tex}
\end{axis}
\end{tikzpicture} 
&
\begin{tikzpicture}[font=\footnotesize]
    \renewcommand{\axisdefaulttryminticks}{4} 
    \tikzstyle{every major grid}+=[style=densely dashed]       
    \tikzstyle{every axis y label}+=[yshift=-10pt] 
    \tikzstyle{every axis x label}+=[yshift=5pt]
    \tikzstyle{every axis legend}+=[cells={anchor=west},fill=white,
        at={(0.98,0.98)}, anchor=north east, font=\scriptsize ]
    \begin{axis}[
      height=.25\textwidth,
      width=.48\textwidth,
      xmin=0,
      ymin=-0.1,
      xmax=1024,
      ymax=0.1,
      yticklabels={},
      bar width=3pt,
      grid=none,
      ymajorgrids=false,
      scaled ticks=false,
      ]
\input{eigenvectorK1_student.tex}
\input{eigenvectorK3_student.tex}
\end{axis}
\end{tikzpicture}
  \end{tabular}
  
\caption{Eigenvalue distribution ({\bf TOP}) and eigenvector associated to the largest eigenvalue ({\bf BOTTOM}) of the expected and centered kernel matrix ${\bf K}$ (\textbf{\color{blue}{blue}}) versus its asymptotic equivalent $\tilde{{\bf K}}$ (\textbf{\color{red}{red}}) in Theorem~\ref{th2}, with $\sigma(t)=\max(t,0)$. ({\bf LEFT}) ${\bf W}$ having {\bf Gaussian} entries and ({\bf RIGHT}) ${\bf W}$ having {\bf Student-t} entries with $7$ degrees of freedom, for two-class GMM data with ${\bm \mu}_a=[{\bf 0}_{a-1};~4;~{\bf 0}_{p-a}], {\bf C}_a=(1+4(a-1)/\sqrt{p}){\bf I}_p,$ $p=512$ and $ n=2048$. }
\label{fig:hist_L_synthetic}
\end{figure}


In the following corollary, we exploit the universal result in Theorem~\ref{th2} to design computationally efficient Ternary Random Features (TRFs) by specifying the distribution of (the entries of) ${\bf W}$ to symmetric Bernoulli and $\sigma$ to a ternary function, so as to obtain a limiting kernel ${\bf K}^{ter}$ that is asymptotically equivalent to \emph{any} random features kernel matrix ${\bf K}$ of the form (\ref{eq:def_K}), for the high-dimensional GMM data under study. This is proved in Section~\ref{sec:proofCor1} of the appendix.

\begin{corollary}[Ternary Random Features]
\label{cor1}
For a given random features kernel matrix ${\bf K}$ of the form (\ref{eq:def_K}) with ${\bf W}$ and nonlinear $\sigma$ satisfying Assumptions~\ref{ass:growth_rate}-\ref{ass:sigma}, with associated generalized Gaussian moments $d_0, d_1,$ $d_2$ defined in Theorem~\ref{th2}, let $\sigma^{ter}$ be defined in (\ref{eq:sigmaTer}) with $s_{-}=\hat s_-$, $s_{+}=\hat s_+$, and $\hat{s}_{-},$ $\hat{s}_{+}$ satisfying the following equations
\begin{equation}
\label{eq:thresholds}
    d_1  = \frac1{\pi^2} \left( e^{- \hat{s}_{+}^2/\tau}+e^{-\hat{s}_{-}^2/\tau}\right)^2 ,
    \quad d_2  = \frac1{2 \pi \tau^3} \left( \hat s_+e^{-\hat{s}_{+}^2/\tau} +\hat s_{-} e^{-\hat{s}_{-}^2/\tau}\right)^2.
\end{equation}
Define the Ternary Random Features matrix ${\bm \Sigma}^{ter} = \sigma^{ter} ({\bf W}^{ter} {\bf X})$ with ${\bf W}^{ter}$ defined in~(\ref{eq:w}) having sparsity level $\epsilon$, the associated Gram matrix ${\bf G}^{ter} = \frac1m ({\bm \Sigma}^{ter})^{\sf T} {\bm \Sigma}^{ter} $ as in (\ref{eq:def_Gram}), and the limiting kernel
\begin{equation}\label{eq:K-ter}
    {\bf K}^{ter} \triangleq {\bf P} \{ \kappa^{ter} ({\bf x}_i, {\bf x}_j)\}_{i,j=1}^n {\bf P}
\end{equation}
for $\kappa^{ter}$ defined in~(\ref{eq:def_kernel_ter}). Then, there exists $\lambda \in \mathbb{R}$ such that\footnote{The parameter $\lambda$ characterizes the possibly different $d_0$ between ${\bf K}$ and ${\bf K}^{ter}$, see details in Appendix~\ref{sec:proofCor1}.} $\|{\bf K}-{\bf K}^{ter}-\lambda{\bf P}\| \to 0 $ almost surely as $n\to \infty$.
\end{corollary}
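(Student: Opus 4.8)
The plan is to obtain Corollary~\ref{cor1} as a two-fold application of Theorem~\ref{th2}: once to the target kernel ${\bf K}$ and once to the ternary kernel ${\bf K}^{ter}$, followed by a comparison of the two deterministic equivalents. First I would verify that the pair $(\sigma^{ter}, {\bf W}^{ter})$ is admissible, i.e.\ satisfies Assumptions~\ref{ass:W}~and~\ref{ass:sigma} so that Theorem~\ref{th2} applies to ${\bf K}^{ter}$ as well. For ${\bf W}^{ter}$ with entries (\ref{eq:w}), a direct computation gives $\mathbb{E}[w^{ter}]=0$ by symmetry, $\mathbb{E}[(w^{ter})^2]=(1-\epsilon)\cdot(1-\epsilon)^{-1}=1$, and $\mathbb{E}[(w^{ter})^4]=(1-\epsilon)^{-1}<\infty$ for every $\epsilon\in[0,1)$, so Assumption~\ref{ass:W} holds. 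For $\sigma^{ter}$ in (\ref{eq:sigmaTer}), which is piecewise constant with jumps at $s_-$ and $s_+$, I would use the distributional reading of Assumption~\ref{ass:sigma} (Remark~\ref{rem:activation}): its first and second distributional derivatives are, respectively, sums of Dirac masses at $s_-,s_+$ and sums of derivatives of such masses, and the Gaussian moments appearing in Assumption~\ref{ass:sigma} are integrals of these distributions against the smooth standard Gaussian density, hence finite.

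The heart of the argument is that (\ref{eq:thresholds}) are exactly the moment-matching conditions. I would compute the generalized Gaussian moments of $\sigma^{ter}$ by pairing its distributional derivatives with the $\mathcal N(0,\tau)$ density $p$, via $\int \delta_{s}\, p = p(s)$ and $\int \delta'_{s}\, p = -p'(s)$. This yields $\mathbb{E}[(\sigma^{ter})'(\sqrt{\tau}z)]$ as a sum of Gaussian weights at $s_-,s_+$ and $\mathbb{E}[(\sigma^{ter})''(\sqrt{\tau}z)]$ as the corresponding sum weighted by $s_\pm$; squaring (and inserting the factor $\tfrac14$ for $d_2$) expresses $d_1^{ter}$ and $d_2^{ter}$ as explicit functions of $(s_-,s_+)$, and (\ref{eq:thresholds}) are precisely the equations $d_1^{ter}(\hat s_-,\hat s_+)=d_1$ and $d_2^{ter}(\hat s_-,\hat s_+)=d_2$. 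Consequently, the deterministic equivalent $\tilde{{\bf K}}^{ter}$ furnished by Theorem~\ref{th2} for ${\bf K}^{ter}$ shares with the equivalent $\tilde{{\bf K}}$ of ${\bf K}$ the very same $d_1$- and $d_2$-weighted blocks in (\ref{EQ6}): both blocks are built from the \emph{same} GMM statistics (${\bf Z}, {\bf M}, {\bf J}, {\bf V}, {\bf A}$, and $\tau$), which depend only on the data ${\bf X}$ and not on the choice of $(\sigma, {\bf W})$. The two equivalents can therefore differ only in the coefficient $d_0$ multiplying ${\bf I}_n$.

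With the two structured parts identical, the difference collapses to the scalar term. Using idempotency ${\bf P}^2={\bf P}$,
\[
\tilde{{\bf K}}-\tilde{{\bf K}}^{ter} = {\bf P}\bigl((d_0-d_0^{ter}){\bf I}_n\bigr){\bf P} = (d_0-d_0^{ter})\,{\bf P},
\]
so, setting $\lambda \triangleq d_0 - d_0^{ter}$, this difference equals $\lambda{\bf P}$ exactly. The claim then follows from the triangle inequality
\[
\|{\bf K}-{\bf K}^{ter}-\lambda{\bf P}\| \le \|{\bf K}-\tilde{{\bf K}}\| + \|{\bf K}^{ter}-\tilde{{\bf K}}^{ter}\| + \|\tilde{{\bf K}}-\tilde{{\bf K}}^{ter}-\lambda{\bf P}\|,
\]
where the first two terms tend to $0$ almost surely by Theorem~\ref{th2} and the third is identically $0$.

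I expect the main obstacle to be the moment computation for the discontinuous $\sigma^{ter}$: justifying the distributional-derivative manipulation so that $d_1^{ter}$ and $d_2^{ter}$ are well defined and given by the displayed Gaussian integrals requires the regularization of Remark~\ref{rem:activation} rather than a naive pointwise derivative. A secondary point is the solvability of (\ref{eq:thresholds}): one should check that, for the range of $(d_1,d_2)$ produced by any $\sigma$ meeting Assumption~\ref{ass:sigma}, a pair $(\hat s_-,\hat s_+)$ with $\hat s_-<\hat s_+$ indeed exists, which reduces to a monotonicity/surjectivity analysis of the map $(s_-,s_+)\mapsto(d_1^{ter},d_2^{ter})$.
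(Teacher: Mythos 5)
Your proposal is correct and is precisely the argument the paper intends: the paper offers no separate proof of Corollary~\ref{cor1}, treating it as an immediate consequence of Theorem~\ref{th2} applied once to $(\sigma,{\bf W})$ and once to $(\sigma^{ter},{\bf W}^{ter})$ (after checking Assumptions~\ref{ass:W}~and~\ref{ass:sigma} for the ternary pair, exactly as you do), with the moment-matching conditions forcing the two deterministic equivalents to share their $d_1$- and $d_2$-weighted blocks, so that ${\bf P}\left((d_0-d_0^{ter}){\bf I}_n\right){\bf P}=\lambda{\bf P}$ is the only residue and the triangle inequality finishes the proof. One remark for when you actually carry out the Dirac-mass computation: calibrating the paper's conventions against Table~\ref{tab:ds} (e.g.\ ${\rm sign}(t)$ or $1_{t>0}$), one finds $d_1^{ter}=\frac{1}{2\pi\tau}\left(e^{-\hat s_-^2/(2\tau)}+e^{-\hat s_+^2/(2\tau)}\right)^2$ and $d_2^{ter}=\frac{1}{8\pi\tau^3}\left(\hat s_- e^{-\hat s_-^2/(2\tau)}+\hat s_+ e^{-\hat s_+^2/(2\tau)}\right)^2$, whose prefactors and exponents disagree with the printed equation~(\ref{eq:thresholds}); this is a typo-level inconsistency in the paper's statement rather than a flaw in your reasoning, and your proof goes through verbatim once (\ref{eq:thresholds}) is read as the conditions $d_1^{ter}(\hat s_-,\hat s_+)=d_1$ and $d_2^{ter}(\hat s_-,\hat s_+)=d_2$.
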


Note that the system of equations in (\ref{eq:thresholds}) defining $\hat{s}_{-}$ and $\hat{s}_{+}$ is a function of the key parameter $\tau = \tr {\bf C}^{\circ} /p $ defined in Assumption~\ref{ass:growth_rate}, which can be consistently estimated from the data; see Algorithm~\ref{alg:ternary_kernel} below and a proof in Lemma~\ref{lm:lemma1} of the appendix (the intuition of which follows from the fact that $\| {\bf x}_i \|^2 = \| {\bm \mu}_a \|^2/p + 2 {\bm \mu}_a^{\sf T} {\bf z}_i/\sqrt p + \| {\bf z}_i \|^2 = \mathbb E[\tr ({\bf z}_i {\bf z}_i^{\sf T}) ] + O(p^{-1/2}) $ according to (\ref{eq:GMM}) and Assumption~\ref{ass:growth_rate}). This makes the proposed TRFs and its limiting kernel ${\bf K}^{ter}$ data-statistics-dependent. Yet, the system of equations~(\ref{eq:thresholds}) does not have a closed-form solution and might have multiple solutions on the real line; we practically solve it using a numerical least squares method, by gradually enlarging the search range (from say $[-1,1]$) until a solution is found (the time complexity of which is independent of the dimension $n,p$ and it is observed in the experiments in Section~\ref{sec:experiments} to converge in a few iterations). The details of the proposed TRF approach are described in Algorithm~\ref{alg:ternary_kernel}.





\begin{algorithm}
 \caption{Ternary Random Features}
 \label{alg:ternary_kernel}
 \begin{algorithmic}
     \STATE {{\bfseries Input:} Data ${\bf X}$ and level of sparsity $\epsilon \in [0,1)$. }
     \STATE {{\bfseries Output:} Ternary Random Features ${\bm \Sigma}^{ter}$ and Gram matrix ${\bf G}^{ter}$.}
     \STATE Estimate $\tau$ as $\hat{\tau} = \frac1n \sum_{i=1}^n \|{\bf x}_i\|^2$.
     \STATE Solve for thresholds $\hat{s}_{-},$ $\hat{s}_{+}$ using~(\ref{eq:thresholds}), which defines $\sigma^{ter} $ via (\ref{eq:sigmaTer}).
     \STATE Construct a random matrix ${\bf W}^{ter} \in \mathbb{R}^{m \times p}$ having i.i.d.\@ entries distributed according to~(\ref{eq:w}).
     \STATE Compute ${\bm \Sigma}^{ter} = \sigma^{ter} ({\bf W}^{ter} {\bf X})$ and then TRFs Gram matrix ${\bf G}^{ter} = \frac1m ({\bm \Sigma}^{ter})^{\sf T} {\bm \Sigma}^{ter} $ as in (\ref{eq:def_Gram}).
     \end{algorithmic}
 \end{algorithm}

\paragraph{Computational and storage complexity}
For ${\bf W} \in \mathbb{R}^{m \times p}$ a random matrix with i.i.d.\@ $\mathcal N(0,1)$ entries and ${\bf W}^{ter} \in \mathbb{R}^{m \times p}$ with i.i.d.\@ entries satisfying~(\ref{eq:w}) with sparsity level $\epsilon \in [0,1)$, let ${\bf G}= \frac1m \sigma{({\bf W}{\bf X})^{\sf T}}\sigma{({\bf W}{\bf X})}$ for some given smooth function $\sigma$ (e.g., sine and cosine in the case of random Fourier features) and ${\bf G}^{ter}= \frac1m  \sigma^{ter}({\bf W}^{ter}{\bf X})^{\sf T}\sigma^{ter}({\bf W}^{ter}{\bf X})$ with data matrix ${\bf X} \in \mathbb{R}^{p \times n}.$ It is then beneficial to use ${\bf G}^{ter}$ instead of ${\bf G}$ as the computation of $\sigma{({\bf W}{\bf X})}$ requires $O(mnp)$ multiplications and $O(mnp)$ additions, while the computation of $\sigma^{ter}({\bf W}^{ter}{\bf X})$ requires no multiplication and only $O((1-\epsilon) mnp)$ additions. In terms of storage, it requires a factor of $b=32$ times more bits to store $\sigma{({\bf W}{\bf X})}$ when computing ${\bf G}$ compared to storing $\sigma^{ter}({\bf W}^{ter}{\bf X})$ when computing ${\bf G}^{ter}$ (assuming full precision numbers are stored using $b=32$ bits).

The computationally and storage efficient TRFs $\sigma^{ter} ({\bf W}^{ter} {\bf X})$ can then be used instead of the ``expensive'' random features $\sigma ({\bf W X})$ and lead (asymptotically) to the same performance as the latter on downstream tasks, at least for GMM data, according to Theorem~\ref{th2}. In the following section, we  provide empirical results showing that (i) this ``performance match'' between TRFs and random features of the form (\ref{eq:def_RF_kernel}) is \emph{not} limited to Gaussian data and empirically holds when applied on popular real-world datasets such as MNIST \citep{lecun1998gradient}, some UCI ML datasets, as well as DNN-features of CIFAR10 data in Section~\ref{subsec:appendix_experiments} of the appendix; and (ii) due to the competitive performance of TRFs with respect to standard random features approaches, when compared to state-of-the-art random feature compression/quantization techniques (for which a ``performance-complexity tradeoff'' generally arises, that is, as one compresses more the original random features, the performance decays), TRFs yield significantly better performances for a given storage/computational budget.


\section{Experiments}
\label{sec:experiments}
The experiments in this section and Section~\ref{subsec:appendix_experiments} of the appendix are performed on a Ubuntu $18.04$ machine with Intel(R) Core(TM) i9-9900X CPU @ 3.50GHz and $64$ GB of RAM.
\subsection{TRFs match the performance of popular kernels with less budget}
\label{subsec:experiments1}
We first consider ridge regression with random features Gram matrix ${\bf G}$ on
MNIST data 
in Figure~\ref{fig:compare_rff_ternary-MNIST}. We compare RFFs with $\sigma(t)=[\cos(t),~\sin(t)]$ and Gaussian $W_{ij}\sim \mathcal N(0,1)$ to the proposed TRF method with $\sigma^{ter}(t)$ in (\ref{eq:sigmaTer}) and ternary random projection matrix ${\bf W}^{ter}$ defined in~(\ref{eq:w}), with different sparsity levels $\epsilon$. The thresholds $s_{-},s_{+}$ of $\sigma^{ter}$ are tuned in such away that the generalized Gaussian moments $d_1$ and $d_2$ match with those of RFFs,\footnote{For given ${\bf x}_i, {\bf x}_j$, we use $[\cos({\bf Wx}_i),~\sin({\bf Wx}_j)]$ as the random Fourier features so that the $(i,j)$ entry of the corresponding random features Gram matrix is $\cos({\bf Wx}_i)^{\sf T}\cos({\bf Wx}_j)+\sin({\bf Wx}_i)^{\sf T}\sin({\bf Wx}_j)$~\citep{rahimi2008random}. The generalized Gaussian moments of the RFFs are thus the sum of the $d_1$'s and $d_2$'s corresponding to $\sin$ and $\cos$ functions. Note that the $d_0$'s of RFFs and TRFs may be different.} as described in Corollary~\ref{cor1}~and~Algorithm~\ref{alg:ternary_kernel}. Figure~\ref{fig:compare_rff_ternary-MNIST} displays the test mean squared errors as a function of the regularization parameter $\gamma$ for our TRF method with different sparsity levels $\epsilon$ compared to the RFF method, as well as to the baseline of Kernel Ridge Regression (KRR) using Gaussian kernel. Note that despite $90\%$ sparsity in the projection matrix ${\bf W}$, virtually no performance loss is incurred compared with RFFs. This is further confirmed in the right hand side of Figure~\ref{fig:compare_rff_ternary-MNIST} which shows the gains in running time\footnote{The running time is taken as the total clock-time of the whole ridge regression solver including the random features calculation.} when using TRFs.
These experiments show that the proposed TRF approach yields similar performance as popular random features such as RFFs, with a significant reduction in terms of computation and storage.

\begin{figure}
\centering
\begin{tabular}{cc}
\vspace{-0.5cm}
\begin{tikzpicture}[scale=.8, spy using outlines=
{circle, magnification=2.5, connect spies}]
  \pgfplotsset{every major grid/.style={style=densely dashed}}
    		\begin{axis}[no markers, legend pos = north west, xmode = log, 
            width=.5\linewidth,
        ,grid=major,xlabel={$\gamma$},ylabel={MSEs},xmin=1e-2,xmax=10000,ymin=0,ymax=1.0,     xtick={1e-2, 1e-1, 1, 100, 10000},
    xticklabels={$10^{-2}$, $10^{-1}$, $1$, $10^{2}$, $10^4$}]
    \addplot[thick, magenta]coordinates{
(0.01,0.1309)(0.01778279410038923,0.13096)(0.03162277660168379,0.13095)(0.05623413251903491,0.13091)(0.1,0.1308)(0.1778279410038923,0.1307)(0.31622776601683794,0.1305)(0.5623413251903491,0.1304)(1.0,0.13048)(1.7782794100389228,0.13092429900016874)(3.1622776601683795,0.13213831044605064)(5.623413251903491,0.1347289474543393)(10.0,0.13940041013941952)(17.78279410038923,0.14689025756841414)(31.622776601683793,0.15787881523573748)(56.23413251903491,0.1729958106420982)(100.0,0.19299251095199292)(177.82794100389228,0.21916429065879424)(316.22776601683796,0.2543303089878728)(562.341325190349,0.2862532548292963)(1000.0,0.3199977603186052)(1778.2794100389228,0.3724602700258764)(3162.2776601683795,0.4736033423059992)(5623.413251903491,0.5882780385206203)(10000.0,0.6827057456274537)
};
\addplot[thick, error bars/.cd, y dir=both,y explicit]coordinates{
(0.01,0.1506593464998466)+-(0.022168743561926253,0.022168743561926253)(0.01778279410038923,0.1506438066505016)+-(0.022168743561926253,0.022168743561926253)(0.03162277660168379,0.15061701289924687)+-(0.022168743561926253,0.022168743561926253)(0.05623413251903491,0.15057191780774934)+-(0.022168743561926253,0.022168743561926253)(0.1,0.1504992447209039)+-(0.022168743561926253,0.022168743561926253)(0.1778279410038923,0.1503910859087268)+-(0.022168743561926253,0.022168743561926253)(0.31622776601683794,0.1502532846177693)+-(0.022168743561926253,0.022168743561926253)(0.5623413251903491,0.15013339616322652)+-(0.022168743561926253,0.022168743561926253)(1.0,0.15016353330257168)+-(0.022168743561926253,0.022168743561926253)(1.7782794100389228,0.1506011636887789)+-(0.022168743561926253,0.022168743561926253)(3.1622776601683795,0.15185169776321186)+-(0.022168743561926253,0.022168743561926253)(5.623413251903491,0.15447574893567262)+-(0.022168743561926253,0.022168743561926253)(10.0,0.15916498176678529)+-(0.022168743561926253,0.022168743561926253)(17.78279410038923,0.1666579152371331)+-(0.022168743561926253,0.022168743561926253)(31.622776601683793,0.17763982696407377)+-(0.022168743561926253,0.022168743561926253)(56.23413251903491,0.19273643836496027)+-(0.022168743561926253,0.022168743561926253)(100.0,0.21269549979493924)+-(0.022168743561926253,0.022168743561926253)(177.82794100389228,0.23882564681174168)+-(0.022168743561926253,0.022168743561926253)(316.22776601683796,0.2739667463812602)+-(0.022168743561926253,0.022168743561926253)(562.341325190349,0.3238795768206397)+-(0.0434567,0.0434567)(1000.0,0.39660177951029996)+-(0.0434567,0.0434567)(1778.2794100389228,0.4960227120137166)+-(0.0434567,0.0434567)(3162.2776601683795,0.6131445757411381)+-(0.0434567,0.0434567)(5623.413251903491,0.7278578996330463)+-(0.0434567,0.0434567)(10000.0,0.822374445629078)+-(0.0434567,0.0434567)
};
\addplot[blue,thick, error bars/.cd, y dir=both,y explicit]coordinates{(0.01,0.19288590099459973)+-(0.006,0.006)(0.01778279410038923,0.19289848793978892)+-(0.006,0.006)(0.03162277660168379,0.1929208566690565)+-(0.006,0.006)(0.05623413251903491,0.1929605890890941)+-(0.006,0.006)(0.1,0.1930311009478674)+-(0.006,0.006)(0.1778279410038923,0.19315603804499074)+-(0.006,0.006)(0.31622776601683794,0.1933767859422465)+-(0.006,0.006)(0.5623413251903491,0.19376487026494982)+-(0.006,0.006)(1.0,0.19444112038919115)+-(0.006,0.006)(1.7782794100389228,0.19560144381305394)+-(0.006,0.006)(3.1622776601683795,0.19754121775726843)+-(0.006,0.006)(5.623413251903491,0.2006540260403631)+-(0.006,0.006)(10.0,0.20536962704749673)+-(0.006,0.006)(17.78279410038923,0.21203165021075526)+-(0.006,0.006)(31.622776601683793,0.22080132087184792)+-(0.006,0.006)(56.23413251903491,0.23169983134718816)+-(0.006,0.006)(100.0,0.24479726163372134)+-(0.006,0.006)(177.82794100389228,0.2604907379352995)+-(0.006,0.006)(316.22776601683796,0.2800327795008968)+-(0.006,0.006)(562.341325190349,0.30665413863601454)+-(0.006,0.006)(1000.0,0.34688603089695674)+-(0.006,0.006)(1778.2794100389228,0.409796033545309)+-(0.006,0.006)(3162.2776601683795,0.5007197828132789)+-(0.006,0.006)(5623.413251903491,0.6121727725561683)+-(0.006,0.006)(10000.0,0.724465749419077)+-(0.006,0.006)};
\addplot[red, thick, error bars/.cd, y dir=both,y explicit]coordinates{(0.01,0.2037130517815488)+-(0.008,0.008)(0.01778279410038923,0.20372747645130138)+-(0.008,0.008)(0.03162277660168379,0.20375310549284642)+-(0.008,0.008)(0.05623413251903491,0.20379861148656414)+-(0.008,0.008)(0.1,0.20387931448809204)+-(0.008,0.008)(0.1778279410038923,0.20402213740699615)+-(0.008,0.008)(0.31622776601683794,0.2042739592484555)+-(0.008,0.008)(0.5623413251903491,0.2047150791253907)+-(0.008,0.008)(1.0,0.20547909894640498)+-(0.008,0.008)(1.7782794100389228,0.20677724161858319)+-(0.008,0.008)(3.1622776601683795,0.20891543460752424)+-(0.008,0.008)(5.623413251903491,0.21227716524495235)+-(0.008,0.008)(10.0,0.21724551576011789)+-(0.008,0.008)(17.78279410038923,0.22409126069796864)+-(0.008,0.008)(31.622776601683793,0.23293296881752393)+-(0.008,0.008)(56.23413251903491,0.24385386359862923)+-(0.008,0.008)(100.0,0.25712874388464113)+-(0.008,0.008)(177.82794100389228,0.27356804358081116)+-(0.008,0.008)(316.22776601683796,0.29531478754052426)+-(0.008,0.008)(562.341325190349,0.3272309917485388)+-(0.008,0.008)(1000.0,0.37757272570859424)+-(0.008,0.008)(1778.2794100389228,0.45463618104322434)+-(0.008,0.008)(3162.2776601683795,0.5578851183772701)+-(0.008,0.008)(5623.413251903491,0.6723384094907487)+-(0.008,0.008)(10000.0,0.7768913437984428)+-(0.008,0.008)};
\addplot[green,thick, error bars/.cd, y dir=both,y explicit]coordinates{(0.01,0.21440958070552368)+-(0.0046,0.0046)(0.01778279410038923,0.21442358373144788)+-(0.0046,0.0046)(0.03162277660168379,0.2144484674294574)+-(0.0046,0.0046)(0.05623413251903491,0.2144926619269801)+-(0.0046,0.0046)(0.1,0.21457107588991123)+-(0.0046,0.0046)(0.1778279410038923,0.21470995991116945)+-(0.0046,0.0046)(0.31622776601683794,0.21495516778710555)+-(0.0046,0.0046)(0.5623413251903491,0.21538562936626068)+-(0.0046,0.0046)(1.0,0.21613355979862608)+-(0.0046,0.0046)(1.7782794100389228,0.21740952501110158)+-(0.0046,0.0046)(3.1622776601683795,0.21951961674523984)+-(0.0046,0.0046)(5.623413251903491,0.2228450896528052)+-(0.0046,0.0046)(10.0,0.22776359839267535)+-(0.0046,0.0046)(17.78279410038923,0.2345727117131327)+-(0.0046,0.0046)(31.622776601683793,0.24354349658131252)+-(0.0046,0.0046)(56.23413251903491,0.2551122148779931)+-(0.0046,0.0046)(100.0,0.2701581664372447)+-(0.0046,0.0046)(177.82794100389228,0.2906101959502425)+-(0.0046,0.0046)(316.22776601683796,0.3206285926084621)+-(0.0046,0.0046)(562.341325190349,0.3675648774768603)+-(0.0046,0.0046)(1000.0,0.43984393104477254)+-(0.0046,0.0046)(1778.2794100389228,0.5390945721474717)+-(0.0046,0.0046)(3162.2776601683795,0.6528815853236475)+-(0.0046,0.0046)(5623.413251903491,0.7602736224765446)+-(0.0046,0.0046)(10000.0,0.8458230738474256)+-(0.0046,0.0046)};
\coordinate (spypoint) at (axis cs:30,0.23);
\coordinate (magnifyglass) at (axis cs:25,0.55);
\legend{Baseline (KRR), RFF, $\epsilon=0.1$ (TRF), $\epsilon=0.5$ (TRF), $\epsilon=0.9$ (TRF)};
\end{axis}
\spy [blue, size=1.5cm] on (spypoint) in node[fill=white] at (magnifyglass);
    \end{tikzpicture}   & \begin{tikzpicture}[scale=.8]
    		\begin{axis}[legend pos = north east, 
            width=.5\linewidth,
        ,grid=major,xlabel={$\epsilon$},ylabel={Running time (s)},xmin=0.1,xmax=0.9,ymin=15,ymax=2550, xtick={0.1, 0.3, 0.5, 0.7, 0.9},
    xticklabels={$0.1$, $0.3$, $0.5$, $0.7$, $0.9$}]
    \addplot[magenta, mark size=4pt, thick]coordinates{(0.1,2510)(0.5, 2510)(0.9, 2510)
};
    			            \addplot[mark size=4pt, thick]coordinates{(0.1,1952)(0.5, 1952)(0.9, 1952)
};
    			            \addplot[mark size=4pt,red,mark=square,thick]coordinates{
    			            (0.1,900)(0.5,846)(0.9,635)
};
\legend{Baseline (KRR), RFF, TRF};
    		\end{axis}
    \end{tikzpicture}
\end{tabular}

\caption{Testing mean squared errors (MSEs with $\pm 1$ std, \textbf{LEFT}) and running time (\textbf{RIGHT}) of random features ridge regression as a function of regularization parameter $\gamma,$ $p=512, n=1024, n_{test}=512, m=5.10^4.$ With ${\bf W}$ distributed according to~(\ref{eq:w}) and $\epsilon=[0.1,~0.5,~0.9]$, 
versus RFFs and KRR on a $2$-class MNIST dataset -- digits $(7,9)$. We find that $d_0=0.44$ for TRFs and $d_0=0.39$ for RFFs, making $\lambda$ in Corollary~\ref{cor1} small. Results averaged over $5$ independent runs. }
\label{fig:compare_rff_ternary-MNIST}
\end{figure}
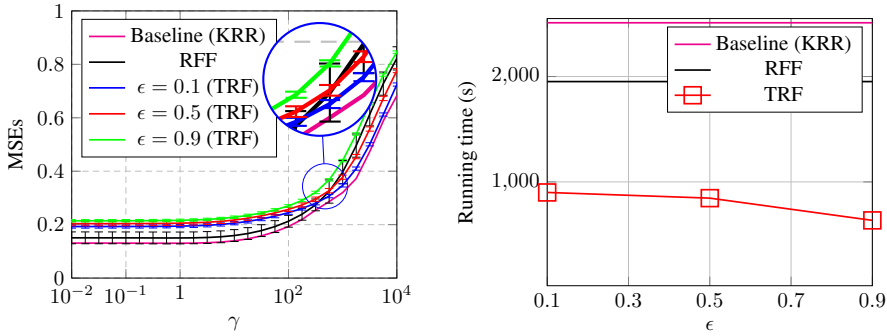

\subsection{Computational and storage gains -- Comparisons to state-of-the-art}

In this section, we compare TRFs with state-of-the-art quantized or non-quantized random features methods, for both random features-based logistic and ridge regressions. Specifically, in Figure~\ref{fig:logistic-covtype-different-kernels}, we compare logistic regression performance of TRFs versus the Low Precision Random Fourier Features (LP-RFFs) proposed by \citet{zhang2019low}, on the Cov-Type dataset from UCI ML repo. 
As in Section~\ref{subsec:experiments1}, the TRFs are tuned to match the limiting Gaussian kernel of RFFs.
For a single datum ${\bf x} \in \mathbb{R}^{p}$, the associated TRFs $\sigma^{ter} ( {\bf W}^{ter} {\bf x}) \in \mathbb{R}^m$ use $m$ bits for storage while LP-RFFs use $8m$ bits. We follow the same protocol as in~\citep{zhang2019low} and use SGD with mini-batch size $250$ in training logistic regressor and $\{1, 8\}$ bits precision for the LP-RFF approach.
Figure~\ref{fig:logistic-covtype-different-kernels} compares the logistic regression test accuracy as a function of the total memory budget for LP-RFF ($1$ bit and $8$ bits) and the Nyström approximation of Gaussian kernel matrices (using $32$ and $16$ bits)~\citep{williams2001using} (see also Table~$1$ in~\citep{zhang2019low}), versus the proposed TRFs approach. As seen from the shift in the x-axis (memory), by using $8\times$ less memory than LP-RFFs and $32\times$ or $16 \times$ less memory than the Nyström method, TRFs achieve a superior generalization performance on the logistic regression task. Similar comparisons are performed in Figure~\ref{fig:regression-census-different-kernels} on a random features ridge regression task for the Census dataset~\citep{rahimi2008random}, confirming that TRFs outperform alternative approaches in terms of test mean square errors, with a significant save in memory.

\begin{figure}[thb]
\centering
\begin{tikzpicture}[font=\footnotesize][scale=0.8]
\pgfplotsset{every major grid/.style={style=densely dashed}}
    		\begin{axis}
        [width=0.7\textwidth,height=0.35\textwidth,
        xmode = log, legend pos = north east, grid=major,xlabel={Memory (bits)},ylabel={MSEs}, xmin=28000,xmax=247680000,ymin=0.004,ymax=0.175,   xtick={100000, 1000000, 10000000, 100000000}, ytick={0.05, 0.1, 0.15},
    xticklabels={$10^5$, $10^6$, $10^7$, $10^8$},
    yticklabels={$0.05$, $0.1$, $0.15$},
    ]
 \addplot[mark size=4pt, dashed, mark=triangle*,thick]coordinates{            
(28300,0.145)(283000, 0.1240)(849000,0.0930)(2830000,0.0898)(8490000,0.0767)
};
 \addplot[mark size=4pt, mark=star,thick]coordinates{            
(206400,0.0718)(2064000,0.0588)(6192000,0.0477)(20640000,0.0371)(61920000,0.0261)
};
 \addplot[mark size=4pt, red, mark=star,thick]coordinates{   
(825600,0.0591)(8256000,0.0561)(24768000 ,0.0440)(82560000,0.0320)(247680000,0.0250)
};
 \addplot[mark size=4pt, green, mark=triangle,thick]coordinates{   
(412800,0.12)(4128000,0.08)(12384000 ,0.0530)(41280000,0.0420)(123840000,0.0350)
};
 \addplot[mark size=4pt, blue, mark=star,thick]coordinates{            
(28300,0.055)(283000,0.0519)(849000,0.0337)(2830000,0.0253)(8490000,0.0155)
};

\legend{LP-RFF ($1$ bit),LP-RFF ($8$ bit), Nyström ($32$ bits),  Nyström ($16$ bits), Proposed TRF };
    		\end{axis}
    		\end{tikzpicture}
    \caption{Test mean square errors of ridge regression on quantized random features for different number of features $m \in \{5.10^2, 10^3, 5.10^3,10^4, 5.10^4\}$, using LP-RFF~\citep{zhang2019low}, Nyström approximation~\citep{williams2001using}, versus the proposed TRF approach, on the Census dataset, with  $n=16\,000$ training samples, $n_{test}=2\,000$ test samples, and data dimension $p=119$. }
\label{fig:regression-census-different-kernels}
\end{figure}
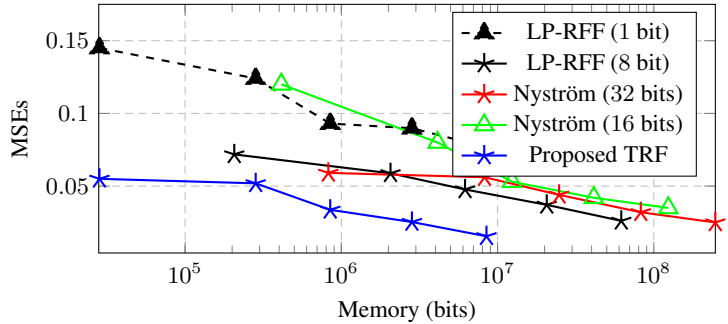

\section{Conclusion}\label{sec:conclusion}
Our large dimensional spectral analysis of the random features kernel matrix ${\bf K}$ reveals that its spectral properties only depend on the nonlinear activation through the corresponding generalized Gaussian moments and are universal
with respect to zero-mean and unit-variance random projection vectors. 
This allows us to design the new TRF approach which turns both the random weights ${\bf W}$ and the activations $\sigma({\bf WX})$ into ternary integers, thereby allowing to only perform addition operations and to only store $1$ bit for the activations. This drastically saves the storage and computation of random features while preserving the performances on downstream tasks with respect to their counterpart expensive kernels. 
Our article comes along with~\citep{couillet2021two} as first steps in re-designing machine learning algorithms using Random Matrix Theory, in order to be able to perform computations on massive data using desktop computers  instead of relying on energy consuming giant servers. 

\section*{Acknowledgement}

Z~L would like to acknowledge the National Natural Science Foundation of China (NSFC-12141107), the Fundamental Research Funds for the Central Universities of China (2021XXJS110), and CCF-Hikvision Open Fund (20210008) for providing partial support. R~C would like to acknowledge the MIAI LargeDATA chair (ANR-19-P3IA-0003) at University Grenobles-Alpes, the HUAWEI LarDist project, as well as the ANR DARLING project for providing partial support of this work.

\bibliography{iclr2022_conference}
\bibliographystyle{iclr2022_conference}

\appendix
\section{Appendix}

We provide detailed discussions on our working assumptions in Section~\ref{sec:notesAssumptions}, a consistent estimator of the key parameter $\tau$ in Section~\ref{sec:estimate_tau}, the proof of Theorem~\ref{th2} in Section~\ref{sec:proofTh1} and that of Corollary~\ref{cor1} in Section~\ref{sec:proofCor1}, as well as the generalized Gaussian moments for standard kernels arising from random features and neural net contexts in Section~\ref{sec:gaussian moments}. Additional numerical experiments are placed in Section~\ref{subsec:appendix_experiments}.

\subsection{Notes on the working Assumptions}
\label{sec:notesAssumptions}
For completeness, let us restate the working settings of the paper.

Let ${\bf x}_1,\cdots,{\bf x}_n \in \mathbb{R}^p$ be $n$ independent vectors belonging to one of $K$ distributional classes $\mathcal{C}_1, \cdots, \mathcal{C}_K.$ Class $a$ has cardinality $n_a$, and we assume that ${\bf x}_i$ follows a Gaussian Mixture Model (GMM), i.e., for ${\bf x}_i \in \mathcal{C}_a,$
\begin{equation}\label{eq:GMM_appendix}
    {\bf x}_i = {\bm \mu}_a/ \sqrt{p} + {\bf z}_i
\end{equation}
with ${\bf z}_i \sim \mathcal{N}({\bf 0}_p,{\bf C}_a/p)$ for some mean ${\bm \mu}_a \in \mathbb{R}^p$ and covariance ${\bf C}_a \in \mathbb{R}^{p\times p}$ associated to class $\mathcal{C}_a.$ 

We position ourselves in the non-trivial regime of high-dimensional classification as described by the following growth rate conditions.
\begin{assumption}[High-dimensional asymptotics]
\label{app:ass2}
As $n \to \infty$, we have (i) $p/n \to c \in (0, \infty)$ and $n_a/n \to c_a \in (0, 1)$; (ii) $\|{\bm \mu}_a\|=O(1)$; (iii) for ${\bf C}^{\circ}= \sum_{a=1}^K \frac{n_a}{n} {\bf C}_a$ and ${\bf C}_a^{\circ}={\bf C}_a-{\bf C}^{\circ}$, then $\|{\bf C}_a\|=O(1)$, $\tr({\bf C}_a^{\circ})=O(\sqrt{p})$ and $\tr ({\bf C}_a {\bf C}_b) = O(p)$ for $a,b\in \{1,\cdots,K\}$. We denote $\tau \triangleq \tr({\bf C}^{\circ})/p$ that is assumed to converge in $(0, \infty)$.
\end{assumption}

\paragraph{Beyond Gaussian mixtures}
While the theoretical results in this paper are derived for Gaussian mixture data under the non-trivial setting, we conjecture that they can be extended to a much broader family of data distributions beyond the Gaussian setting, e.g., to the so-called \emph{concentrated random vector} family \citep{seddik2020random}, under similar non triviality assumptions.

\begin{definition}[Concentrated vector]
Given a normed vector space $(\mathcal{X}, |\cdot|)$, and $q>0,$ a random vector ${\bf x} \in \mathcal{X}$ is said to be $q$-exponentially concentrated if for any $1$-Lipschitz function $\phi : \mathcal{X} \to \mathbb{R},$ there exists $C>0$ independent of $dim(\mathcal{X})$ and $\sigma>0$ such that for all $t\ge 0,$ 
\begin{align*}
    \mathbb{P} \left(\left|\phi({\bf x})-\mathbb{E}[\phi({\bf x})])\right|>t\right) \le Ce^{-(t/\sigma)^q}.
\end{align*}
\end{definition}

Multivariate Gaussian distributed ${\bf x} \sim \mathcal N(\mathbf{0}, \mathbf{I}_p)$ can be checked to belong to the family of concentrated random vectors. The major advantage of using concentrated random vectors for data modeling is that this concentration property is stable under \emph{Lipschitz} transformation, that is, for any $1$-Lipschitz map $f: \mathbb{R}^p \to \mathbb{R}^q$, if ${\bf x} \in \mathbb{R}^p$ is concentrated, so is ${\bf x}' = f({\bf x})$. Among the broad family of concentrated random vectors, it has been particularly shown in~\citep{seddik2020random} that artificial images generated by a Generative Adversarial Network (GAN) \citep{seddik2020random}, which look extremely close to real-world images (as they are designed to), are Lipshitz transformations of Gaussian vectors and can thus be, by definition, concentrated random vectors.

As a result, concentrated random vectors are more appropriate (than GMM for instance) in modeling realistic data, at least for those ``close'' to data generated by GANs. 

The same was shown experimentally for CNN representations of real images~\citep{seddik2019kernel, seddik2020random} as well as words embeddings in Natural Language Processing~\citep{couilletWord}. 


As for the random projector matrix, we assume that the matrix ${\bf W}$ has i.i.d.\@ entries of zero mean, unit variance and bounded fourth-order moment, with no restriction on their particular distribution as follows. 
\begin{assumption}[On random projection matrix]
\label{app:asW}
The random matrix ${\bf W}$ has i.i.d.\@ entries such that $\mathbb{E}[W_{ij}]=0$, $\mathbb{E}[W_{ij}^2]=1$ and $\mathbb{E}[W_{ij}^{4}]< \lambda_W$ for some constant $\lambda_W<\infty$.
\end{assumption}

\begin{remark}[On the i.i.d.\@ assumption of entries of ${\bf W}$]
In Assumption~\ref{app:asW} we consider the setting where the entries of ${\bf W}$ are independently and identically distributed: this is the case for the popular vanilla random Fourier features in \citep{rahimi2008random} and the arc-cosine kernels in a neural network context; but not the case for, e.g., \emph{data dependent} random features approaches such as leverage score based methods \citep{li2021towards}.
\end{remark}

We consider the family of activation functions $\sigma(\cdot)$ satisfying the following assumption.
\begin{assumption}[On activation function]
\label{app:AsSigma}
The function $\sigma$ is at least twice  differentiable (in the sense of distributions when applied to a random variable having a non-degenerate distribution function), with $\max\{ \mathbb{E} |\sigma(z)|, \mathbb{E} |\sigma^2(z)|, \mathbb{E} |\sigma'(z)|, \mathbb{E}|\sigma''(z)| \} < \lambda_\sigma$ for some constant $\lambda_\sigma<\infty$ and $z \sim \mathcal N(0,1)$.
\end{assumption}

\begin{remark}[Activation functions not differentiable everywhere]\label{rem:activation}
Some popular activation functions used in machine learning such as ReLu, Sign, Absolute value, etc., are not differentiable everywhere. For those functions, we will use a derivative in the sense of distributions~\citep{friedlander1998introduction}. A distribution $g$ is a continuous linear functional on the set $\mathcal{D}$ of infinitely differentiable functions with bounded support
\begin{align*}
    g: \mathcal{D} &\to \mathbb{R} \\
    \phi &\mapsto g(\phi) = \int_{-\infty}^{+\infty} g(x)\phi(x) \,\mathrm{d}x.
\end{align*}
The distributional derivative $g'(\phi)$ is defined such that $g'(\phi)=-g(\phi')$. In particular, we will be interested here in the expectation of the derivatives of the activation function with respect to the Gaussian measure i.e., $\int_{-\infty}^{+\infty} \sigma'(x)e^{-x^2/2} \,\mathrm{d}x.$ Following the previous definition, we have in this particular case
\begin{align*}
    \int_{-\infty}^{+\infty} \sigma'(x)e^{-x^2/2} \,\mathrm{d}x &= \int_{-\infty}^{+\infty} x\sigma(x)e^{-x^2/2} \,\mathrm{d}x
\end{align*}
which can be evaluated by some integration by parts. This is also refereed to the as ``weak derivative'' in the functional analysis literature \citep{stein2012functional}.
\end{remark}

\subsection{Auxiliary results and proofs}
\label{sec:estimate_tau}
\begin{lemma}[Consistent estimation of $\tau$]
\label{lm:lemma1}
Let Assumption~\ref{app:ass2} hold and define $\tau \triangleq \tr {\bf C}^{\circ}/p.$ Then as $n\to \infty,$ with probability $1,$
$$
\frac1n \sum_{i=1}^n \|{\bf x}_i\|^2 - \tau \to 0.
$$
\end{lemma}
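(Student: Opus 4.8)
The plan is to reduce the almost-sure statement to a first-and-second-moment computation, exploiting the explicit Gaussian structure of the GMM. First I would expand, for ${\bf x}_i \in \mathcal{C}_a$,
\begin{equation*}
    \|{\bf x}_i\|^2 = \frac{\|{\bm \mu}_a\|^2}{p} + \frac{2}{\sqrt{p}}{\bm \mu}_a^{\sf T}{\bf z}_i + \|{\bf z}_i\|^2,
\end{equation*}
and take expectations. Since $\mathbb{E}[{\bf z}_i]={\bf 0}$ and $\mathbb{E}[\|{\bf z}_i\|^2] = \tr({\bf C}_a/p) = \tr({\bf C}_a)/p$, the class-averaged mean becomes $\frac1n\sum_{i} \mathbb{E}\|{\bf x}_i\|^2 = \sum_{a=1}^K \frac{n_a}{n}\frac{\|{\bm \mu}_a\|^2}{p} + \sum_{a=1}^K \frac{n_a}{n}\frac{\tr {\bf C}_a}{p}$. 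Recalling ${\bf C}^{\circ}=\sum_a \frac{n_a}{n}{\bf C}_a$, the second sum is exactly $\tr({\bf C}^{\circ})/p = \tau$, while the first is bounded by $\max_a\|{\bm \mu}_a\|^2/p = O(1/p)\to 0$ by Assumption~\ref{app:ass2}(ii). Hence the mean of the empirical quantity converges to $\tau$.

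Next I would control the fluctuation $S_n - \mathbb{E}[S_n]$, where $S_n \triangleq \frac1n\sum_i\|{\bf x}_i\|^2$, by splitting each centered summand into a linear term $A_i = \frac{2}{\sqrt{p}}{\bm \mu}_a^{\sf T}{\bf z}_i$ and a quadratic term $B_i = \|{\bf z}_i\|^2 - \tr({\bf C}_a)/p$. For the linear part, $\Var(A_i) = \frac{4}{p}{\bm \mu}_a^{\sf T}({\bf C}_a/p){\bm \mu}_a \le 4\|{\bm \mu}_a\|^2\|{\bf C}_a\|/p^2 = O(1/p^2)$. For the quadratic part, the variance of a centered Gaussian quadratic form gives $\Var(B_i) = 2\tr(({\bf C}_a/p)^2) = 2\tr({\bf C}_a^2)/p^2$, which is $O(1/p)$ using $\tr({\bf C}_a{\bf C}_a)=O(p)$ from Assumption~\ref{app:ass2}(iii). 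Since the ${\bf z}_i$ are independent across $i$, and using $\Var(A_i+B_i)\le 2\Var(A_i)+2\Var(B_i)$, I obtain
\begin{equation*}
    \Var(S_n) = \frac{1}{n^2}\sum_{i=1}^n \Var(A_i+B_i) = \frac{1}{n^2}\cdot n\cdot O(1/p) = O\!\left(\frac{1}{np}\right).
\end{equation*}

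Finally, because $p/n \to c \in (0,\infty)$ gives $p \sim c\,n$, the variance is in fact $O(1/n^2)$, which is summable. I would then apply Chebyshev's inequality, $\mathbb{P}(|S_n - \mathbb{E}[S_n]|>\delta) \le \Var(S_n)/\delta^2 = O(1/n^2)$, and conclude via the Borel--Cantelli lemma that $S_n - \mathbb{E}[S_n] \to 0$ almost surely; combined with $\mathbb{E}[S_n]\to\tau$ this yields the claim. The main obstacle is the variance estimate of the quadratic term $B_i$: one must verify that the $\tr({\bf C}_a{\bf C}_b)=O(p)$ scaling is what makes $\Var(B_i)=O(1/p)$ rather than $O(1)$, so that the extra factor of $p$ in the denominator (from the high-dimensional regime) promotes mere $L^2$ convergence to summable tail probabilities and hence to almost-sure convergence. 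A minor accompanying subtlety is the usual triangular-array interpretation of the almost-sure statement, for which the summability of the Chebyshev bounds is precisely the ingredient needed.
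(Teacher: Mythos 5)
Your proposal is correct, and it follows the same decomposition as the paper's proof: split $\|{\bf x}_i\|^2$ into the deterministic mean contribution $\|{\bm \mu}_a\|^2/p$, the cross term $\frac{2}{\sqrt p}{\bm \mu}_a^{\sf T}{\bf z}_i$, and the quadratic term $\|{\bf z}_i\|^2$, then show the first is $O(p^{-1})$ and the random parts concentrate. The difference lies in how the dominant quadratic term is handled. The paper simply invokes the strong law of large numbers to get $\frac1n\sum_i\|{\bf z}_i\|^2 = \sum_a \frac{n_a}{n}\frac{\tr {\bf C}_a}{p} + o(1)$ almost surely (reserving Chebyshev plus Borel--Cantelli only for the cross term), whereas you run a unified second-moment argument: the Gaussian quadratic-form identity $\Var(\|{\bf z}_i\|^2) = 2\tr\left(({\bf C}_a/p)^2\right) = O(1/p)$, independence across $i$ giving $\Var(S_n) = O(1/(np)) = O(1/n^2)$ in the regime $p/n \to c$, and then Chebyshev plus Borel--Cantelli for the whole fluctuation at once. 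Your route is slightly longer but arguably more rigorous where it matters: since the law of each summand $\|{\bf z}_i\|^2$ changes with $p$, this is a triangular array rather than a fixed i.i.d.\@ sequence, so the classical SLLN cited by the paper does not literally apply; the summable Chebyshev bounds you establish are exactly the ingredient that justifies the almost-sure convergence in this setting, a subtlety you correctly flag at the end. The paper's version buys brevity; yours makes explicit where the assumptions $\|{\bf C}_a\| = O(1)$, $\tr({\bf C}_a{\bf C}_b) = O(p)$, and $p \propto n$ enter, and is self-contained. One cosmetic note: the exact identity ${\bf C}^{\circ} = \sum_a \frac{n_a}{n}{\bf C}_a$ makes the covariance part of $\mathbb{E}[S_n]$ equal to $\tau$ exactly, so (as in your write-up, and unlike the paper's slightly roundabout use of $\tr {\bf C}_a^{\circ} = O(\sqrt p)$) no additional approximation is needed there.
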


\begin{proof}[Proof of Lemma~\ref{lm:lemma1}]
From~\eqref{eq:GMM}, we have that
\begin{align}
\label{eq:lemma1}
    \frac1n \sum_{i=1}^n \|{\bf x}_i\|^2 & = \frac1n \sum_{a=1}^K\sum_{i=1}^n \frac1p \|\bm \mu_a\|^2 - \frac{2}{\sqrt{p}} {\bm \mu}_a^{\sf T}{\bf z}_i + \frac1n \sum_{i=1}^n \|{\bf z}_i\|^2.
\end{align}
From Assumption~\ref{app:ass2}, we have that $\frac1n \sum_{a=1}^K\sum_{i=1}^n \frac1p \|\bm \mu_a\|^2=O(p^{-1}).$ The second term of~\eqref{eq:lemma1} $\frac{2}{\sqrt{p}} {\bm \mu}_a^{\sf T}{\bf z}_i$ is a weighted sum of independent zero mean random variables; it thus vanishes with probability $1$ as $n,p \to \infty$ by a mere application of Chebyshev's inequality and the Borell Cantelli lemma. Finally, using the strong law of large numbers on the last term of ~\eqref{eq:lemma1}, we have almost surely, 
\begin{align*}
    \frac1n \sum_{i=1}^n \|{\bf z}_i\|^2 &= \sum_{a=1}^K \frac{n_a}{n} \tr {\bf C}_a + o(1)\\
    &= \sum_{a=1}^K \frac{n_a}{n} \frac1p \tr {\bf C}^{\circ} + o(1)
\end{align*}
where in the last line we use $\tr {\bf C}_a^{\circ} = O(\sqrt{p})$ from Assumption~\ref{app:ass2}, and thus $\frac1n \sum_{i=1}^n \|{\bf z}_i\|^2 - \tau \to 0$ almost surely. This concludes the proof.
\end{proof}

\subsection{Proof of Theorem~\ref{th2}}
\label{sec:proofTh1}

In the sequel, we will make use of Bachmann–Landau notations but specific for random variables in the asymptotic regime where $n\to \infty.$ For $x \equiv x_n$ a random variable and $u_n\geq 0,$ we write $x_n=O(u_n)$ if for any $\eta$ and $D>0,$ $n^D \mathbb{P}(x\geq n^\eta u_n) \to 0$ as $n\to \infty.$ For a vector ${\bf v}$ or a diagonal matrix with random entries, ${\bf v} = O(u_n)$ means that the maximal entry of ${\bf v}$ in absolute value is $O(u_n)$ in the sense defined above. When ${\bf M}$ is a square matrix, ${\bf M}=O(u_n)$ means that the operator norm of ${\bf M}$ is $O(u_n).$ For $x$ a random variable, $x=o(u_n)$ means that there exists $\kappa >0$ such that $x=O(n^{-\kappa}u_n).$ And the same definition for the $o(\cdot)$ notation applies for vectors and matrices having random entries.

Let us define ${\bf X}=[{\bf x}_1, \cdots, {\bf x}_n] \in \mathbb{R}^{p \times n}$ and define 
\begin{align*}
    {\bm \Sigma} &= \sigma({\bf WX})
\end{align*}
where ${\bf W}=[{\bf w}_1, \cdots, {\bf w}_m]^{\sf T} \in \mathbb{R}^{m \times p}$ with ${\bf W}$ satisfying Assumption~\ref{app:asW}, and $\sigma$ a function satisfying Assumption~\ref{app:AsSigma}. We consider the gram matrix
\begin{align*}
    {\bf G} = \frac1m {\bm \Sigma}^{\sf T}{\bm \Sigma}
\end{align*}
whose $(i,j)$ entry is given by \\
\[
G_{ij} = \frac1m \sum_{k=1}^m \sigma({\bf w}_k^{\sf T}{\bf x}_i)\sigma({\bf w}_k^{\sf T}{\bf x}_j).\]
We are interested in computing 
\begin{equation}\label{eq:appendix_def_kappa}
     \kappa({\bf x}_i,{\bf x}_j)=\mathbb{E}_{{\bf w} } [\sigma({\bf w}^{\sf T}{\bf x}_i)\sigma({\bf w}^{\sf T}{\bf x}_j)]
\end{equation}
under Assumptions~\ref{app:ass2}-\ref{app:AsSigma}.

Under Assumption~\ref{app:ass2}, we have
\begin{align}
\label{eq:xixj}
    {\bf x}_i^{\sf T}{\bf x}_j &= \underbrace{{\bf z}_i^{\sf T}{\bf z}_j}_{O(p^{-\frac12})} + \underbrace{{\bm \mu}_a^{\sf T}{\bm \mu}_b/p+{\bm \mu}_a^{\sf T}{\bf w}_j/\sqrt{p}+{\bm \mu}_b^{\sf T}{\bf w}_i/\sqrt{p}}_{O(p^{-1})}
\end{align}
and 
\begin{align}
\label{eq:xi2}
    \|{\bf x}_i\|^2 &= \underbrace{\tau}_{O(1)}+ \underbrace{ \tr({\bf C}_a^{\circ})/p+\Phi_i}_{O(p^{-\frac12})} + \underbrace{\|{\bm \mu}_a^2\|/p + 2{\bm \mu}_a^{\sf T}{\bf z}_i/\sqrt{p}}_{O(p^{-1})}
\end{align}
where $\phi_i \triangleq (\|{\bf z}_i\|^2-\mathbb{E}[\|{\bf z}_i\|^2]).$

It can be checked that, for random vector ${\bf w} \in \mathbb{R}^p$ having i.i.d.\@ entries with $\mathbb{E}[w_i]=0 $ and $ \mathbb{E}[w_i^2]=1$, we have, conditioned on ${\bf x}_i$, that $\mathbb{E}_{\bf w}[({\bf w}^{\sf T} {\bf x}_i)^2] = \| {\bf x}_i \|^2$ and
\begin{equation}
    \mathbb{E}_{\bf w}[({\bf w}^{\sf T} {\bf x}_i)^4] = (m_4 - 3) \| {\bf x}_i \|^2 + 2 \| {\bf x}_i \|^4.
\end{equation}
with $m_4=\mathbb{E}[ w_j^4] < \lambda_W$ according to Assumption~\ref{app:asW}.
From the trace lemma, \citep[Lemma~B.26]{bai2010spectral}, one has that 
\begin{equation}
    {\bf x}_i^{\sf T} {\bf x}_i - \frac1p \tr {\bf C}_a \to 0
\end{equation}
almost surely as $p \to \infty$, for ${\bf x}_i \in \mathcal{C}_a$. Thus, under Assumption~\ref{app:ass2} we have in particular ${\bf x}_i^{\sf T} {\bf x}_i - \tr {\bf C}^\circ/p \to 0$ holds almost surely, regardless of the class of ${\bf x}_i$, see the proof of Lemma~\ref{lm:lemma1}. It then follows from Lyapunov CLT (see, e.g., \citep[Theorem~27.3]{billingsley2012probability}) that, under Assumption~\ref{app:ass2}~and~\ref{app:asW}, $({\bf w}^{\sf T}{\bf x}_i, {\bf w}^{\sf T}{\bf x}_j)$ is asymptotically bivariate Gaussian. We can thus perform, in the large $p$ limit, a Gram-Schmidt orthogonalization procedure for some standard Gaussian variables $\xi_a,$ $\xi_b \sim \mathcal{N}(0,1).$ By construction, $\xi_a,$ $\xi_b$ are \emph{uncorrelated} and thus \emph{independent} by Gaussianity. Let us denote the shortcuts $\zeta_a={\bf w}^{\sf T}{\bf x}_i$ and $\zeta_b={\bf w}^{\sf T}{\bf x}_j.$ We thus have
\begin{align*}
    \zeta_a &\equiv {\bf w}^{\sf T}{\bf x}_i =  u_a \xi_a + o(1)\\
    \zeta_b &\equiv {\bf w}^{\sf T}{\bf x}_j = v_b \xi_a + u_b \xi_b + o(1)
\end{align*}
with
\begin{align*}
    u_a &= \|{\bf x}_i\| \\
    v_b &= \frac{{\bf x}_i^{\sf T}{\bf x}_j}{\|{\bf x}_i\|} \\
    u_b &= \sqrt{\|{\bf x}_j\|^2-\frac{({\bf x}_i^{\sf T}{\bf x}_j)^2}{\|{\bf x}_i\|^2}}.
\end{align*}
Since we have that $\|{\bf x}_i\|=\sqrt{\tau}+o(1)$ and $\|{\bf x}_j\|=\sqrt{\tau}+o(1)$ (from ~\eqref{eq:xi2}), we can perform a Taylor expansion of $\sigma(\zeta_a)$ (respectively of $\sigma(\zeta_b)$) around $\sqrt{\tau}\xi_a$ (resp. $\sqrt{\tau}\xi_b$) giving
\begin{align*}
    \sigma(\zeta_a) &= \sigma(\sqrt{\tau}\xi_a) + \sigma'(\sqrt{\tau}\xi_a)(\zeta_a-\sqrt{\tau}\xi_a)+ \frac{1}{2}\sigma''(\xi_a)(\zeta_a-\sqrt{\tau}\xi_a)^2 +o((\zeta_a-\sqrt{\tau}\xi_a)^2) \\
        \sigma(\zeta_b) &= \sigma(\sqrt{\tau}\xi_b) + \sigma'(\sqrt{\tau}\xi_b)(\zeta_b-\sqrt{\tau}\xi_b)+ \frac{1}{2}\sigma''(\sqrt{\tau}\xi_b)(\zeta_b-\sqrt{\tau}\xi_b)^2 +o((\zeta_b-\sqrt{\tau}\xi_b)^2).
\end{align*}
We then have
\begin{align*}
    \kappa ({\bf x}_i,{\bf x}_j) &= \mathbb{E}[\sigma(\zeta_a)\sigma(\zeta_b)] \\
    &= \mathbb{E}[\sigma(\sqrt{\tau}\xi_a)\sigma(\sqrt{\tau}\xi_b)] + \mathbb{E}[\sigma(\sqrt{\tau}\xi_a)\sigma'(\sqrt{\tau}\xi_b)(\zeta_b-\sqrt{\tau}\xi_b)] \\ &+ \frac12 \mathbb{E}[\sigma(\sqrt{\tau}\xi_a)\sigma''(\sqrt{\tau}\xi_b)(\zeta_b-\sqrt{\tau}\xi_b)^2] \\
    &+ \mathbb{E}[\sigma'(\sqrt{\tau}\xi_a)(\zeta_a-\sqrt{\tau}\xi_a)\sigma(\sqrt{\tau}\xi_b)] + \mathbb{E}[\sigma'(\sqrt{\tau}\xi_a)(\zeta_a-\sqrt{\tau}\xi_a)\sigma'(\sqrt{\tau}\xi_b)(\zeta_b-\sqrt{\tau}\xi_b)] \\ & + \frac12 \mathbb{E}[\sigma'(\xi_a)(\zeta_a-\sqrt{\tau}\xi_a)\sigma''(\xi_b)(\zeta_b-\sqrt{\tau}\xi_b)^2]
    + \frac12 \mathbb{E}[\sigma''(\xi_a)(\zeta_a-\sqrt{\tau}\xi_a)^2 \sigma(\sqrt{\tau}\xi_b)] \\&+ \frac12 \mathbb{E}[\sigma''(\sqrt{\tau}\xi_a)(\zeta_a-\sqrt{\tau}\xi_a)^2 \sigma'(\sqrt{\tau}\xi_b)(\zeta_b-\sqrt{\tau}\xi_b)] \\ &+ \frac14 \mathbb{E}[\sigma''(\sqrt{\tau}\xi_a)(\zeta_a-\sqrt{\tau}\xi_a)^2 \sigma''(\sqrt{\tau}\xi_b)(\zeta_b-\sqrt{\tau}\xi_b)^2] + o((\zeta_a-\sqrt{\tau}\xi_a)^2).
\end{align*}
where the expectation is with respect to ${\bf w}$ as in the definition in (\ref{eq:appendix_def_kappa}). 

Using the independence between $\xi_a, \xi_b$ along with the following
\begin{align}
\label{eq:zeta}
\zeta_a-\sqrt{\tau}\xi_a &= (\frac{u_a}{\sqrt{\tau}}-1)\sqrt{\tau}\xi_a \\
\label{eq:zeta2}
\zeta_b-\sqrt{\tau}\xi_b &= (\frac{u_b}{\sqrt{\tau}}-1)\sqrt{\tau}\xi_b + \frac{v_b}{\sqrt{\tau}} \sqrt{\tau} \xi_a 
\end{align}
we have for $\xi \sim \mathcal{N}(0,1),$
\begin{align}
\label{eq:zeta3}
    \mathbb{E}[\sigma(\sqrt{\tau}\xi_a)\sigma(\sqrt{\tau}\xi_b)] &= \left(\mathbb{E}[\sigma(\sqrt{\tau}\xi)]\right)^2 \\
    \label{eq:zeta4}
    \mathbb{E}[\sigma(\sqrt{\tau}\xi_a)\sigma'(\sqrt{\tau}\xi_b)(\zeta_b-\sqrt{\tau}\xi_b)] &= \left(\mathbb{E}[\sigma(\sqrt{\tau}\xi)]\mathbb{E}[\sqrt{\tau}\xi\sigma'(\sqrt{\tau}\xi)]\right) \left(\frac{u_b}{\sqrt{\tau}}-1\right) \nonumber \\
    &+\left(\mathbb{E}[\sigma'(\sqrt{\tau}\xi)]\mathbb{E}[\sqrt{\tau}\xi\sigma(\sqrt{\tau}\xi)]\right) \frac{v_b}{\sqrt{\tau}}
\end{align}
Similarly writing down the products $(\zeta_a-\sqrt{\tau}\xi_a)^2, (\zeta_b-\sqrt{\tau}\xi_b)^2, (\zeta_a-\sqrt{\tau}\xi_a)(\zeta_b-\sqrt{\tau}\xi_b), (\zeta_a-\sqrt{\tau}\xi_a)(\zeta_b-\sqrt{\tau}\xi_b)^2, (\zeta_a-\sqrt{\tau}\xi_a)^2(\zeta_b-\sqrt{\tau}\xi_b), (\zeta_a-\sqrt{\tau}\xi_a)^2(\zeta_b-\sqrt{\tau}\xi_b)^2$ as in Equations~\ref{eq:zeta3}-~\ref{eq:zeta4}, and using Equations~\ref{eq:zeta}-~\ref{eq:zeta2}, we obtain a complete expression of all the terms in $\kappa ({\bf x}_i,{\bf x}_j)$ as follows 
\begin{align}
\label{EQ1}
\kappa({\bf x}_i,{\bf x}_j) &= \left[ \left(\mathbb{E}[\sigma(\sqrt{\tau}\xi)]\right)^2 + \left(\mathbb{E}[\sigma(\sqrt{\tau}\xi)]\mathbb{E}[\sqrt{\tau}\xi\sigma'(\sqrt{\tau}\xi)]\right) \left(\frac{u_b}{\sqrt{\tau}}-1\right) + \left(\mathbb{E}[\sigma'(\sqrt{\tau}\xi)]\mathbb{E}[\sqrt{\tau}\xi\sigma(\sqrt{\tau}\xi)]\right) \frac{v_b}{\sqrt{\tau}}\right. \nonumber\\ & \left.+ \left(\mathbb{E}[\sqrt{\tau}\xi\sigma'(\sqrt{\tau}\xi)]^2\right)\left((\frac{u_a}{\sqrt{\tau}}-1)(\frac{u_b}{\sqrt{\tau}}-1)\right) + \frac{\left(\mathbb{E}[\sigma(\sqrt{\tau}\xi)]\mathbb{E}[\tau\xi^2\sigma''(\sqrt{\tau}\xi)]\right)}2\left(\left(u_b-1\right)^2\right) \right. \nonumber\\ & \left. + \frac{\left(\mathbb{E}[\sqrt{\tau}\xi\sigma'(\sqrt{\tau}\xi)]\mathbb{E}[\tau\xi^2\sigma''(\sqrt{\tau}\xi)]\right)}{2}\left(\frac{u_a}{\sqrt{\tau}} -1\right)\left(\frac{u_b}{\sqrt{\tau}} -1\right)^2 \right. \nonumber\\ & \left. + \frac{\left(\mathbb{E}[\sqrt{\tau}\xi\sigma'(\sqrt{\tau}\xi)]\mathbb{E}[\tau\xi^2\sigma''(\sqrt{\tau}\xi)]\right)}{2}\left(\frac{u_a}{\sqrt{\tau}} -1\right)^2\left(\frac{u_b}{\sqrt{\tau}} -1\right) \right. \nonumber\\ & \left.+ \frac{\left(\mathbb{E}[\tau\xi^2\sigma''(\sqrt{\tau}\xi)]^2\right)}{4}\left(\frac{u_a}{\sqrt{\tau}} -1\right)^2\left(\frac{u_b}{\sqrt{\tau}} -1\right)^2 + \left(\mathbb{E}[\sqrt{\tau}\xi\sigma(\sqrt{\tau}\xi)]\mathbb{E}[\sqrt{\tau}\xi\sigma''(\sqrt{\tau}\xi)]\right) \frac{v_b}{\sqrt{\tau}}\left(\frac{u_b}{\sqrt{\tau}} -1\right)  \right. \nonumber \\ &\left. + \frac{\left(\mathbb{E}[\sigma''(\sqrt{\tau}\xi)]\mathbb{E}[\tau\xi^2\sigma''(\sqrt{\tau}\xi)]\right)}{2}\left(\frac{v_b}{\sqrt{\tau}}\right)^2 +\left(\mathbb{E}[\sigma(\sqrt{\tau}\xi)]\mathbb{E}[\sqrt{\tau}\xi\sigma'(\sqrt{\tau}\xi)]\right) \left(\frac{u_a}{\sqrt{\tau}}-1\right)\right. \nonumber \\ &\left. + \left(\mathbb{E}[\sigma'(\sqrt{\tau}\xi)]\mathbb{E}[\xi^2\sigma'(\sqrt{\tau}\xi)]\right)\frac{v_b}{\sqrt{\tau}}\left(\frac{u_a}{\sqrt{\tau}}-1\right) + \frac{\left(\mathbb{E}[\sigma''(\sqrt{\tau}\xi)]\mathbb{E}[\tau \sqrt{\tau}\xi^3\sigma'(\sqrt{\tau}\xi)]\right)}{2}\left(\frac{v_b}{\sqrt{\tau}}\right)^2\left(\frac{u_a}{\sqrt{\tau}}-1\right) \right. \nonumber \\ &\left. +\frac{\left(\mathbb{E}[\sigma'(\sqrt{\tau}\xi)]\mathbb{E}[\tau\xi^2\sigma'(\sqrt{\tau}\xi)]\right)}{2}\frac{v_b}{\sqrt{\tau}}\left(\frac{u_a}{\sqrt{\tau}}-1\right)\left(\frac{u_b}{\sqrt{\tau}}-1\right) + \frac{\left(\mathbb{E}[\sigma(\sqrt{\tau}\xi)]\mathbb{E}[\tau\xi^2\sigma''(\sqrt{\tau}\xi)]\right)}{2}\left(\frac{u_a}{\sqrt{\tau}}-1\right)^2 \right. \nonumber \\ & \left.  + \frac{\left(\mathbb{E}[\sigma'(\sqrt{\tau}\xi)]\mathbb{E}[\tau \sqrt{\tau}\xi^3\sigma''(\sqrt{\tau}\xi)]\right)}{2}\frac{v_b}{\sqrt{\tau}}\left(\frac{u_a}{\sqrt{\tau}}-1\right)^2 \right. \nonumber \\ & \left. + \frac{\left(\mathbb{E}[\sigma''(\sqrt{\tau}\xi)]\mathbb{E}[\tau^2\xi^4\sigma''(\sqrt{\tau}\xi)]\right)}{4}\left(\frac{v_b}{\sqrt{\tau}}\right)^2\left(\frac{u_a}{\sqrt{\tau}}-1\right)^2  \right. \nonumber \\ & \left. +\frac{\left(\mathbb{E}[\sqrt{\tau}\xi \sigma''(\sqrt{\tau}\xi)]\mathbb{E}[\tau \sqrt{\tau}\xi^3\sigma''(\sqrt{\tau}\xi)]\right)}{2}\frac{v_b}{\sqrt{\tau}}\left(\frac{u_b}{\sqrt{\tau}}-1\right)\left(\frac{u_a}{\sqrt{\tau}}-1\right)^2\right] + O(p^{-\frac32}).
\end{align}

Since $|{\bf x}_i^{\sf T}{\bf x}_j| \le \epsilon$ for a sufficiently small $\epsilon,$ (following from ~\eqref{eq:xixj}), using a Taylor approximation of $\sqrt{\|{\bf x}_j\|^2-\frac{({\bf x}_i^{\sf T}{\bf x}_j)^2}{\|{\bf x}_i\|^2}}$ we obtain
\begin{align*}
   \frac{u_b}{\sqrt{\tau}} - 1 = \frac{1}{\sqrt{\tau}}\sqrt{\| {\bf x}_j\|^2-\frac{({\bf x}_i^{\sf T}{\bf x}_j)^2}{\| {\bf x}_i\|^2}} -1 &= \left(\frac{\|x_j\|}{\sqrt{\tau}}-1\right) -\frac{1}{2\sqrt{\tau}\|{\bf x}_j\|}\frac{({\bf x}_i^{\sf T}{\bf x}_j)^2}{\|{\bf x}_i\|^2} + o\left(\frac{({\bf x}_i^{\sf T}{\bf x}_j)^2}{\|{\bf x}_i\|^2}\right)\\
    \left(\frac{u_b}{\sqrt{\tau}}-1\right)^2 = \left(\frac{1}{\sqrt{\tau}} \sqrt{\|{\bf x}_j\|^2-\frac{({\bf x}_i^{\sf T}{\bf x}_j)^2}{\|{\bf x}_i\|^2\|{\bf x}_j\|^2}} -1\right)^2 &= \left(\frac{\|{\bf x}_j\|}{\sqrt{\tau}}-1\right)^2 -\frac{\frac{\|{\bf x}_j\|}{\sqrt{\tau}}-1}{\|{\bf x}_j\|}\frac{({\bf x}_i^{\sf T}{\bf x}_j)^2}{\|{\bf x}_i\|^2} + o\left(\frac{({\bf x}_i^{\sf T}{\bf x}_j)^2}{\|{\bf x}_i\|^2}\right) \\
    \frac{u_a}{\sqrt{\tau}} - 1 &= \frac{\|{\bf x}_i\|}{\sqrt{\tau}} -1.
\end{align*}

We thus have
\begin{align*}
     \frac{v_b}{\sqrt{\tau}} &=\frac{{\bf x}_i^{\sf T}{\bf x}_j}{\sqrt{\tau}\|{\bf x}_i\|} = \frac{1}{\tau} \left(\underbrace{{\bf z}_i^{\sf T}{\bf z}_j}_{O(p^{-\frac12})} +\underbrace{\left(\frac{{\bm \mu}_a^{\sf T}{\bm \mu}_b}{p}+\frac{{\bm \mu}_a^{\sf T}{\bf w}_j}{\sqrt{p}}+\frac{{\bm \mu}_b^{\sf T}{\bf w}_i}{\sqrt{p}}\right)}_{O(p^{-1})} \right)+ O(p^{-\frac32}) \\
       \left(\frac{u_a}{\sqrt{\tau}}-1\right)= \frac{\|{\bf x}_i\|}{\sqrt{\tau}}-1 &= \underbrace{\frac{1}{2\tau}\left(\frac{\tr({\bf C}_a^{\circ})}{p}+\Phi_i\right)}_{O(p^{-\frac12})} + \underbrace{\frac{1}{2\tau}\left(\frac{{\|\bm \mu_a\|}^2}{p} + 2{\bm \mu}_a^{\sf T}\frac{{\bf z}_i}{\sqrt{p}}\right)}_{O(p^{-1})} + O(p^{-\frac32})
\end{align*}


\begin{align}
\label{app:eq1}
\left(\frac{u_a}{\sqrt{\tau}}-1\right)\left(\frac{u_b}{\sqrt{\tau}}-1\right) = \left(\frac{\|{\bf x}_i\|}{\sqrt{\tau}}-1\right)\left(\frac{\|{\bf x}_j\|}{\sqrt{\tau}}-1\right) &= \underbrace{\frac{\left(\frac{\tr({\bf C}_a^{\circ})}{p}+\Phi_i\right)\left(\frac{\tr({\bf C}_b^{\circ})}{p}+\Phi_j\right)}{4\tau^2}}_{O(p^{-\frac12})} + O(p^{-\frac32}) \nonumber \\
\left(\frac{v_b}{\sqrt{\tau}}\right)^2 &= \frac{1}{\tau^2} ({\bf z}_i^{\sf T}{\bf z}_j)^2 + O(p^{-\frac32}).
\end{align}

All other terms in~\eqref{EQ1} are of order at most $O(p^{-\frac32})$ and thus vanish asymptotically. We thus get
\begin{align}
\label{EQ2}
\kappa({\bf x}_i,{\bf x}_j) &= \left[ \left(\mathbb{E}[\sigma(\sqrt{\tau}\xi)]\right)^2 + \left(\mathbb{E}[\sigma(\sqrt{\tau}\xi)]\mathbb{E}[\sqrt{\tau}\xi\sigma'(\sqrt{\tau}\xi)]\right) \left(\frac{u_b}{\sqrt{\tau}}-1\right) + \left(\mathbb{E}[\sigma'(\sqrt{\tau}\xi)]\mathbb{E}[\sqrt{\tau}\xi\sigma(\sqrt{\tau}\xi)]\right) \frac{v_b}{\sqrt{\tau}}+ \right. \nonumber\\ & \left.+ \left(\mathbb{E}[\sqrt{\tau}\xi\sigma'(\sqrt{\tau}\xi)]^2\right)\left((\frac{u_a}{\sqrt{\tau}}-1)(\frac{u_b}{\sqrt{\tau}}-1)\right) + \frac{\left(\mathbb{E}[\sigma''(\sqrt{\tau}\xi)]\mathbb{E}[\tau\xi^2\sigma(\sqrt{\tau}\xi)]\right)}{2}\left(\frac{v_b}{\sqrt{\tau}}\right)^2 \right. \nonumber \\ &\left. +\left(\mathbb{E}[\sigma(\sqrt{\tau}\xi)]\mathbb{E}[\sqrt{\tau}\xi\sigma'(\sqrt{\tau}\xi)]\right) \left(\frac{u_a}{\sqrt{\tau}}-1\right)+O(p^{-\frac32})\right].
\end{align}

Plugging the terms in~\eqref{app:eq1} into~\eqref{EQ2} and rearranging in matrix form, we obtain for ${\bf K}={\bf P} \{ \kappa ({\bf x}_i, {\bf x}_j)\}_{i,j=1}^n {\bf P}$ and $\tilde{{\bf K}}$ defined in Theorem~\ref{th2}, that $\|{\bf K}-\tilde{\bf K}\| \to 0$, as expected, where we used the fact that $\|{\bf A}\| \leq n \|{\bf A}\|_\infty$ for  ${\bf A} \in \mathbb{R}^{n \times n}$ and $\|{\bf A}\|_\infty = \max_{i,j=1}^n |{\bf A}_{ij}|$. This concludes the proof of Theorem~\ref{th2}.

\subsection{Proof of Corollary~\ref{cor1}}
\label{sec:proofCor1}

Define the Ternary Random Features matrix ${\bm \Sigma}^{ter} = \sigma^{ter} ({\bf W}^{ter} {\bf X})$ with ${\bf W}^{ter}$ defined in~(\ref{eq:w}) having sparsity level $\epsilon$, the associated Gram matrix ${\bf G}^{ter} = \frac1m ({\bm \Sigma}^{ter})^{\sf T} {\bm \Sigma}^{ter} $ as in (\ref{eq:def_Gram}), and the limiting kernel
\begin{equation}
    {\bf K}^{ter} \triangleq {\bf P} \{ \kappa^{ter} ({\bf x}_i, {\bf x}_j)\}_{i,j=1}^n {\bf P}
\end{equation}
for $\kappa^{ter}$ defined in~(\ref{eq:def_kernel_ter}).
After calculation similar to \citep[Section~4]{liao2019inner}~and~\citep{liao2020sparse}, we obtain
\begin{align}
\label{eq:d1d2ter}
   \mathbb{E}[\left(\sigma^{ter}\right)'(\sqrt{\tau}z)]^2 &= \left(\frac{e^{-\frac{s_+^2}{\tau}}+e^{-\frac{s_{-}^2}{\tau}}}{\pi}\right)^2 \nonumber \\
   \mathbb{E}[\left(\sigma^{ter}\right)''(\sqrt{\tau}z)]^2 &= \left(\frac{s_+e^{-\frac{s_+^2}{\tau}}+s_{-}e^{-\frac{s_{-}^2}{\tau}}}{\tau\sqrt{2\pi\tau}}\right)^2.
\end{align}
Thus, according to Theorem~\ref{th2}, for random features kernel matrix ${\bf K}$ of the form (\ref{eq:def_K}) with ${\bf W}$ and nonlinear $\sigma$ satisfying Assumption~\ref{ass:growth_rate}-\ref{ass:sigma}, with associated generalized Gaussian moments $d_0, d_1,$ $d_2$ defined in Theorem~\ref{th2}, by choosing $s_-,$ and $s_+$ such that $d_1$ and $d_2$ are respectively equal to the first and second term of~\eqref{eq:d1d2ter}, we have that
\begin{align*}
    \|{\bf K}-{\bf K}^{ter}-\lambda{\bf P}\| \to 0,
\end{align*} almost surely with
\begin{align*}
    \lambda = d_0 - \left(\mathbb{E}[\left(\sigma^{ter}\right)^2(\sqrt{\tau}z)] - \mathbb{E}[\left(\sigma^{ter}\right)(\sqrt{\tau}z)]^2 - \tau \mathbb{E}[\left(\sigma^{ter}\right)'(\sqrt{\tau}z)]^2\right)
\end{align*}
where
\begin{align*}
    &\mathbb{E}[\left(\sigma^{ter}\right)^2(\sqrt{\tau}z)] - \mathbb{E}[\left(\sigma^{ter}\right)(\sqrt{\tau}z)]^2 - \tau \mathbb{E}[\left(\sigma^{ter}\right)'(\sqrt{\tau}z)]^2 \\&= \left(1-\erf\left(\frac{s_+}{\sqrt{\tau}}\right)\right) -\frac{1}{2}\left(\frac{e^{-\frac{s_{+}^2}{\tau}}+e^{-\frac{s_{-}^2}{\tau}}}{\pi}\right)^2.
\end{align*}

\newpage
\subsection{Gaussian moments of popular activation functions}
\label{sec:gaussian moments}
We provide in Table~\ref{tab:ds} the calculation of the generalized Gaussian moments $d_0, d_1, d_2$ for popular activation functions used in random features and neural network contexts. Note that our Table~\ref{tab:ds} matches the results in \citep[Table~2]{liao2018spectrum}.

\begin{table*}[h]
\caption{Values of $d_0, d_1, d_2$ for different activation functions.}
\label{tab:ds}
\smallskip
\centering
\renewcommand{\arraystretch}{2}
\hspace*{-1cm}\begin{tabular}{|c|c|c|c|}
\hline
 $\sigma(t)$ & $d_0$ & $d_1$ & $d_2$\\
 \hline
 $|t|$  & $\tau\left(1-\frac{2}{\pi}\right)$  & $0$ & $\frac{1}{2\pi \tau}$\\
 \hline
 $\max(0,t)$ & $\frac{\tau}{2}\left(\frac12-\frac1{\pi}\right)$ & $\frac{1}{4}$ & $\frac{1}{8\pi \tau}$\\ 
 \hline 
 
 $\begin{matrix} -1, \quad t <s_{-} \\ +1, \quad t>s_{+} \\ 0, \quad otherwise \end{matrix}$ & $\begin{matrix}\left(1-\erf\left(\frac{s_+}{\sqrt{\tau}}\right)\right)\\-\frac{1}{2}\left(\frac{e^{-\frac{s_{+}^2}{\tau}}+e^{-\frac{s_{-}^2}{\tau}}}{\pi}\right)^2\end{matrix}$ & $\left(\frac{e^{-\frac{s_+^2}{\tau}}+e^{-\frac{s_{-}^2}{\tau}}}{\pi}\right)^2$ & $\left(\frac{s_+e^{-\frac{s_+^2}{\tau}}+s_{-}e^{-\frac{s_{-}^2}{\tau}}}{\tau\sqrt{2\pi\tau}}\right)^2$\\
 \hline
 $a_{+}\max(0,t) + a_{-}\max(0,-t)$ & $\tau\left(a_{+}+a_{-}\right)^2\left(\frac{\pi-2}{4\pi}\right)$ & $\frac{\left(a_{+}-a_{-}\right)^2}{4}$ &$\frac{\left(a_{+}+a_{-}\right)^2}{8\pi\tau}$ \\
 \hline
 $a_{2}t^2 + a_{1}t+a_0$ & $2\tau^2a_2^2$ & $a_1^2$ & $a_2^2$ \\
 \hline
 $\exp(t)$ & $\frac{1}{\sqrt{2\tau+1}}-\frac{1}{\tau+1}$ & $0$ &$\frac{1}{4(\tau+1)^3}$\\
 \hline
  $\cos(t)$ & $\frac{1+e^{-2\tau}}{2}-e^{-\tau}$ & $0$ & $\frac{e^{-\tau}}{4}$\\
 \hline
  $\sin(t)$ & $\frac{1-e^{-2\tau}}{2} - \tau e^{-\tau}$ & $e^{-\tau}$ & $0$\\
    \hline 
  $t$ & $0$ & $1$ & $0$\\
  \hline
  ${\rm sign}(t)$ & $1-\frac{2}{\pi}$ & $\frac{2}{\pi \tau}$ & $0$\\ 
  \hline
  $1_{t>0}$ & $\frac{1}{4}-\frac{1}{2\pi}$ & $\frac{1}{2\pi\tau}$ & $0$\\

 \hline
\end{tabular}
\end{table*}

\subsection{Additional experiments}
\label{subsec:appendix_experiments}
In this section, we complement Section~\ref{sec:experiments} by providing additional experiments on ridge regression and support vector machine, to support the robustness our TRF method compared to state-of-the-art approaches.

\begin{remark}[On the empirical and expected random features kernels]
It is worth noting that our main results in Theorem~\ref{th2}~and~Corollary~\ref{cor1} characterize the behavior of the \emph{expected}/\emph{limiting} random features kernel ${\bf K}$ instead of the \emph{empirical} Gram kernel matrix ${\bf G}$ defined in (\ref{eq:def_Gram}) obtained by averaging over $m$ random features. The operator norm difference $\| {\bf K} - {\bf G} \|$ is then bound to vanish as $m,p,n \to \infty$ for a sufficiently large $m$ (for example with $m/\max(n,p) \to \infty$).
\end{remark}

\paragraph{Random features based Ridge regression}
To complement the experiments in Section~\ref{subsec:experiments1}, we provide in Figures~\ref{fig:compare_rff_ternary2}-\ref{fig:compare_rff_ternary6}, the test mean square error of random features kernel ridge regression with increasing number of random features $m \in \{512,4096,10^4\}$ for GMM data in Figure~\ref{fig:compare_rff_ternary2}-\ref{fig:compare_rff_ternary4}, and $m \in \{512,10^4\}$ for MNIST data in Figure~\ref{fig:compare_rff_ternary5}~and~\ref{fig:compare_rff_ternary6}, as a function of the regularization parameter $\gamma$ and for different choices of sparsity levels $\epsilon \in \{ 0.1, 0.3, 0.5, 0.7, 0.9 \}$. It is interesting to note that, for both datasets, the performance gap between RFFs and TRFs significantly decreases as $m$ the number of random features grows large: this is in agreement with our Theorem~\ref{th2} in which guarantee is provided only for the \emph{expected} kernel matrix (that corresponds to $m \to \infty$), not the \emph{empirical} Gram matrix as the sample mean over $m$ random features. 


\begin{figure}
\centering
\begin{tabular}{cc}
  \begin{tikzpicture}[scale=.7]
    		\begin{axis}[no markers, legend pos = south east, xmode = log, 
        ,grid=major,xlabel={$\gamma$},ylabel={MSE},xmin=1e-2,xmax=10000,ymin=0,ymax=1.0,     xtick={1e-2, 1e-1, 1, 100, 10e4},
    xticklabels={$10^{-2}$, $10^{-1}$, $1$, $10^{2}$, $10^4$}]
\addplot+[thick, magenta]coordinates{(0.01,0.001)(0.01778279410038923,0.003)(0.03162277660168379,0.009)(0.05623413251903491,0.011)(0.1,0.033)(0.1778279410038923,0.05)(0.31622776601683794,0.07)(0.5623413251903491,0.10)(1.0,0.13)(1.7782794100389228,0.20)(3.1622776601683795,0.30)(5.623413251903491,0.44)(10,0.57)(17.78279410038923,0.70)(31.622776601683793,0.79)(56.23413251903491,0.84)(100.0,0.92)(177.82794100389228,0.94)(316.22776601683796,0.95)(562.341325190349,0.96)(1000.0,0.965)(1778.2794100389228,0.97)(3162.2776601683795,0.975)(5623.413251903491,0.98)(10000.0,0.985)
};
\addplot+[thick, black, error bars/.cd, y dir=both,y explicit]coordinates{(0.01,0.016674231711044287)+-(0.022168743561926253,0.022168743561926253)(0.01778279410038923,0.02230168451150303)+-(0.022168743561926253,0.022168743561926253)(0.03162277660168379,0.029847041807268743)+-(0.022168743561926253,0.022168743561926253)(0.05623413251903491,0.03992441638471285)+-(0.022168743561926253,0.022168743561926253)(0.1,0.053320514418816425)+-(0.022168743561926253,0.022168743561926253)(0.1778279410038923,0.07112606424672677)+-(0.022168743561926253,0.022168743561926253)(0.31622776601683794,0.09508072739931318)+-(0.022168743561926253,0.022168743561926253)(0.5623413251903491,0.12838799957187022)+-(0.022168743561926253,0.022168743561926253)(1.0,0.17706268071485898)+-(0.022168743561926253,0.022168743561926253)(1.7782794100389228,0.2502695509530139)+-(0.022168743561926253,0.022168743561926253)(3.1622776601683795,0.35559387831203904)+-(0.022168743561926253,0.022168743561926253)(5.623413251903491,0.48855717758749473)+-(0.022168743561926253,0.022168743561926253)(10.0,0.62847633605367)+-(0.022168743561926253,0.022168743561926253)(17.78279410038923,0.7509405049975288)+-(0.022168743561926253,0.022168743561926253)(31.622776601683793,0.8429599426067949)+-(0.022168743561926253,0.022168743561926253)(56.23413251903491,0.9049204747324581)+-(0.022168743561926253,0.022168743561926253)(100.0,0.9437933545445654)+-(0.0010190478022730884,0.0010190478022730884)(177.82794100389228,0.967205709589682)+-(0.0010190478022730884,0.0010190478022730884)(316.22776601683796,0.9810188590422209)+-(0.0010190478022730884,0.0010190478022730884)(562.341325190349,0.9890867907364079)+-(0.0010190478022730884,0.0010190478022730884)(1000.0,0.993764618436346)+-(0.0010190478022730884,0.0010190478022730884)(1778.2794100389228,0.9964563206148565)+-(0.0010190478022730884,0.0010190478022730884)(3162.2776601683795,0.9979940371509854)+-(0.0010190478022730884,0.0010190478022730884)(5623.413251903491,0.9988674919842326)+-(0.0010190478022730884,0.0010190478022730884)(10000.0,0.999361671852762)+-(0.0010190478022730884,0.0010190478022730884)
};
\addplot+[thick, blue, error bars/.cd, y dir=both,y explicit]coordinates{(0.01,0.08363272671733026)+-(0.0010190478022730884,0.0010190478022730884)(0.01778279410038923,0.08487440390407111)+-(0.0010190478022730884,0.0010190478022730884)(0.03162277660168379,0.08741612054482077)+-(0.0010190478022730884,0.0010190478022730884)(0.05623413251903491,0.09200778339929366)+-(0.0010190478022730884,0.0010190478022730884)(0.1,0.09946356781331823)+-(0.0010190478022730884,0.0010190478022730884)(0.1778279410038923,0.11074387346101525)+-(0.0010190478022730884,0.0010190478022730884)(0.31622776601683794,0.12742400396535455)+-(0.0010190478022730884,0.0010190478022730884)(0.5623413251903491,0.15268454352493038)+-(0.0010190478022730884,0.0010190478022730884)(1.0,0.19267749742063636)+-(0.0010190478022730884,0.0010190478022730884)(1.7782794100389228,0.25692508552325477)+-(0.0010190478022730884,0.0010190478022730884)(3.1622776601683795,0.3540592463682158)+-(0.0010190478022730884,0.0010190478022730884)(5.623413251903491,0.4814232117898295)+-(0.0010190478022730884,0.0010190478022730884)(10.0,0.619483775395402)+-(0.0010190478022730884,0.0010190478022730884)(17.78279410038923,0.7430560359857108)+-(0.0010190478022730884,0.0010190478022730884)(31.622776601683793,0.8373879823170655)+-(0.0010190478022730884,0.0010190478022730884)(56.23413251903491,0.9015732946260803)+-(0.0010190478022730884,0.0010190478022730884)(100.0,0.9420859458253648)+-(0.0010190478022730884,0.0010190478022730884)(177.82794100389228,0.9665121504850585)+-(0.0010190478022730884,0.0010190478022730884)(316.22776601683796,0.9808442723282339)+-(0.0010190478022730884,0.0010190478022730884)(562.341325190349,0.9891164678351968)+-(0.0010190478022730884,0.0010190478022730884)(1000.0,0.9938423473487499)+-(0.0010190478022730884,0.0010190478022730884)(1778.2794100389228,0.9965250135790809)+-(0.0010190478022730884,0.0010190478022730884)(3162.2776601683795,0.9980418955498096)+-(0.0010190478022730884,0.0010190478022730884)(5623.413251903491,0.9988976022441727)+-(0.0010190478022730884,0.0010190478022730884)(10000.0,0.9993796698748634)+-(0.0010190478022730884,0.0010190478022730884)};
    			            \addplot+[thick, red, error bars/.cd, y dir=both,y explicit]coordinates{(0.01,0.08972661792524217)+-(0.0010190478022730884,0.0010190478022730884)(0.01778279410038923,0.09135766730115263)+-(0.0010190478022730884,0.0010190478022730884)(0.03162277660168379,0.09461166304660001)+-(0.0010190478022730884,0.0010190478022730884)(0.05623413251903491,0.10037069195670378)+-(0.0010190478022730884,0.0010190478022730884)(0.1,0.10962952787073697)+-(0.0010190478022730884,0.0010190478022730884)(0.1778279410038923,0.12369768540071413)+-(0.0010190478022730884,0.0010190478022730884)(0.31622776601683794,0.1449167635426088)+-(0.0010190478022730884,0.0010190478022730884)(0.5623413251903491,0.1780321928126128)+-(0.0010190478022730884,0.0010190478022730884)(1.0,0.2314394534165799)+-(0.0010190478022730884,0.0010190478022730884)(1.7782794100389228,0.315184850048084)+-(0.0010190478022730884,0.0010190478022730884)(3.1622776601683795,0.43223282332090474)+-(0.0010190478022730884,0.0010190478022730884)(5.623413251903491,0.5690255807472413)+-(0.0010190478022730884,0.0010190478022730884)(10.0,0.7003637379008834)+-(0.0010190478022730884,0.0010190478022730884)(17.78279410038923,0.8063122968006088)+-(0.0010190478022730884,0.0010190478022730884)(31.622776601683793,0.881190623197191)+-(0.0010190478022730884,0.0010190478022730884)(56.23413251903491,0.929561246043524)+-(0.0010190478022730884,0.0010190478022730884)(100.0,0.9590954519382127)+-(0.0010190478022730884,0.0010190478022730884)(177.82794100389228,0.9765369295288369)+-(0.0010190478022730884,0.0010190478022730884)(316.22776601683796,0.9866416199170613)+-(0.0010190478022730884,0.0010190478022730884)(562.341325190349,0.9924303607693463)+-(0.0010190478022730884,0.0010190478022730884)(1000.0,0.9957235138671036)+-(0.0010190478022730884,0.0010190478022730884)(1778.2794100389228,0.9975885531876189)+-(0.0010190478022730884,0.0010190478022730884)(3162.2776601683795,0.9986417828820382)+-(0.0010190478022730884,0.0010190478022730884)(5623.413251903491,0.9992355207525736)+-(0.0010190478022730884,0.0010190478022730884)(10000.0,0.9995698784338338)+-(0.0010190478022730884,0.0010190478022730884)
};
\addplot+[thick, green, error bars/.cd, y dir=both,y explicit]coordinates{(0.01,0.10526926582446489)+-(0.0010190478022730884,0.0010190478022730884)(0.01778279410038923,0.10778890359707524)+-(0.0010190478022730884,0.0010190478022730884)(0.03162277660168379,0.11266528447814153)+-(0.0010190478022730884,0.0010190478022730884)(0.05623413251903491,0.1212064309559911)+-(0.0010190478022730884,0.0010190478022730884)(0.1,0.13518063566980307)+-(0.0010190478022730884,0.0010190478022730884)(0.1778279410038923,0.15738769481117754)+-(0.0010190478022730884,0.0010190478022730884)(0.31622776601683794,0.192699131623178)+-(0.0010190478022730884,0.0010190478022730884)(0.5623413251903491,0.24881163702389003)+-(0.0010190478022730884,0.0010190478022730884)(1.0,0.3341596368082241)+-(0.0010190478022730884,0.0010190478022730884)(1.7782794100389228,0.4502458833760747)+-(0.0010190478022730884,0.0010190478022730884)(3.1622776601683795,0.583673290045638)+-(0.0010190478022730884,0.0010190478022730884)(5.623413251903491,0.7108197109374907)+-(0.0010190478022730884,0.0010190478022730884)(10.0,0.8131525422321353)+-(0.0010190478022730884,0.0010190478022730884)(17.78279410038923,0.8854627565842212)+-(0.0010190478022730884,0.0010190478022730884)(31.622776601683793,0.9321814618253881)+-(0.0010190478022730884,0.0010190478022730884)(56.23413251903491,0.9606936317439818)+-(0.0010190478022730884,0.0010190478022730884)(100.0,0.9775052001595801)+-(0.0010190478022730884,0.0010190478022730884)(177.82794100389228,0.9872202721520564)+-(0.0010190478022730884,0.0010190478022730884)(316.22776601683796,0.9927702020552729)+-(0.0010190478022730884,0.0010190478022730884)(562.341325190349,0.9959200224341451)+-(0.0010190478022730884,0.0010190478022730884)(1000.0,0.9977009227361535)+-(0.0010190478022730884,0.0010190478022730884)(1778.2794100389228,0.9987055884882233)+-(0.0010190478022730884,0.0010190478022730884)(3162.2776601683795,0.9992716000161838)+-(0.0010190478022730884,0.0010190478022730884)(5623.413251903491,0.999590230781169)+-(0.0010190478022730884,0.0010190478022730884)(10000.0,0.9997695189106406)+-(0.0010190478022730884,0.0010190478022730884)};
\legend{Baseline (KRR), RFF, $\epsilon=0.1$ (TRF), $\epsilon=0.5$ (TRF), $\epsilon=0.9$ (TRF)};
    		\end{axis}
    \end{tikzpicture}   & \begin{tikzpicture}[scale=.7]
    		\begin{axis}[legend pos = north east 
        ,grid=major,xlabel={$\epsilon$},ylabel={Running time (s)},xmin=0.1,xmax=0.9,ymin=15,ymax=105, xtick={0.1, 0.3, 0.5, 0.7, 0.9},
    xticklabels={$0.1$, $0.3$, $0.5$, $0.7$, $0.9$}]
    \addplot[mark size=4pt, magenta, thick]coordinates{(0.1,98)(0.3, 98)(0.5, 98)(0.7, 98)(0.9, 98)
};
    			            \addplot[mark size=4pt,thick]coordinates{(0.1,56)(0.3, 56)(0.5, 56)(0.7, 56)(0.9, 56)
};
    			            \addplot[mark size=4pt,red,mark=square,thick]coordinates{(0.1,28)(0.3, 23)(0.5, 22)(0.7, 20)(0.9, 20)
};
\legend{Baseline (KRR), RFF, TRF};
    		\end{axis}
    \end{tikzpicture}
\end{tabular}
\caption{Testing MSE of kernel ridge regression as a function of regularization parameter $\gamma,$ $p=512, n=1024, n_{test}=512, m=512.$ Ternary function (with thresholds $s_{-}, s_{+}$ chosen to match gaussian moments of $[\cos, \sin]$ function) with ${\bf W}$ distributed according to~\eqref{eq:w} with $\epsilon\in \{0.1,0.3,0.5,0.7,0.9 \}$ versus KRR and RFF. GMM dataset with ${\bm \mu}_a=[{\bf 0}_{a-1};4;{\bf 0}_{p-a}], {\bf C}_a=(1+4(a-1)/\sqrt{p}){\bf I}_p,$ $p=512, n=2048$. Results averaged over $5$ independent runs.}
\label{fig:compare_rff_ternary2}
\end{figure}
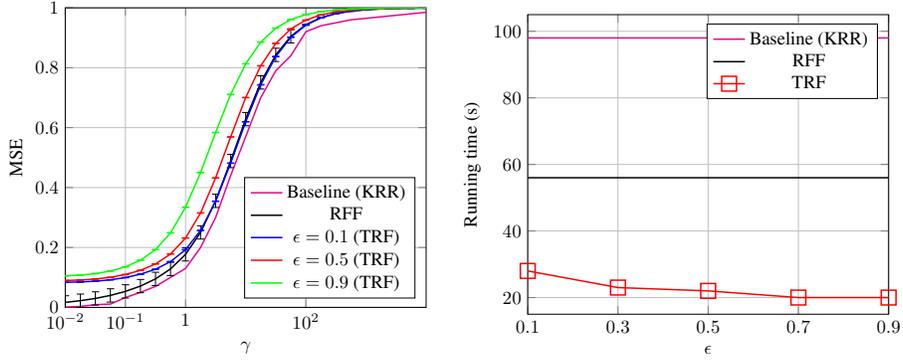

\begin{figure}
\centering
\begin{tabular}{cc}
  \begin{tikzpicture}[scale=.7]
  \pgfplotsset{every major grid/.style={style=densely dashed}}
    		\begin{axis}[no markers, legend pos = north west, xmode = log, 
        ,grid=major,xlabel={$\gamma$},ylabel={MSEs},xmin=1e-2,xmax=10000,ymin=0,ymax=1.0,     xtick={1e-2, 1e-1, 1, 100, 10000},
    xticklabels={$10^{-2}$, $10^{-1}$, $1$, $10^{2}$, $10^4$}]
    \addplot+[thick, magenta]coordinates{
(0.01,0.1367557694490249)(0.01778279410038923,0.1358643756630924)(0.03162277660168379,0.13457710063897615)(0.05623413251903491,0.1329333790785522)(0.1,0.13119056415809097)(0.1778279410038923,0.1298636067932988)(0.31622776601683794,0.1296556443818154)(0.5623413251903491,0.13133997803050343)(1.0,0.13566567784313818)(1.7782794100389228,0.14331014428983044)(3.1622776601683795,0.1548864439729149)(5.623413251903491,0.17102413503060035)(10.0,0.1925843248006314)(17.78279410038923,0.22120623797594516)(31.622776601683793,0.26053743274755356)(56.23413251903491,0.31738251379647717)(100.0,0.4063436944274303)(177.82794100389228,0.5135264888041728)(316.22776601683796,0.632055652006336)(562.341325190349,0.7410203877389993)(1000.0,0.8262519169272865)(1778.2794100389228,0.885293198153663)(3162.2776601683795,0.922959749681497)(5623.413251903491,0.9437728245141074)(10000.0,0.9581634707995616)
};
\addplot+[thick, black, error bars/.cd, y dir=both,y explicit]coordinates{
(0.01,0.1667557694490249)+-(0.022168743561926253,0.022168743561926253)(0.01778279410038923,0.1658643756630924)+-(0.022168743561926253,0.022168743561926253)(0.03162277660168379,0.16457710063897615)+-(0.022168743561926253,0.022168743561926253)(0.05623413251903491,0.1629333790785522)+-(0.022168743561926253,0.022168743561926253)(0.1,0.16119056415809097)+-(0.022168743561926253,0.022168743561926253)(0.1778279410038923,0.1598636067932988)+-(0.022168743561926253,0.022168743561926253)(0.31622776601683794,0.1596556443818154)+-(0.022168743561926253,0.022168743561926253)(0.5623413251903491,0.16133997803050343)+-(0.022168743561926253,0.022168743561926253)(1.0,0.16566567784313818)+-(0.022168743561926253,0.022168743561926253)(1.7782794100389228,0.17331014428983044)+-(0.022168743561926253,0.022168743561926253)(3.1622776601683795,0.1848864439729149)+-(0.022168743561926253,0.022168743561926253)(5.623413251903491,0.20102413503060035)+-(0.022168743561926253,0.022168743561926253)(10.0,0.2225843248006314)+-(0.022168743561926253,0.022168743561926253)(17.78279410038923,0.25120623797594516)+-(0.022168743561926253,0.022168743561926253)(31.622776601683793,0.29053743274755356)+-(0.022168743561926253,0.022168743561926253)(56.23413251903491,0.34738251379647717)+-(0.022168743561926253,0.022168743561926253)(100.0,0.4293436944274303)+-(0.00323895389844284,0.00323895389844284)(177.82794100389228,0.5365264888041728)+-(0.003922495644666706,0.003922495644666706)(316.22776601683796,0.655055652006336)+-(0.003996724233383957,0.003996724233383957)(562.341325190349,0.7640203877389993)+-(0.0033964598002065877,0.0033964598002065877)(1000.0,0.8492519169272865)+-(0.0024906964105211016,0.0024906964105211016)(1778.2794100389228,0.908293198153663)+-(0.0016469305825231387,0.0016469305825231387)(3162.2776601683795,0.945959749681497)+-(0.0010190478022730884,0.0010190478022730884)(5623.413251903491,0.9687728245141074)+-(0.0006056223930231829,0.0006056223930231829)(10000.0,0.9821634707995616)+-(0.00035149403591005167,0.00035149403591005167)
};
\addplot+[thick, blue, error bars/.cd, y dir=both,y explicit]coordinates{(0.01,0.2348321257879226)+-(0.005086841652555289,0.005086841652555289)(0.01778279410038923,0.2333782651946482)+-(0.0048996087271934476,0.0048996087271934476)(0.03162277660168379,0.23118716780490028)+-(0.0046078272828120066,0.0046078272828120066)(0.05623413251903491,0.22821507727597617)+-(0.004188723889066002,0.004188723889066002)(0.1,0.2247773861300119)+-(0.0036539149526477964,0.0036539149526477964)(0.1778279410038923,0.22167479574631654)+-(0.0030739380424088607,0.0030739380424088607)(0.31622776601683794,0.2200115766691499)+-(0.002563842783964527,0.002563842783964527)(0.5623413251903491,0.22077860830408377)+-(0.0022031932828078483,0.0022031932828078483)(1.0,0.22454716995459306)+-(0.001975860442956965,0.001975860442956965)(1.7782794100389228,0.23144965782949584)+-(0.001848101797868132,0.001848101797868132)(3.1622776601683795,0.24136451800364478)+-(0.0018472615806608551,0.0018472615806608551)(5.623413251903491,0.2542227912997735)+-(0.002021084115537431,0.002021084115537431)(10.0,0.2704600452981688)+-(0.002358882885224663,0.002358882885224663)(17.78279410038923,0.291854843631789)+-(0.0027478669025904068,0.0027478669025904068)(31.622776601683793,0.3229256660023754)+-(0.0030218721024792092,0.0030218721024792092)(56.23413251903491,0.3717749538586822)+-(0.0030936492984412183,0.0030936492984412183)(100.0,0.44707300262958727)+-(0.0030651273141148842,0.0030651273141148842)(177.82794100389228,0.5492261596503526)+-(0.003044592664168352,0.003044592664168352)(316.22776601683796,0.6640016713992278)+-(0.0028515060060371068,0.0028515060060371068)(562.341325190349,0.7700905078067822)+-(0.0023530436621871766,0.0023530436621871766)(1000.0,0.853169161005378)+-(0.001707452000258651,0.001707452000258651)(1778.2794100389228,0.910707118545743)+-(0.001124009724905399,0.001124009724905399)(3162.2776601683795,0.9473972817827434)+-(0.0006939076909641585,0.0006939076909641585)(5623.413251903491,0.9696097698915862)+-(0.0004118576861401053,0.0004118576861401053)(10000.0,0.9826439078711433)+-(0.00023885507191317437,0.00023885507191317437)};
\addplot+[thick, red, error bars/.cd,  y dir=both,y explicit]coordinates{(0.01,0.24663974980187758)+-(0.003505496776788637,0.003505496776788637)(0.01778279410038923,0.2448656002401378)+-(0.0033671953916417177,0.0033671953916417177)(0.03162277660168379,0.24217357986140337)+-(0.0031681764060661377,0.0031681764060661377)(0.05623413251903491,0.23847203856382201)+-(0.0029166673081976115,0.0029166673081976115)(0.1,0.2340627846589017)+-(0.002646202933475709,0.002646202933475709)(0.1778279410038923,0.22978115629886792)+-(0.0023854728603924396,0.0023854728603924396)(0.31622776601683794,0.2267978138933589)+-(0.0021153688557373194,0.0021153688557373194)(0.5623413251903491,0.22618268663004004)+-(0.0017939942918842814,0.0017939942918842814)(1.0,0.228602121367604)+-(0.0014276717222459242,0.0014276717222459242)(1.7782794100389228,0.23430348207178536)+-(0.0011228574965377464,0.0011228574965377464)(3.1622776601683795,0.2432842220049516)+-(0.0011099778360894024,0.0011099778360894024)(5.623413251903491,0.2555607788244845)+-(0.0015117966327682151,0.0015117966327682151)(10.0,0.27158180654483643)+-(0.0021631724435441715,0.0021631724435441715)(17.78279410038923,0.2930416421515374)+-(0.002872117206822014,0.002872117206822014)(31.622776601683793,0.32423894820140503)+-(0.003496941255435114,0.003496941255435114)(56.23413251903491,0.37293556903712)+-(0.003951985883421368,0.003951985883421368)(100.0,0.44758684355389244)+-(0.004203335871465968,0.004203335871465968)(177.82794100389228,0.5488370988320166)+-(0.004194808607289865,0.004194808607289865)(316.22776601683796,0.6629747871740923)+-(0.0038060747957798014,0.0038060747957798014)(562.341325190349,0.7689279301392682)+-(0.0030556948913111654,0.0030556948913111654)(1000.0,0.8522071701925256)+-(0.0021831155170728223,0.0021831155170728223)(1778.2794100389228,0.9100343214892244)+-(0.00142612712729162,0.00142612712729162)(3162.2776601683795,0.9469693964974037)+-(0.0008772112763274478,0.0008772112763274478)(5623.413251903491,0.9693518943713354)+-(0.0005197494132946792,0.0005197494132946792)(10000.0,0.9824931348466578)+-(0.00030116926272085245,0.00030116926272085245)};
\addplot+[thick, green, error bars/.cd, y dir=both,y explicit]coordinates{(0.01,0.23488610691955106)+-(0.004935770925667537,0.004935770925667537)(0.01778279410038923,0.23335721562613823)+-(0.004882388920629767,0.004882388920629767)(0.03162277660168379,0.2310518010454349)+-(0.00478578905755965,0.00478578905755965)(0.05623413251903491,0.22792080780588284)+-(0.004612916574093431,0.004612916574093431)(0.1,0.22428756703607036)+-(0.004316204543967824,0.004316204543967824)(0.1778279410038923,0.2209745386265561)+-(0.003850987523669891,0.003850987523669891)(0.31622776601683794,0.21910762095879113)+-(0.0032229512189968336,0.0032229512189968336)(0.5623413251903491,0.21968559664196313)+-(0.0025427131613135747,0.0025427131613135747)(1.0,0.2232809938634998)+-(0.0020212462462233616,0.0020212462462233616)(1.7782794100389228,0.23004332827964163)+-(0.001832945064900882,0.001832945064900882)(3.1622776601683795,0.239903115250305)+-(0.001922963058190463,0.001922963058190463)(5.623413251903491,0.25286351952633235)+-(0.0021407009821528757,0.0021407009821528757)(10.0,0.26940697266777963)+-(0.002480241900709158,0.002480241900709158)(17.78279410038923,0.29131881631217554)+-(0.0030047904914134605,0.0030047904914134605)(31.622776601683793,0.32309481856965194)+-(0.003659298271836618,0.003659298271836618)(56.23413251903491,0.3727536544889201)+-(0.004250967063191746,0.004250967063191746)(100.0,0.44877620398614876)+-(0.004539062237255898,0.004539062237255898)(177.82794100389228,0.5513326641311805)+-(0.004344689637287349,0.004344689637287349)(316.22776601683796,0.6660758318110118)+-(0.003673192555838758,0.003673192555838758)(562.341325190349,0.7718014019731954)+-(0.002750929410835894,0.002750929410835894)(1000.0,0.8544038867288272)+-(0.0018643069008290752,0.0018643069008290752)(1778.2794100389228,0.9115183204868137)+-(0.0011761201213617133,0.0011761201213617133)(3162.2776601683795,0.9478982942572584)+-(0.0007081414227706214,0.0007081414227706214)(5623.413251903491,0.9699074768269987)+-(0.00041433518153081013,0.00041433518153081013)(10000.0,0.9828167404184851)+-(0.0002383552086744187,0.0002383552086744187)};
\legend{Baseline (KRR), RFF, $\epsilon=0.1$ (TRF), $\epsilon=0.5$ (TRF), $\epsilon=0.9$ (TRF)};
    		\end{axis}
    \end{tikzpicture}   & \begin{tikzpicture}[scale=.7]
    		\begin{axis}[legend pos = north east, 
        ,grid=major,xlabel={$\epsilon$},ylabel={Running time (s)},xmin=0.1,xmax=0.9,ymin=15,ymax=160, xtick={0.1, 0.3, 0.5, 0.7, 0.9},
    xticklabels={$0.1$, $0.3$, $0.5$, $0.7$, $0.9$}]
    \addplot[mark size=4pt, magenta, thick]coordinates{(0.1,154)(0.3, 154)(0.5, 154)(0.7, 154)(0.9, 154)
};
    			            \addplot[mark size=4pt,thick]coordinates{(0.1,94)(0.3, 94)(0.5, 94)(0.7, 94)(0.9, 94)
};
    			            \addplot[mark size=4pt,red,mark=square,thick]coordinates{(0.1,43)(0.3, 40)(0.5, 38)(0.7, 37)(0.9, 32)
};
\legend{Baseline (KRR), RFF, TRF};
    		\end{axis}
    \end{tikzpicture}
\end{tabular}
\caption{Testing MSE of kernel ridge regression as a function of regularization parameter $\gamma,$ $p=512, n=1024, n_{test}=512, m=4096.$ Ternary function (with thresholds $s_{-}, s_{+}$ chosen to match gaussian moments of $[\cos, \sin]$ function) with ${\bf W}$ distributed according to~\eqref{eq:w} with $\epsilon\in \{0.1,0.3,0.5,0.7,0.9 \}$ versus KRR and RFF. GMM dataset with ${\bm \mu}_a=[{\bf 0}_{a-1};4;{\bf 0}_{p-a}], {\bf C}_a=(1+4(a-1)/\sqrt{p}){\bf I}_p,$ $p=512, n=2048$. Results averaged over $5$ independent runs.}
\label{fig:compare_rff_ternary3}
\end{figure}

\begin{figure}
\centering
\begin{tabular}{cc}
  \begin{tikzpicture}[scale=.7]
  \pgfplotsset{every major grid/.style={style=densely dashed}}
    		\begin{axis}[no markers, legend pos = north west, xmode = log, 
        ,grid=major,xlabel={$\gamma$},ylabel={MSEs},xmin=1e-2,xmax=10000,ymin=0,ymax=1.0,     xtick={1e-2, 1e-1, 1, 100, 10000},
    xticklabels={$10^{-2}$, $10^{-1}$, $1$, $10^{2}$, $10^4$}]
    \addplot+[thick, magenta, error bars/.cd, y dir=both,y explicit]coordinates{
(0.01,0.12689583505640883)(0.01778279410038923,0.12672478967548656)(0.03162277660168379,0.12645096697431077)(0.05623413251903491,0.126044153401862)(0.1,0.1255106681420355)(0.1778279410038923,0.12495035047860534)(0.31622776601683794,0.12461549193245944)(0.5623413251903491,0.12493812433263704)(1.0,0.12651831626715615)(1.7782794100389228,0.13007448887065284)(3.1622776601683795,0.13635687823160583)(5.623413251903491,0.14606873335248055)(10.0,0.15985321671987374)(17.78279410038923,0.17842055407513044)(31.622776601683793,0.2029179704795407)(56.23413251903491,0.23578076474633733)(100.0,0.2921398799024735)(177.82794100389228,0.3498914314060885)(316.22776601683796,0.4644752543104648)(562.341325190349,0.579687585820412)(1000.0,0.6966474713295369)(1778.2794100389228,0.7960296815817004)(3162.2776601683795,0.8691470312051448)(5623.413251903491,0.9176899872233202)(10000.0,0.947827128244239)
};
\addplot+[thick, black, error bars/.cd, y dir=both,y explicit]coordinates{
(0.01,0.15689583505640883)+-(0.022168743561926253,0.022168743561926253)(0.01778279410038923,0.15672478967548656)+-(0.022168743561926253,0.022168743561926253)(0.03162277660168379,0.15645096697431077)+-(0.022168743561926253,0.022168743561926253)(0.05623413251903491,0.156044153401862)+-(0.022168743561926253,0.022168743561926253)(0.1,0.1555106681420355)+-(0.022168743561926253,0.022168743561926253)(0.1778279410038923,0.15495035047860534)+-(0.022168743561926253,0.022168743561926253)(0.31622776601683794,0.15461549193245944)+-(0.022168743561926253,0.022168743561926253)(0.5623413251903491,0.15493812433263704)+-(0.022168743561926253,0.022168743561926253)(1.0,0.15651831626715615)+-(0.022168743561926253,0.022168743561926253)(1.7782794100389228,0.16007448887065284)+-(0.022168743561926253,0.022168743561926253)(3.1622776601683795,0.16635687823160583)+-(0.022168743561926253,0.022168743561926253)(5.623413251903491,0.17606873335248055)+-(0.022168743561926253,0.022168743561926253)(10.0,0.18985321671987374)+-(0.022168743561926253,0.022168743561926253)(17.78279410038923,0.20842055407513044)+-(0.022168743561926253,0.022168743561926253)(31.622776601683793,0.2329179704795407)+-(0.022168743561926253,0.022168743561926253)(56.23413251903491,0.26578076474633733)+-(0.022168743561926253,0.022168743561926253)(100.0,0.3121398799024735)+-(0.002491331068582332,0.002491331068582332)(177.82794100389228,0.3798914314060885)+-(0.0032219595022714674,0.0032219595022714674)(316.22776601683796,0.4744752543104648)+-(0.0038627781176776878,0.0038627781176776878)(562.341325190349,0.589687585820412)+-(0.004017792017182094,0.004017792017182094)(1000.0,0.7066474713295369)+-(0.0035425654462393342,0.0035425654462393342)(1778.2794100389228,0.8060296815817004)+-(0.0026947861499283764,0.0026947861499283764)(3162.2776601683795,0.8791470312051448)+-(0.0018334833657706706,0.0018334833657706706)(5623.413251903491,0.9276899872233202)+-(0.0011568041342267018,0.0011568041342267018)(10000.0,0.957827128244239)+-(0.0006960159497934488,0.0006960159497934488)
};
\addplot+[thick, blue, error bars/.cd, y dir=both,y explicit]coordinates{(0.01,0.20980069659896614)+-(0.006049855861633856,0.006049855861633856)(0.01778279410038923,0.20970840169608965)+-(0.006027983514056999,0.006027983514056999)(0.03162277660168379,0.2095575555535394)+-(0.005989038922252739,0.005989038922252739)(0.05623413251903491,0.2093274974069303)+-(0.0059198589342653474,0.0059198589342653474)(0.1,0.2090210904061766)+-(0.005798065053843485,0.005798065053843485)(0.1778279410038923,0.20872428720306546)+-(0.005588820935236923,0.005588820935236923)(0.31622776601683794,0.208705037472695)+-(0.005247763016813046,0.005247763016813046)(0.5623413251903491,0.20948954077005646)+-(0.004739972765006054,0.004739972765006054)(1.0,0.2117907397233562)+-(0.004073589271712588,0.004073589271712588)(1.7782794100389228,0.21626257547726802)+-(0.003320114354811827,0.003320114354811827)(3.1622776601683795,0.22326752182531068)+-(0.002592768081151698,0.002592768081151698)(5.623413251903491,0.23286003282157286)+-(0.001984131554787331,0.001984131554787331)(10.0,0.24499701381637567)+-(0.0015077096600964687,0.0015077096600964687)(17.78279410038923,0.2598819716384951)+-(0.0011207796842577708,0.0011207796842577708)(31.622776601683793,0.2785189260270629)+-(0.0008010275716389898,0.0008010275716389898)(56.23413251903491,0.3037759414881923)+-(0.000586474391280383,0.000586474391280383)(100.0,0.34173864777977675)+-(0.0006552200031545164,0.0006552200031545164)(177.82794100389228,0.40138929646940247)+-(0.0011174238151921128,0.0011174238151921128)(316.22776601683796,0.4890451977171114)+-(0.001687872034569532,0.001687872034569532)(562.341325190349,0.598970992410495)+-(0.0019964716340853516,0.0019964716340853516)(1000.0,0.7122852547515457)+-(0.001878604309725267,0.001878604309725267)(1778.2794100389228,0.8093444056611464)+-(0.001474777836288298,0.001474777836288298)(3162.2776601683795,0.8810591045858335)+-(0.00101858919374177,0.00101858919374177)(5623.413251903491,0.9287819536473914)+-(0.0006472881626651103,0.0006472881626651103)(10000.0,0.9584474969637423)+-(0.000390812151980604,0.000390812151980604)};
\addplot+[thick, red, error bars/.cd,  y dir=both,y explicit]coordinates{(0.01,0.20256400214732856)+-(0.0042592855350664345,0.0042592855350664345)(0.01778279410038923,0.20252224194567456)+-(0.0042412368557787706,0.0042412368557787706)(0.03162277660168379,0.2024584303256521)+-(0.004209615077787988,0.004209615077787988)(0.05623413251903491,0.2023748740299763)+-(0.004154811517993574,0.004154811517993574)(0.1,0.20230610894408346)+-(0.0040615547631587335,0.0040615547631587335)(0.1778279410038923,0.20237376526540438)+-(0.0039076621325659245,0.0039076621325659245)(0.31622776601683794,0.20287193653508712)+-(0.0036665323638422204,0.0036665323638422204)(0.5623413251903491,0.2043255401996129)+-(0.003320033701955628,0.003320033701955628)(1.0,0.20741127696740738)+-(0.002884472226767493,0.002884472226767493)(1.7782794100389228,0.21271933656079164)+-(0.002424039829815623,0.002424039829815623)(3.1622776601683795,0.22053044330689833)+-(0.0020088471226182908,0.0020088471226182908)(5.623413251903491,0.2308208309602174)+-(0.001657368646222085,0.001657368646222085)(10.0,0.2435214370948265)+-(0.001353885805171958,0.001353885805171958)(17.78279410038923,0.2588869838752993)+-(0.0011004629309247177,0.0011004629309247177)(31.622776601683793,0.27801133534387423)+-(0.0009354543292185048,0.0009354543292185048)(56.23413251903491,0.30384021242587106)+-(0.000920360877283864,0.000920360877283864)(100.0,0.3424852213507519)+-(0.0011840601263711174,0.0011840601263711174)(177.82794100389228,0.40284246091189163)+-(0.0018665268427674481,0.0018665268427674481)(316.22776601683796,0.4910155858245154)+-(0.002716091383810432,0.002716091383810432)(562.341325190349,0.6010731504247264)+-(0.00319009376461827,0.00319009376461827)(1000.0,0.714138323676704)+-(0.003009549957938259,0.003009549957938259)(1778.2794100389228,0.810751748522472)+-(0.0023746045972761645,0.0023746045972761645)(3162.2776601683795,0.8820176399553097)+-(0.0016477953996933575,0.0016477953996933575)(5623.413251903491,0.9293881102781623)+-(0.0010507608235597007,0.0010507608235597007)(10000.0,0.9588129935704466)+-(0.00063585485015177,0.00063585485015177)};
\addplot+[thick, green, error bars/.cd, y dir=both,y explicit]coordinates{
(0.01,0.2062683427251371)+-(0.00362415585596819,0.00362415585596819)(0.01778279410038923,0.20618662655287814)+-(0.003622318528001282,0.003622318528001282)(0.03162277660168379,0.20605355662805253)+-(0.003618775446052398,0.003618775446052398)(0.05623413251903491,0.20585215656428457)+-(0.00361169501283165,0.00361169501283165)(0.1,0.2055889034128633)+-(0.003597074405085197,0.003597074405085197)(0.1778279410038923,0.20535062059400572)+-(0.003566503782951438,0.003566503782951438)(0.31622776601683794,0.20540069337342617)+-(0.003504305118210152,0.003504305118210152)(0.5623413251903491,0.2062604577293711)+-(0.0033874571179467587,0.0033874571179467587)(1.0,0.20865730736334206)+-(0.0031947810035305767,0.0031947810035305767)(1.7782794100389228,0.21329277631989402)+-(0.0029241549656449646,0.0029241549656449646)(3.1622776601683795,0.2205838730401098)+-(0.002599364078456913,0.002599364078456913)(5.623413251903491,0.2306057827828059)+-(0.0022588557992412528,0.0022588557992412528)(10.0,0.24330471626178066)+-(0.0019489361874447096,0.0019489361874447096)(17.78279410038923,0.2588675948725986)+-(0.0017274283013545043,0.0017274283013545043)(31.622776601683793,0.2782821580943496)+-(0.0016586130078189508,0.0016586130078189508)(56.23413251903491,0.30441417906712076)+-(0.0018156923398383357,0.0018156923398383357)(100.0,0.3433872244305124)+-(0.002292128697896487,0.002292128697896487)(177.82794100389228,0.4041778319223983)+-(0.0030910664670050422,0.0030910664670050422)(316.22776601683796,0.492841268048135)+-(0.003909641219552722,0.003909641219552722)(562.341325190349,0.6031771224506497)+-(0.004237664697691129,0.004237664697691129)(1000.0,0.7161139461852546)+-(0.0038412808142723642,0.0038412808142723642)(1778.2794100389228,0.8123037832709746)+-(0.0029733047319986474,0.0029733047319986474)(3162.2776601683795,0.883088878752447)+-(0.0020440129256539884,0.0020440129256539884)(5623.413251903491,0.930067873878955)+-(0.0012972147371248842,0.0012972147371248842)(10000.0,0.959222841791011)+-(0.0007830091709907527,0.0007830091709907527)};
\legend{Baseline (KRR), RFF, $\epsilon=0.1$ (TRF), $\epsilon=0.5$ (TRF), $\epsilon=0.9$ (TRF)};
    		\end{axis}
    \end{tikzpicture}   & \begin{tikzpicture}[scale=.7]
    		\begin{axis}[legend pos = north east, 
        ,grid=major,xlabel={$\epsilon$},ylabel={Running time (s)},xmin=0.1,xmax=0.9,ymin=15,ymax=550, xtick={0.1, 0.3, 0.5, 0.7, 0.9},
    xticklabels={$0.1$, $0.3$, $0.5$, $0.7$, $0.9$}]
        			            \addplot[mark size=4pt, magenta, thick]coordinates{(0.1,510)(0.3, 510)(0.5, 510)(0.7, 510)(0.9, 510)
};
    			            \addplot[mark size=4pt,thick]coordinates{(0.1,261)(0.3, 261)(0.5, 261)(0.7, 261)(0.9, 261)
};
    			            \addplot[mark size=4pt,red,mark=square,thick]coordinates{(0.1,108)(0.3, 107)(0.5, 92)(0.7, 84)(0.9, 79)
};
\legend{Baseline (KRR), RFF, TRF};
    		\end{axis}
    \end{tikzpicture}
\end{tabular}
\caption{Testing MSE of kernel ridge regression as a function of regularization parameter $\gamma,$ $p=512, n=1024, n_{test}=512, m=10^4.$ Ternary function (with thresholds $s_{-}, s_{+}$ chosen to match gaussian moments of $[\cos, \sin]$ function) with ${\bf W}$ distributed according to~\eqref{eq:w} with $\epsilon \in \{0.1,0.3,0.5,0.7,0.9 \}$ versus KRR baseline and RFF. GMM dataset with ${\bm \mu}_a=[{\bf 0}_{a-1};4;{\bf 0}_{p-a}], {\bf C}_a=(1+4(a-1)/\sqrt{p}){\bf I}_p,$ $p=512, n=2048.$). Results averaged over $5$ independent runs.}
\label{fig:compare_rff_ternary4}
\end{figure}

\begin{figure}
\centering
\begin{tabular}{cc}
  \begin{tikzpicture}[scale=.7]
  \pgfplotsset{every major grid/.style={style=densely dashed}}
    		\begin{axis}[legend pos = south east, xmode = log, 
        ,grid=major,xlabel={$\gamma$},ylabel={MSEs},xmin=1e-2,xmax=10000,ymin=0,ymax=1.0,     xtick={1e-2, 1e-1, 1, 100, 10000},
    xticklabels={$10^{-2}$, $10^{-1}$, $1$, $10^{2}$, $10^4$}]
\addplot[mark size=4pt, magenta, thick]coordinates{(0.01,0.0005)(0.01778279410038923,0.0008)(0.03162277660168379,0.001)(0.05623413251903491,0.005)(0.1,0.008)(0.1778279410038923,0.02)(0.31622776601683794,0.03)(0.5623413251903491,0.05)(1.0,0.07)(1.7782794100389228,0.1)(3.1622776601683795,0.14)(5.623413251903491,0.2)(10.0,0.27)(17.78279410038923,0.35)(31.622776601683793,0.45)(56.23413251903491,0.62)(100.0,0.72)(177.82794100389228,0.83)(316.22776601683796,0.88)(562.341325190349,0.90)(1000.0,0.92)(1778.2794100389228,0.93)(3162.2776601683795,0.94)(5623.413251903491,0.94)(10000.0,0.94)
};
\addplot[thick, black, error bars/.cd, y dir=both,y explicit]coordinates{(0.01,0.016830832774419748)+-(0.022168743561926253,0.022168743561926253)(0.01778279410038923,0.02258286009934385)+-(0.022168743561926253,0.022168743561926253)(0.03162277660168379,0.030159004109375588)+-(0.022168743561926253,0.022168743561926253)(0.05623413251903491,0.0400834067591177)+-(0.022168743561926253,0.022168743561926253)(0.1,0.0529224316740087)+-(0.022168743561926253,0.022168743561926253)(0.1778279410038923,0.06915873503243282)+-(0.022168743561926253,0.022168743561926253)(0.31622776601683794,0.08909741456700906)+-(0.022168743561926253,0.022168743561926253)(0.5623413251903491,0.11289795848436465)+-(0.022168743561926253,0.022168743561926253)(1.0,0.14083257564232207)+-(0.022168743561926253,0.022168743561926253)(1.7782794100389228,0.17396750679375567)+-(0.022168743561926253,0.022168743561926253)(3.1622776601683795,0.2154598113096246)+-(0.022168743561926253,0.022168743561926253)(5.623413251903491,0.27189409714479257)+-(0.022168743561926253,0.022168743561926253)(10.0,0.3520669174888661)+-(0.022168743561926253,0.022168743561926253)(17.78279410038923,0.45995739372797884)+-(0.022168743561926253,0.022168743561926253)(31.622776601683793,0.5858914294677974)+-(0.022168743561926253,0.022168743561926253)(56.23413251903491,0.7087187145572711)+-(0.022168743561926253,0.022168743561926253)(100.0,0.8098068274098356)+-(0.0014268076331476142,0.0014268076331476142)(177.82794100389228,0.8825168455005383)+-(0.0014268076331476142,0.0014268076331476142)(316.22776601683796,0.9300864473832505)+-(0.0014268076331476142,0.0014268076331476142)(562.341325190349,0.9593563915631282)+-(0.0014268076331476142,0.0014268076331476142)(1000.0,0.9767017605905519)+-(0.0014268076331476142,0.0014268076331476142)(1778.2794100389228,0.9867540494260189)+-(0.0014268076331476142,0.0014268076331476142)(3162.2776601683795,0.99250476061148)+-(0.0014268076331476142,0.0014268076331476142)(5623.413251903491,0.995770259841563)+-(0.0014268076331476142,0.0014268076331476142)(10000.0,0.9976167158190692)+-(0.0014268076331476142,0.0014268076331476142)
};
\addplot[thick, blue, error bars/.cd, y dir=both,y explicit]coordinates{(0.01,0.09373386159485866)+-(0.0014268076331476142,0.0014268076331476142)(0.01778279410038923,0.0966194849739551)+-(0.0014268076331476142,0.0014268076331476142)(0.03162277660168379,0.10177855182620664)+-(0.0014268076331476142,0.0014268076331476142)(0.05623413251903491,0.1100026681241586)+-(0.0014268076331476142,0.0014268076331476142)(0.1,0.12192074058324621)+-(0.0014268076331476142,0.0014268076331476142)(0.1778279410038923,0.13784776857962677)+-(0.0014268076331476142,0.0014268076331476142)(0.31622776601683794,0.15760583276838386)+-(0.0014268076331476142,0.0014268076331476142)(0.5623413251903491,0.18055743332080915)+-(0.0014268076331476142,0.0014268076331476142)(1.0,0.20621301593006594)+-(0.0014268076331476142,0.0014268076331476142)(1.7782794100389228,0.23544009837758723)+-(0.0014268076331476142,0.0014268076331476142)(3.1622776601683795,0.27186638161858667)+-(0.0014268076331476142,0.0014268076331476142)(5.623413251903491,0.32268168879470777)+-(0.0014268076331476142,0.0014268076331476142)(10.0,0.3968064367686498)+-(0.0014268076331476142,0.0014268076331476142)(17.78279410038923,0.49782639554936403)+-(0.0014268076331476142,0.0014268076331476142)(31.622776601683793,0.6158914887921376)+-(0.0014268076331476142,0.0014268076331476142)(56.23413251903491,0.7305954303423134)+-(0.0014268076331476142,0.0014268076331476142)(100.0,0.8245323593831356)+-(0.0014268076331476142,0.0014268076331476142)(177.82794100389228,0.8918174464260359)+-(0.0014268076331476142,0.0014268076331476142)(316.22776601683796,0.9357058253612758)+-(0.0014268076331476142,0.0014268076331476142)(562.341325190349,0.9626559879040913)+-(0.0014268076331476142,0.0014268076331476142)(1000.0,0.9786053190688622)+-(0.0014268076331476142,0.0014268076331476142)(1778.2794100389228,0.9878405790502368)+-(0.0014268076331476142,0.0014268076331476142)(3162.2776601683795,0.9931210323351041)+-(0.0014268076331476142,0.0014268076331476142)(5623.413251903491,0.9961185188863002)+-(0.0014268076331476142,0.0014268076331476142)(10000.0,0.997813102190376)+-(0.0014268076331476142,0.0014268076331476142)};
    			            
\addplot[thick, red, error bars/.cd, y dir=both,y explicit]coordinates{(0.01,0.09436297234562735)+-(0.0014268076331476142,0.0014268076331476142)(0.01778279410038923,0.09778332031926726)+-(0.0014268076331476142,0.0014268076331476142)(0.03162277660168379,0.10390324724349242)+-(0.0014268076331476142,0.0014268076331476142)(0.05623413251903491,0.11369473232230169)+-(0.0014268076331476142,0.0014268076331476142)(0.1,0.12785664866988372)+-(0.0014268076331476142,0.0014268076331476142)(0.1778279410038923,0.1465226714384124)+-(0.0014268076331476142,0.0014268076331476142)(0.31622776601683794,0.16911455281807264)+-(0.0014268076331476142,0.0014268076331476142)(0.5623413251903491,0.1947127872580258)+-(0.0014268076331476142,0.0014268076331476142)(1.0,0.2231864140937103)+-(0.0014268076331476142,0.0014268076331476142)(1.7782794100389228,0.25683188848935745)+-(0.0014268076331476142,0.0014268076331476142)(3.1622776601683795,0.30173784463125064)+-(0.0014268076331476142,0.0014268076331476142)(5.623413251903491,0.36709864289066757)+-(0.0014268076331476142,0.0014268076331476142)(10.0,0.4596160363951466)+-(0.0014268076331476142,0.0014268076331476142)(17.78279410038923,0.5743278551773477)+-(0.0014268076331476142,0.0014268076331476142)(31.622776601683793,0.6929274631915285)+-(0.0014268076331476142,0.0014268076331476142)(56.23413251903491,0.7953933857926818)+-(0.0014268076331476142,0.0014268076331476142)(100.0,0.8717873865041007)+-(0.0014268076331476142,0.0014268076331476142)(177.82794100389228,0.9229888436465897)+-(0.0014268076331476142,0.0014268076331476142)(316.22776601683796,0.9549753510996226)+-(0.0014268076331476142,0.0014268076331476142)(562.341325190349,0.9741039834354086)+-(0.0014268076331476142,0.0014268076331476142)(1000.0,0.9852487183642424)+-(0.0014268076331476142,0.0014268076331476142)(1778.2794100389228,0.9916437994907665)+-(0.0014268076331476142,0.0014268076331476142)(3162.2776601683795,0.9952814672438409)+-(0.0014268076331476142,0.0014268076331476142)(5623.413251903491,0.9973403684137448)+-(0.0014268076331476142,0.0014268076331476142)(10000.0,0.9985024095790606)+-(0.0014268076331476142,0.0014268076331476142)
};
\addplot[thick, green, error bars/.cd, y dir=both,y explicit]coordinates{(0.01,0.10785199991273856)+-(0.0014268076331476142,0.0014268076331476142)(0.01778279410038923,0.11241266620749166)+-(0.0014268076331476142,0.0014268076331476142)(0.03162277660168379,0.12008434028050949)+-(0.0014268076331476142,0.0014268076331476142)(0.05623413251903491,0.13165969390939647)+-(0.0014268076331476142,0.0014268076331476142)(0.1,0.14754053076232157)+-(0.0014268076331476142,0.0014268076331476142)(0.1778279410038923,0.16756888000568182)+-(0.0014268076331476142,0.0014268076331476142)(0.31622776601683794,0.19114906909728704)+-(0.0014268076331476142,0.0014268076331476142)(0.5623413251903491,0.21798948456151152)+-(0.0014268076331476142,0.0014268076331476142)(1.0,0.24949786076939853)+-(0.0014268076331476142,0.0014268076331476142)(1.7782794100389228,0.2903129365989511)+-(0.0014268076331476142,0.0014268076331476142)(3.1622776601683795,0.3486545255780643)+-(0.0014268076331476142,0.0014268076331476142)(5.623413251903491,0.4326520917039613)+-(0.0014268076331476142,0.0014268076331476142)(10.0,0.5415752922671978)+-(0.0014268076331476142,0.0014268076331476142)(17.78279410038923,0.6605438629524253)+-(0.0014268076331476142,0.0014268076331476142)(31.622776601683793,0.7687020230451389)+-(0.0014268076331476142,0.0014268076331476142)(56.23413251903491,0.8526365818400463)+-(0.0014268076331476142,0.0014268076331476142)(100.0,0.9104929406116267)+-(0.0014268076331476142,0.0014268076331476142)(177.82794100389228,0.9472990987540691)+-(0.0014268076331476142,0.0014268076331476142)(316.22776601683796,0.9695579598495732)+-(0.0014268076331476142,0.0014268076331476142)(562.341325190349,0.9826143067120299)+-(0.0014268076331476142,0.0014268076331476142)(1000.0,0.9901364899201975)+-(0.0014268076331476142,0.0014268076331476142)(1778.2794100389228,0.9944254102467265)+-(0.0014268076331476142,0.0014268076331476142)(3162.2776601683795,0.9968562544744974)+-(0.0014268076331476142,0.0014268076331476142)(5623.413251903491,0.9982293032449172)+-(0.0014268076331476142,0.0014268076331476142)(10000.0,0.9990033632615591)+-(0.0014268076331476142,0.0014268076331476142)};
\legend{Baseline (KRR),RFF, $\epsilon=0.1$ (TRF), $\epsilon=0.5$ (TRF), $\epsilon=0.9$ (TRF)};
    		\end{axis}
    \end{tikzpicture}   & \begin{tikzpicture}[scale=.7]
    		\begin{axis}[legend pos = north east, 
        ,grid=major,xlabel={$\epsilon$},ylabel={Running time (s)},xmin=0.1,xmax=0.9,ymin=15,ymax=80, xtick={0.1, 0.3, 0.5, 0.7, 0.9},
    xticklabels={$0.1$, $0.3$, $0.5$, $0.7$, $0.9$}]
        			            \addplot[mark size=4pt, magenta, thick]coordinates{(0.1,72)(0.3, 72)(0.5, 72)(0.7, 72)(0.9, 72)
};
    			            \addplot[mark size=4pt, thick]coordinates{(0.1,42)(0.3, 42)(0.5, 42)(0.7, 42)(0.9, 42)
};
    			            \addplot[mark size=4pt,red,mark=square,thick]coordinates{(0.1,27)(0.3, 27)(0.5, 25)(0.7, 23)(0.9, 22)
};
\legend{Baseline (KRR), RFF, TRF};
    		\end{axis}
    \end{tikzpicture}
\end{tabular}
\caption{Testing mean squared errors (MSEs, \textbf{LEFT}) and running time (\textbf{RIGHT}) of kernel ridge regression as a function of regularization parameter $\gamma,$ $p=512, n=1024, n_{test}=512, m=512.$ Ternary function (with thresholds $s_{-}, s_{+}$ chosen to match the Gaussian moments $d_1, d_2$ of $[\cos,~\sin]$ function) with ${\bf W}$ distributed according to~(\ref{eq:w}) with $\epsilon\in \{0.1,0.3,0.5,0.7,0.9 \}$, versus KRR and RFFs on MNIST dataset $2$ classes - digits $(7,9)$. Results averaged over $5$ independent runs.}
\label{fig:compare_rff_ternary5}
\end{figure}

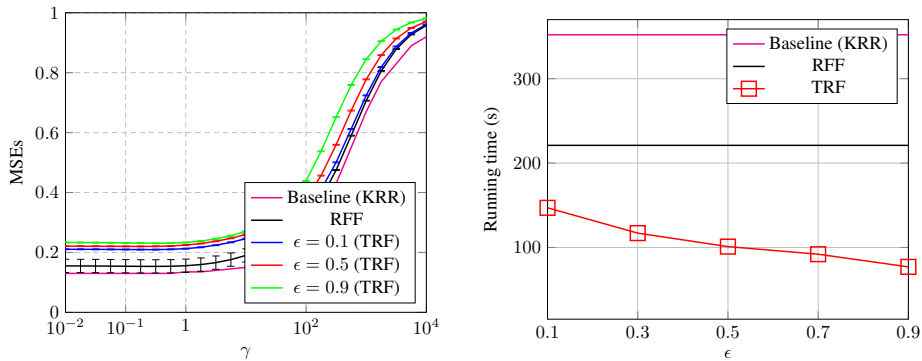
\begin{figure}
\centering
\begin{tabular}{cc}
  \begin{tikzpicture}[scale=.7]
  \pgfplotsset{every major grid/.style={style=densely dashed}}
    		\begin{axis}[legend pos = south east, xmode = log, 
        ,grid=major,xlabel={$\gamma$},ylabel={MSEs},xmin=1e-2,xmax=10000,ymin=0,ymax=1.0,     xtick={1e-2, 1e-1, 1, 100, 10000},
    xticklabels={$10^{-2}$, $10^{-1}$, $1$, $10^{2}$, $10^4$}]
\addplot[mark size=4pt, magenta, thick]coordinates{(0.01,0.13)(0.01778279410038923,0.13)(0.03162277660168379,0.13)(0.05623413251903491,0.13)(0.1,0.13)(0.1778279410038923,0.13)(0.31622776601683794,0.13)(0.5623413251903491,0.13)(1.0,0.135)(1.7782794100389228,0.135)(3.1622776601683795,0.14)(5.623413251903491,0.145)(10.0,0.15)(17.78279410038923,0.17)(31.622776601683793,0.20)(56.23413251903491,0.23)(100.0,0.29)(177.82794100389228,0.35)(316.22776601683796,0.43)(562.341325190349,0.55)(1000.0,0.67)(1778.2794100389228,0.77)(3162.2776601683795,0.83)(5623.413251903491,0.89)(10000.0,0.92)
};
\addplot[thick, black, error bars/.cd, y dir=both,y explicit]coordinates{(0.01,0.15498702185869445)+-(0.022168743561926253,0.022168743561926253)(0.01778279410038923,0.1548486665401292)+-(0.022168743561926253,0.022168743561926253)(0.03162277660168379,0.15462855381430096)+-(0.022168743561926253,0.022168743561926253)(0.05623413251903491,0.15430554806628172)+-(0.022168743561926253,0.022168743561926253)(0.1,0.1538932863900313)+-(0.022168743561926253,0.022168743561926253)(0.1778279410038923,0.15349160926542632)+-(0.022168743561926253,0.022168743561926253)(0.31622776601683794,0.15334201640772302)+-(0.022168743561926253,0.022168743561926253)(0.5623413251903491,0.15386026136280603)+-(0.022168743561926253,0.022168743561926253)(1.0,0.15563934233906984)+-(0.022168743561926253,0.022168743561926253)(1.7782794100389228,0.15941037358445323)+-(0.022168743561926253,0.022168743561926253)(3.1622776601683795,0.16594235281506586)+-(0.022168743561926253,0.022168743561926253)(5.623413251903491,0.17593848553938177)+-(0.022168743561926253,0.022168743561926253)(10.0,0.19002631812519988)+-(0.022168743561926253,0.022168743561926253)(17.78279410038923,0.20890390103929815)+-(0.022168743561926253,0.022168743561926253)(31.622776601683793,0.23370296693022863)+-(0.022168743561926253,0.022168743561926253)(56.23413251903491,0.2668151327898589)+-(0.022168743561926253,0.022168743561926253)(100.0,0.3132752231890076)+-(0.0014268076331476142,0.0014268076331476142)(177.82794100389228,0.3808654139176313)+-(0.0014268076331476142,0.0014268076331476142)(316.22776601683796,0.4750448502590309)+-(0.0014268076331476142,0.0014268076331476142)(562.341325190349,0.5898216702977539)+-(0.0014268076331476142,0.0014268076331476142)(1000.0,0.7065153249469635)+-(0.0014268076331476142,0.0014268076331476142)(1778.2794100389228,0.8058212070836894)+-(0.0014268076331476142,0.0014268076331476142)(3162.2776601683795,0.8789653680953352)+-(0.0014268076331476142,0.0014268076331476142)(5623.413251903491,0.9275621203971173)+-(0.0014268076331476142,0.0014268076331476142)(10000.0,0.9577460881906981)+-(0.0014268076331476142,0.0014268076331476142)
};
\addplot[thick, blue, error bars/.cd, y dir=both,y explicit]coordinates{(0.01,0.2105000353273787)+-(0.0014268076331476142,0.0014268076331476142)(0.01778279410038923,0.21041312073746432)+-(0.0014268076331476142,0.0014268076331476142)(0.03162277660168379,0.210271229054234)+-(0.0014268076331476142,0.0014268076331476142)(0.05623413251903491,0.2100553666751396)+-(0.0014268076331476142,0.0014268076331476142)(0.1,0.20976972212685768)+-(0.0014268076331476142,0.0014268076331476142)(0.1778279410038923,0.20949967340310735)+-(0.0014268076331476142,0.0014268076331476142)(0.31622776601683794,0.2095092132110166)+-(0.0014268076331476142,0.0014268076331476142)(0.5623413251903491,0.21031884656745742)+-(0.0014268076331476142,0.0014268076331476142)(1.0,0.21263859913965982)+-(0.0014268076331476142,0.0014268076331476142)(1.7782794100389228,0.21712068169986326)+-(0.0014268076331476142,0.0014268076331476142)(3.1622776601683795,0.22412314929875743)+-(0.0014268076331476142,0.0014268076331476142)(5.623413251903491,0.23371484407686233)+-(0.0014268076331476142,0.0014268076331476142)(10.0,0.24591894875143241)+-(0.0014268076331476142,0.0014268076331476142)(17.78279410038923,0.26105044134029537)+-(0.0014268076331476142,0.0014268076331476142)(31.622776601683793,0.28026898104378395)+-(0.0014268076331476142,0.0014268076331476142)(56.23413251903491,0.3067177207531515)+-(0.0014268076331476142,0.0014268076331476142)(100.0,0.34686324065520036)+-(0.0014268076331476142,0.0014268076331476142)(177.82794100389228,0.4097287000893433)+-(0.0014268076331476142,0.0014268076331476142)(316.22776601683796,0.5006137238453235)+-(0.0014268076331476142,0.0014268076331476142)(562.341325190349,0.6120252990335339)+-(0.0014268076331476142,0.0014268076331476142)(1000.0,0.7242990684480284)+-(0.0014268076331476142,0.0014268076331476142)(1778.2794100389228,0.8186872474718951)+-(0.0014268076331476142,0.0014268076331476142)(3162.2776601683795,0.887489003161153)+-(0.0014268076331476142,0.0014268076331476142)(5623.413251903491,0.9328632181611523)+-(0.0014268076331476142,0.0014268076331476142)(10000.0,0.9609108824722097)+-(0.0014268076331476142,0.0014268076331476142)};
    			            
\addplot[thick, red, error bars/.cd, y dir=both,y explicit]coordinates{(0.01,0.22079150112151258)+-(0.0014268076331476142,0.0014268076331476142)(0.01778279410038923,0.2207096157798843)+-(0.0014268076331476142,0.0014268076331476142)(0.03162277660168379,0.22057891604277866)+-(0.0014268076331476142,0.0014268076331476142)(0.05623413251903491,0.2203889491733198)+-(0.0014268076331476142,0.0014268076331476142)(0.1,0.2201633605536969)+-(0.0014268076331476142,0.0014268076331476142)(0.1778279410038923,0.22002573538751954)+-(0.0014268076331476142,0.0014268076331476142)(0.31622776601683794,0.22029681609284466)+-(0.0014268076331476142,0.0014268076331476142)(0.5623413251903491,0.22154434171217746)+-(0.0014268076331476142,0.0014268076331476142)(1.0,0.2244610010933779)+-(0.0014268076331476142,0.0014268076331476142)(1.7782794100389228,0.22959284839280508)+-(0.0014268076331476142,0.0014268076331476142)(3.1622776601683795,0.2371708860690439)+-(0.0014268076331476142,0.0014268076331476142)(5.623413251903491,0.24723728713027038)+-(0.0014268076331476142,0.0014268076331476142)(10.0,0.25996265582169314)+-(0.0014268076331476142,0.0014268076331476142)(17.78279410038923,0.2760585418000337)+-(0.0014268076331476142,0.0014268076331476142)(31.622776601683793,0.2975906213903182)+-(0.0014268076331476142,0.0014268076331476142)(56.23413251903491,0.32935718494197985)+-(0.0014268076331476142,0.0014268076331476142)(100.0,0.37955334043020283)+-(0.0014268076331476142,0.0014268076331476142)(177.82794100389228,0.4564159722312287)+-(0.0014268076331476142,0.0014268076331476142)(316.22776601683796,0.5593759899607752)+-(0.0014268076331476142,0.0014268076331476142)(562.341325190349,0.6734776305809327)+-(0.0014268076331476142,0.0014268076331476142)(1000.0,0.777685289937445)+-(0.0014268076331476142,0.0014268076331476142)(1778.2794100389228,0.8585460167432113)+-(0.0014268076331476142,0.0014268076331476142)(3162.2776601683795,0.9141924302558927)+-(0.0014268076331476142,0.0014268076331476142)(5623.413251903491,0.9495303363947452)+-(0.0014268076331476142,0.0014268076331476142)(10000.0,0.9708698568052543)+-(0.0014268076331476142,0.0014268076331476142)};
\addplot[thick, green, error bars/.cd, y dir=both,y explicit]coordinates{(0.01,0.23325449620149008)+-(0.0014268076331476142,0.0014268076331476142)(0.01778279410038923,0.23306757572740294)+-(0.0014268076331476142,0.0014268076331476142)(0.03162277660168379,0.23276001761485207)+-(0.0014268076331476142,0.0014268076331476142)(0.05623413251903491,0.23228355807100237)+-(0.0014268076331476142,0.0014268076331476142)(0.1,0.23162177085099286)+-(0.0014268076331476142,0.0014268076331476142)(0.1778279410038923,0.23087913413250677)+-(0.0014268076331476142,0.0014268076331476142)(0.31622776601683794,0.23041228223043472)+-(0.0014268076331476142,0.0014268076331476142)(0.5623413251903491,0.23089213819076138)+-(0.0014268076331476142,0.0014268076331476142)(1.0,0.23314171227457595)+-(0.0014268076331476142,0.0014268076331476142)(1.7782794100389228,0.23782008657925785)+-(0.0014268076331476142,0.0014268076331476142)(3.1622776601683795,0.24529713302664202)+-(0.0014268076331476142,0.0014268076331476142)(5.623413251903491,0.2558829046542448)+-(0.0014268076331476142,0.0014268076331476142)(10.0,0.27026583972527995)+-(0.0014268076331476142,0.0014268076331476142)(17.78279410038923,0.29022675304570983)+-(0.0014268076331476142,0.0014268076331476142)(31.622776601683793,0.31984363130014765)+-(0.0014268076331476142,0.0014268076331476142)(56.23413251903491,0.3664521425216375)+-(0.0014268076331476142,0.0014268076331476142)(100.0,0.4385074214430208)+-(0.0014268076331476142,0.0014268076331476142)(177.82794100389228,0.5376739808291292)+-(0.0014268076331476142,0.0014268076331476142)(316.22776601683796,0.6515400269002628)+-(0.0014268076331476142,0.0014268076331476142)(562.341325190349,0.7591570272147328)+-(0.0014268076331476142,0.0014268076331476142)(1000.0,0.844998682439175)+-(0.0014268076331476142,0.0014268076331476142)(1778.2794100389228,0.9052426832629692)+-(0.0014268076331476142,0.0014268076331476142)(3162.2776601683795,0.9439927090761069)+-(0.0014268076331476142,0.0014268076331476142)(5623.413251903491,0.9675778971444242)+-(0.0014268076331476142,0.0014268076331476142)(10000.0,0.9814616193584844)+-(0.0014268076331476142,0.0014268076331476142)};
\legend{Baseline (KRR), RFF, $\epsilon=0.1$ (TRF), $\epsilon=0.5$ (TRF), $\epsilon=0.9$ (TRF)};
    		\end{axis}
    \end{tikzpicture}   & \begin{tikzpicture}[scale=.7]
    		\begin{axis}[legend pos = north east, 
        ,grid=major,xlabel={$\epsilon$},ylabel={Running time (s)},xmin=0.1,xmax=0.9,ymin=15,ymax=370, xtick={0.1, 0.3, 0.5, 0.7, 0.9},
    xticklabels={$0.1$, $0.3$, $0.5$, $0.7$, $0.9$}]
        			            \addplot[mark size=4pt, magenta,  thick]coordinates{(0.1,352)(0.3, 352)(0.5, 352)(0.7, 352)(0.9, 352)
};
    			            \addplot[mark size=4pt, thick]coordinates{(0.1,221)(0.3, 221)(0.5, 221)(0.7, 221)(0.9, 221)
};
    			            \addplot[mark size=4pt,red,mark=square,thick]coordinates{(0.1,147)(0.3, 117)(0.5, 101)(0.7, 92)(0.9, 77)
};
\legend{Baseline (KRR), RFF, TRF};
    		\end{axis}
    \end{tikzpicture}
\end{tabular}
\caption{Testing mean squared errors (MSEs, \textbf{LEFT}) and running time (\textbf{RIGHT}) of kernel ridge regression as a function of regularization parameter $\gamma,$ $p=512, n=1024, n_{test}=512, m=10^4.$ Ternary function (with thresholds $s_{-}, s_{+}$ chosen to match the Gaussian moments $d_1, d_2$ of $[\cos,~\sin]$ function) with ${\bf W}$ distributed according to~(\ref{eq:w}) with $\epsilon\in \{0.1,0.3,0.5,0.7,0.9 \}$, versus KRR and RFFs on MNIST dataset $2$ classes - digits $(7,9)$. Results averaged over $5$ independent runs.}
\label{fig:compare_rff_ternary6}
\end{figure}

\paragraph{Random features based Support Vector Machine}
\label{subsec:experiments2}
We empirically evaluate the classification performance of various random features approximation algorithms, on several benchmark datasets. We compared the different algorithms (RFF~\citep{rahimi2008random}, ORF~\citep{yu2016orthogonal}, SSF~\citep{lyu2017spherical}) on $3$ datasets IJCNN1~\citep{danil01}, Cov-Type, and a8a from the UCI ML repository considered in~\citep{liu2021random}, with our TRF method where we choose the thresholds coefficients $s_{-}, s_{+}$ according to Algorithm~\ref{alg:ternary_kernel} (to match the generalized Gaussian moments of Gaussian kernel). Figure~\ref{fig:a8a-different-kernels} shows the results for the a8a dataset, Figure~\ref{fig:ijcnn1-different-kernels} for the IJCNN1 dataset and Figure~\ref{fig:covtype-different-kernels} for the Cov-Type dataset. The lower running time along with higher SVM test accuracy indicates the superiority of our RMT-inspired TRF method over the other Gaussian kernel approximation methods.

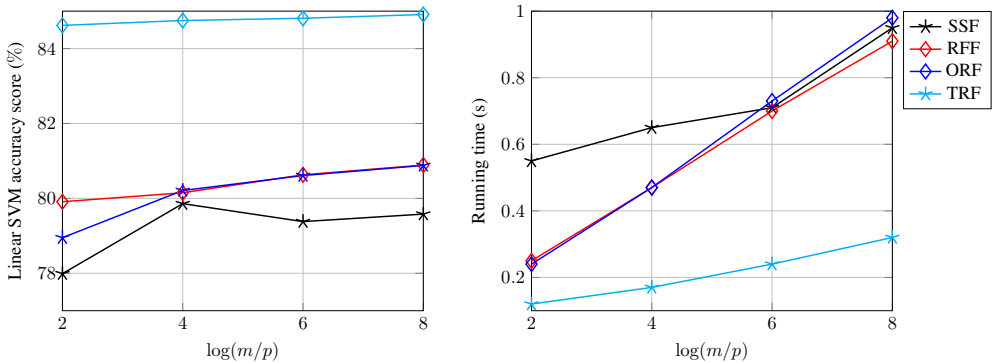
\begin{figure}
\centering
\begin{tabular}{cc}
           \begin{tikzpicture}[scale=0.7]
    		\begin{axis}
        [legend pos = outer north east, grid=major,xlabel={$\log(m/p)$},ylabel={Linear SVM accuracy score ($\%$)},  xmin=2,xmax=8,ymin=77,ymax=85,   xtick={2, 4, 6, 8},
    xticklabels={$2$, $4$, $6$, $8$}]
 \addplot[mark size=4pt,mark=star,thick]coordinates{            
(2,77.99)(4,79.859)(6,79.38)(8,79.58)
};
 \addplot[mark size=4pt,red,mark=diamond,thick]coordinates{           (2,79.91)(4,80.15)(6,80.63)(8,80.89)
};
 \addplot[mark size=4pt,blue, mark=star,thick]coordinates{            
(2,78.95)(4,80.21)(6,80.61)(8,80.88)
};
 \addplot[mark size=4pt,cyan,mark=diamond,thick]coordinates{           (2,84.62)(4,84.75)(6,84.81)(8,84.91)
};
    		\end{axis}
    		\end{tikzpicture}
     &            \begin{tikzpicture}[scale=0.7]
    		\begin{axis}
        [legend pos = outer north east, grid=major,xlabel={$\log(m/p)$},ylabel={Running time (s)},  xmin=2,xmax=8,ymin=0.1,ymax=1.0,   xtick={2,4,6,8},
    xticklabels={$2$, $4$, $6$, $8$}]
    
\addplot[mark size=4pt,mark=star,thick]coordinates{            
(2,0.55)(4,0.65)(6,0.71)(8,0.95)
};
 \addplot[mark size=4pt,red,mark=diamond,thick]coordinates{            (2,0.25)(4,0.47)(6,0.70)(8,0.91)
};
 \addplot[mark size=4pt,blue,mark=diamond,thick]coordinates{            (2,0.24)(4,0.47)(6,0.73)(8,0.98)
};
\addplot[mark size=4pt, cyan, mark=star,thick]coordinates{            
(2,0.12)(4,0.17)(6,0.24)(8,0.32)
};
\legend{SSF, RFF, ORF,  TRF};
    		\end{axis}
    		\end{tikzpicture}
\end{tabular}
    \caption{Test accuracy using libsvm on different state-of-the-art random features kernels. a8a (UCI) dataset. Number of training samples $n=22696$ - number of test samples $n_t=9865$, varying ratio $\log m/p$ number of random features over dimension $p=123$. Note that the y-axis is zoomed in to better distinguish the performance of different methods. }
\label{fig:a8a-different-kernels}
\end{figure}

\begin{figure}
\centering
\begin{tabular}{cc}
           \begin{tikzpicture}[scale=0.7]
    		\begin{axis}
        [legend pos = outer north east, grid=major,xlabel={$\log(m/p)$},ylabel={Linear SVM accuracy score ($\%$)},  xmin=2,xmax=8,ymin=89,ymax=93,   xtick={2, 4, 6, 8},
    xticklabels={$2$, $4$, $6$, $8$}]
 \addplot[mark size=4pt,mark=star,thick]coordinates{            
(2,90.29)(4,90.59)(6,91.31)(8,92.38)
};
 \addplot[mark size=4pt,red,mark=diamond,thick]coordinates{            (2,89.71)(4,89.98)(6,91.58)(8,92.85)
};
 \addplot[mark size=4pt,blue,mark=diamond,thick]coordinates{            (2,89.76)(4,90.37)(6,91.69)(8,92.76)
};
 \addplot[mark size=4pt,cyan, mark=star,thick]coordinates{            
(2,90.61)(4,91.11)(6,91.49)(8,92.35)
};

    		\end{axis}
    		\end{tikzpicture}
     &            \begin{tikzpicture}[scale=0.7]
    		\begin{axis}
        [legend pos = outer north east, grid=major,xlabel={$\log(m/p)$},ylabel={Running time (s)},  xmin=2,xmax=8,ymin=0.05,ymax=0.90,   xtick={2,4,6,8},
    xticklabels={$2$, $4$, $6$, $8$}]
    
 \addplot[mark size=4pt,mark=star,thick]coordinates{            
(2,0.38)(4,0.40)(6,0.53)(8,0.67)
};
 \addplot[mark size=4pt,red,mark=diamond,thick]coordinates{            (2,0.20)(4,0.36)(6,0.52)(8,0.68)
};
 \addplot[mark size=4pt,blue,mark=diamond,thick]coordinates{            (2,0.20)(4,0.37)(6,0.53)(8,0.69)
};
 \addplot[mark size=4pt, cyan, mark=star,thick]coordinates{            
(2,0.10)(4,0.14)(6,0.19)(8,0.24)
};
\legend{SSF, RFF, ORF,  TRF};
    		\end{axis}
    		\end{tikzpicture}
\end{tabular}
    \caption{Test accuracy using libsvm on different state-of-the-art random features kernels. IJCNN1~\citep{danil01} dataset. Number of training samples $n=49990$ - number of test samples $n_t=91701$, varying ratio $\log m/p$ number of random features over dimension $p=22$. Note that the y-axis is zoomed in to better distinguish the performance of different methods. }
\label{fig:ijcnn1-different-kernels}
\end{figure}

\begin{figure}
\centering
\begin{tabular}{cc}
           \begin{tikzpicture}[scale=0.7]
    		\begin{axis}
        [legend pos = outer north east, grid=major,xlabel={$\log(m/p)$},ylabel={Linear SVM accuracy score ($\%$)},  xmin=2,xmax=8,ymin=36,ymax=45,   xtick={2, 4, 6, 8},
    xticklabels={$2$, $4$, $6$, $8$}]
     \addplot[mark size=4pt,mark=star,thick]coordinates{            (2,36.49)(4,36.49)(6,36.50)(8,36.50)
};
 \addplot[mark size=4pt,red, mark=diamond,thick]coordinates{            
(2,36.50)(4,38.27)(6,37.80)(8,36.88)
};
 \addplot[mark size=4pt,blue,mark=diamond,thick]coordinates{            (2,36.49)(4,36.56)(6,36.56)(8,36.56)
};
 \addplot[mark size=4pt,cyan, mark=star,thick]coordinates{            
(2,40.46)(4,42.84)(6,42.98)(8,44.34)
};
    		\end{axis}
    		\end{tikzpicture}
     &            \begin{tikzpicture}[scale=0.7]
    		\begin{axis}
        [legend pos = outer north east, grid=major,xlabel={$\log(m/p)$},ylabel={Running time (s)},  xmin=2,xmax=8,ymin=1.1,ymax=10,   xtick={2,4,6,8},
    xticklabels={$2$, $4$, $6$, $8$}]
  \addplot[mark size=4pt,mark=star,thick]coordinates{            (2,2.51)(4,4.77)(6,6.94)(8,6.83)
};   
 \addplot[mark size=4pt,red, mark=diamond,thick]coordinates{            
(2,2.18)(4,3.59)(6,4.97)(8,6.33)
};
 \addplot[mark size=4pt,blue,mark=diamond,thick]coordinates{            (2,2.49)(4,4.74)(6,6.803)(8,9.02)
};
 \addplot[mark size=4pt, cyan, mark=star,thick]coordinates{            
(2,1.21)(4,1.98)(6,2.70)(8,3.53)
};
\legend{SSF, RFF, ORF, TRF};
    		\end{axis}
    		\end{tikzpicture}
\end{tabular}
    \caption{Test accuracy using libsvm on different state-of-the-art random features kernels. Cov-Type dataset. Number of training samples $n=49990$ - number of test samples $n_t=91701$, varying ratio $\log m/p$ number of random features over dimension $p=22$.}
\label{fig:covtype-different-kernels}
\end{figure}
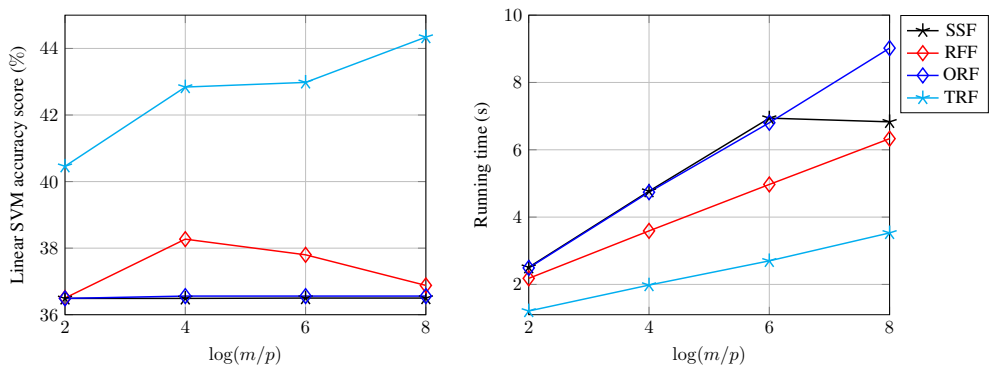

\paragraph{Comparison of TRF with equivalent $0^{th}$ order Arc-cosine kernel (with ReLU function)}
We consider Support Vector Machine (SVM) classification with random features Gram matrix ${\bf G}$ on
Fashion MNIST data \citep{lecun1998gradient} and on VGG16 embeddings of CIFAR10~\citep{krizhevsky2009learning} and Imagenet~\citep{deng2009imagenet} datasets in Figures~\ref{fig:compare_relu_ternary}~-\ref{fig:cifar10-different-kernels}~-\ref{fig:imagenet-different-kernels} respectively. We use VGG16
with batch normalization~\citep{ioffe2015batch} pre-trained on ImageNet~\citep{deng2009imagenet} as a
feature extractor. We fine-tune this model on the CIFAR10 dataset
with $240$ epochs and a mini-batch size $64$ with a SGD optimizer with momentum $0.9$ and an initial learning rate of $0.1.$ We then extract the output of the first fully connected layer of the classifier as our features. For Imagenet, we directly extract the features from the pretrained model. We compare (i) RF with $\sigma(t)=\max(t,0)$ (ReLU) and Gaussian $W_{ij}\sim \mathcal N(0,1)$ (known as the $0$th order Arc-cosine kernel) to (ii) the proposed TRF method with $\sigma^{ter}(t)$ in (\ref{eq:def_kernel_ter}) and ternary random projection matrix ${\bf W}^{ter}$ defined in~(\ref{eq:w}). The thresholds $s_{-},s_{+}$ of $\sigma^{ter}$ are tuned in such away that the generalized Gaussian moments $d_1$ and $d_2$ are matched with those of ReLU (see Table~\ref{tab:ds}), as described in Corollary~\ref{cor1}~and~Algorithm~\ref{alg:ternary_kernel}. Figures~\ref{fig:compare_relu_ternary}~-\ref{fig:cifar10-different-kernels}~-\ref{fig:imagenet-different-kernels} display the test SVM accuracy as a function of the number of random features, for our TRF method compared to the random features corresponding to the Arc-Cosine kernel. We observe similar test performances with the two kernels while having a computation and storage gains for the ternary kernel.

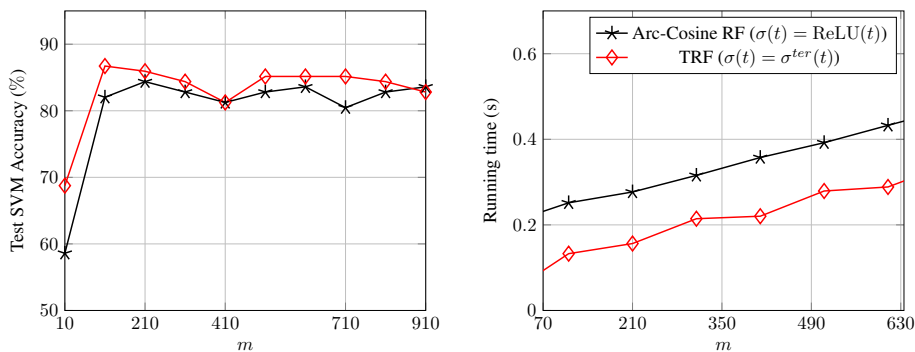
\begin{figure}
\centering
\begin{tabular}{cc}
   \begin{tikzpicture}[scale=.7]
    		\begin{axis}
        [legend pos = outer north east,grid=major,xlabel={$m$},ylabel={Test SVM Accuracy ($\%$)},xmin=10,xmax=910,ymin=50,ymax=95, xtick={10, 210, 410, 710, 910},
    xticklabels={$10$, $210$, $410$, $710$, $910$}]

 \addplot[mark size=4pt,mark=star,thick]coordinates{            (10,58.59375)(110,82.03125)(210,84.375)(310,82.8125)(410,81.25)(510,82.8125)(610,83.59375)(710,80.46875)(810,82.8125)(910,83.59375)
};
 \addplot[mark size=4pt,red,mark=diamond,thick]coordinates{            (10,68.75)(110,86.71875)(210,85.9375)(310,84.375)(410,81.25)(510,85.15625)(610,85.15625)(710,85.15625)(810,84.375)(910,82.8125)
};
    		\end{axis}
    \end{tikzpicture}  & 
    \begin{tikzpicture}[scale=.7]
    		\begin{axis}
        [grid=major,xlabel={$m$},ylabel={Running time (s)},xmin=70,xmax=635,ymin=0,ymax=0.70, xtick={70, 210, 350, 490, 630},
    xticklabels={$70$, $210$, $350$, $490$, $630$}]
 \addplot[mark size=4pt,mark=star,thick]coordinates{            (10,0.20139193534851074)(110,0.2515757083892822)(210,0.27691102027893066)(310,0.3156721591949463)(410,0.3573873043060303)(510,0.39204978942871094)(610,0.4325978755950928)(710,0.47247314453125)(810,0.5302305221557617)(910,0.5716893672943115)
};
 \addplot[mark size=4pt,red,mark=diamond,thick]coordinates{            (10,0.0341212272644043)(110,0.1327211856842041)(210,0.15638113021850586)(310,0.21439337730407715)(410,0.22027015686035156)(510,0.27917027473449707)(610,0.28861451148986816)(710,0.34465861320495605)(810,0.3631744384765625)(910,0.44133782386779785)
};
    		    \legend{Arc-Cosine RF ($\sigma(t)=\relu{(t)}$), TRF ($\sigma(t)=\sigma^{ter}(t)$)};
    		\end{axis}
    \end{tikzpicture}
\end{tabular}

\caption{SVM test accuracy using different kernels. VGG-16 Embeddings of CIFAR10 dataset (Number of features $p=4096$). Number of samples $n=1024$ fixed, varying number of random features from $m=10$ to $m=1800$.}
\label{fig:cifar10-different-kernels}
\end{figure}

\begin{figure}
\centering
\begin{tabular}{cc}
   \begin{tikzpicture}[scale=.7]
    		\begin{axis}
        [legend pos = outer north east,grid=major,xlabel={$m$},ylabel={Test SVM Accuracy},xmin=70,xmax=635,ymin=0.6,ymax=0.8, xtick={70, 210, 350, 490, 630},
    xticklabels={$70$, $210$, $350$, $490$, $630$}]
    
 \addplot[mark size=4pt,mark=star,thick]coordinates{            (70,0.6533203125)(140,0.67578125)(210,0.7197265625)(280,0.73046875)(350,0.72265625)(420,0.7373046875)(490,0.7568359375)(560,0.7685546875)(630,0.76171875)
};

 \addplot[mark size=4pt,red,mark=diamond,thick]coordinates{            (70,0.685546875)(140,0.7314453125)(210,0.7392578125)(280,0.767578125)(350,0.7578125)(420,0.7841796875)(490,0.7783203125)(560,0.771484375)(630,0.7890625)
};
    		\end{axis}
    \end{tikzpicture}  & 
    \begin{tikzpicture}[scale=.7]
    		\begin{axis}
        [grid=major,xlabel={$m$},ylabel={Running time (s)},xmin=70,xmax=635,ymin=0,ymax=0.9, xtick={70, 210, 350, 490, 630},
    xticklabels={$70$, $210$, $350$, $490$, $630$}]
 \addplot[mark size=4pt,mark=star,thick]coordinates{            (70,0.21130990982055664)(140,0.2252788543701172)(210,0.30671143531799316)(280,0.3920869827270508)(350,0.4423093795776367)(420,0.5242598056793213)(490,0.614142656326294)(560,0.6707470417022705)(630,0.719496488571167)
};
 \addplot[mark size=4pt,red,mark=diamond,thick]coordinates{            (70,0.07845544815063477)(140,0.09810543060302734)(210,0.12513256072998047)(280,0.1424241065979004)(350,0.17192316055297852)(420,0.1873948574066162)(490,0.19285917282104492)(560,0.21932673454284668)(630,0.24471211433410645)
};
\legend{Arc-Cosine RF ($\sigma(t)=\relu{(t)}$), TRF ($\sigma(t)=\sigma^{ter}(t)$)};
    		\end{axis}
    \end{tikzpicture}
\end{tabular}

    \caption{SVM test accuracy using different kernels. Raw Fashion-MNIST dataset (Number of features $p=784$). Number of samples $n=1024$ fixed, varying number of random features from $m=70$ to $m=700$. Note that the y-axis is zoomed in to better distinguish the performance of different methods. }
\label{fig:compare_relu_ternary}
\end{figure}
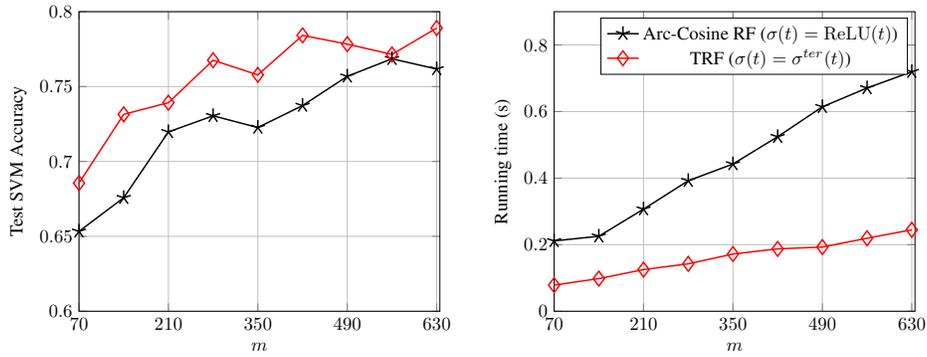

\begin{figure}
\centering
\begin{tabular}{cc}
   \begin{tikzpicture}[scale=.7]
    		\begin{axis}
        [legend pos = outer north east,grid=major,xlabel={$m$},ylabel={Test SVM Accuracy},xmin=70,xmax=635,ymin=0.0,ymax=0.4, xtick={70, 210, 350, 490, 630},
    xticklabels={$70$, $210$, $350$, $490$, $630$}]
 \addplot[mark size=4pt,mark=star,thick]coordinates{            (10,0.0361328125)(110,0.30859375)(210,0.337890625)(310,0.365234375)(410,0.3671875)(510,0.369140625)(610,0.373046875)(710,0.380859375)(810,0.3720703125)(910,0.3671875)
};
 \addplot[mark size=4pt,red,mark=diamond,thick]coordinates{            (10,0.0126953125)(110,0.2158203125)(210,0.2822265625)(310,0.32421875)(410,0.3173828125)(510,0.33984375)(610,0.3466796875)(710,0.349609375)(810,0.3486328125)(910,0.34765625)
};
    		\end{axis}
    \end{tikzpicture}  & 
    \begin{tikzpicture}[scale=.7]
    		\begin{axis}
        [ymode = log, grid=major,xlabel={$m$},ylabel={Running time (s)},xmin=70,xmax=635,ymin=0,ymax=90, xtick={70, 210, 350, 490, 630},
    xticklabels={$70$, $210$, $350$, $490$, $630$}]
    
 \addplot[mark size=4pt,mark=star,thick]coordinates{            (10,0.6546528339385986)(110,1.5372419357299805)(210,2.909088373184204)(310,4.211386442184448)(410,5.761240243911743)(510,7.248871803283691)(610,8.792977333068848)(710,10.523130655288696)(810,12.169533729553223)(910,13.836761713027954)
};
 \addplot[mark size=4pt,red,mark=diamond,thick]coordinates{          (10,1.1271772384643555)(110,0.5323276519775391)(210,0.8290574550628662)(310,1.1293249130249023)(410,1.4626319408416748)(510,1.7878165245056152)(610,2.0997674465179443)(710,2.4351401329040527)(810,2.7501442432403564)(910,3.0171122550964355)
};
\legend{Arc-Cosine RF ($\sigma(t)=\relu{(t)}$), TRF ($\sigma(t)=\sigma^{ter}(t)$)};
    		\end{axis}
    \end{tikzpicture}
\end{tabular}

\caption{SVM test accuracy using different kernels. LeNet-64 Embeddings of Imagenet dataset (Number of features $p=2018$). Number of samples $n=1024$ fixed, varying number of random features from $m=10$ to $m=1800$.}
\label{fig:imagenet-different-kernels}
\end{figure}
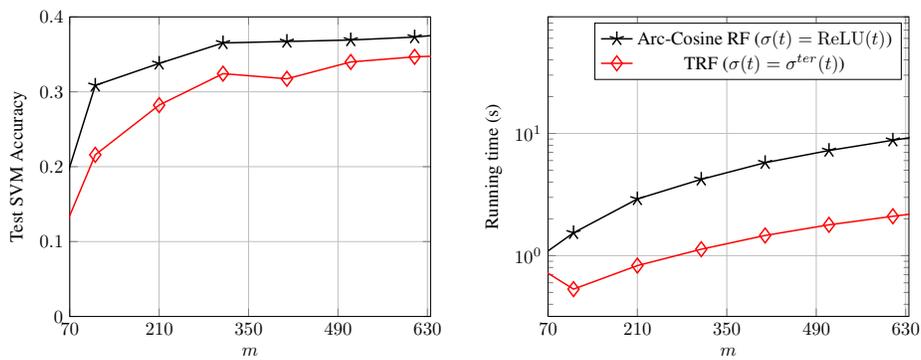

\end{document}